\def\eqref#1{equation~\ref{#1}}
\def\1{\bm{1}}
\DeclareMathAlphabet{\mathsfit}{\encodingdefault}{\sfdefault}{m}{sl}
\SetMathAlphabet{\mathsfit}{bold}{\encodingdefault}{\sfdefault}{bx}{n}
\title{Multi-Order Wavelet Derivative Transform \\
for Deep Time Series Forecasting}
\author{Ziyu Zhou$^{1}$, 
 Jiaxi Hu$^{1}$,
  Qingsong Wen$^{2}$, James T. Kwok$^{3}$
Yuxuan Liang$^{1}$\textsuperscript{\Letter} \\
  $^{1}$The Hong Kong University of Science and Technology (Guangzhou) \\
  $^{2}$Squirrel AI Learning $^{3}$The Hong Kong University of Science and Technology \\
\{zzhou651, jhu110\}@connect.hkust-gz.edu.cn
  \\ qingsongedu@gmail.com, jamesk@cse.ust.hk, yuxliang@outlook.com
}
\newtheorem{theorem}{Theorem}[section]  %
\newtheorem{lemma}[theorem]{Lemma}
\newtheorem{definition}{Definition}
\newcommand{\preprintdate}{\today}
\def\ps@preprintfirst{%
  \def\@oddfoot{\hbox to\textwidth{%
    \makebox[0pt][l]{\footnotesize\textit{Preprint.\ \preprintdate.}}%
    \hfil\thepage\hfil}}%
  \let\@evenfoot\@oddfoot
}
\begin{document}

\maketitle
\thispagestyle{preprintfirst}
\begin{abstract}
\label{abstract}
In deep time series forecasting, the Fourier Transform (FT) is extensively employed for frequency representation learning. However, it often struggles in capturing multi-scale, time-sensitive patterns. Although the Wavelet Transform (WT) can capture these patterns through frequency decomposition, its coefficients are insensitive to abrupt changes in the time series, leading to suboptimal modeling. To mitigate these limitations, we introduce the multi-order Wavelet Derivative Transform (WDT) grounded in the WT, enabling the extraction of time-aware patterns  spanning both the overall trend and subtle fluctuations. Compared with the standard FT and WT, which model the raw series, WDT operates on the derivative of the series, selectively magnifying rate-of-change cues and exposing abrupt regime shifts that are particularly informative for time series modeling. Practically, we embed the WDT into a multi-branch framework named \textbf{WaveTS}, which decomposes the input series into multi-scale time-frequency coefficients, refines them via linear layers, and reconstructs them into the time domain via the inverse WDT. Extensive experiments on multiple benchmark datasets demonstrate that WaveTS achieves state-of-the-art forecasting accuracy while retaining high computational efficiency. The code is available at \href{https://github.com/zhouziyu02/WaveTS}{https://github.com/zhouziyu02/WaveTS}.
\end{abstract}

\vspace{-2mm}
\section{Introduction}
\label{sec:intro}
\vspace{-4mm}

Time series forecasting aims to predict future values by leveraging patterns of past observations, playing a pivotal role in diverse fields such as climate, finance, and transportation \citep{wang2020traffic,zhang2023skilful,wang2024tsfool,yang2025data,qiu2025comprehensivesurveydeeplearning,kieu2019outlier}. 
In recent years, deep learning techniques have gained prominence for modeling long-term temporal dependencies, reflected in MLP-based \citep{dlinear, lighttts}, RNN-based \citep{rnn2,rnn1}, Transformer-based \citep{trans1itransformer, trans2PatchTST, wu2024perimidformerperiodicpyramidtransformer} etc., underscoring the effectiveness of temporal representation learning in the \textbf{time domain}.

Alongside these time-domain methods, recent deep forecasting models have also been proposed to model time series in the \textbf{frequency domain}, namely \textbf{F}requency domain \textbf{R}epresentation \textbf{L}earning (\textbf{FRL}). These FRL-based forecasters largely leverage the Fourier transform for domain transformation, thereby extracting the frequency components. Subsequently, they process these frequency components using complex-valued neural networks to derive semantic frequency representations. For example, FreTS \citep{frets} introduces frequency domain MLPs to effectively model both intra-variate and inter-variate dependencies. FITS \citep{xu2023fits} incorporates frequency interpolation through a complex-valued linear layer, rendering it 50 times more compact than previous linear models. Additionally, several studies have incorporated the FRL paradigm into more complex architectures like Transformers and Graph Neural Networks to extract more semantic and robust features in the frequency domain \citep{zhou2022fedformer,eldele2024tslanet,cao2020spectral,yi2024fouriergnn}.

However, previous FRL-based forecasters exhibit a heavy reliance on the Fourier transform for domain transformation \citep{frets,xu2023fits,filterTS,filternet,derits,fei2025amplifierbringingattentionneglected}. While this methodology is both effective and computationally efficient, it is inherently constrained in its ability to model time-sensitive patterns. As illustrated in Figure \ref{fig:intro} (a)-(d), similar frequency spectra are generated after the application of the Fourier transform to time series exhibiting markedly distinct temporal variations. This leads to ambiguities in deep representation learning, as well as inadequate modeling of the time-frequency characteristics and non-stationarity of time series. DeRiTS \citep{derits} attempts to address this issue by proposing a deep Fourier derivative learning framework aimed at learning stationary frequency representations (Figure \ref{fig:intro} (f)). However, this approach neglects to consider macro temporal patterns, such as the overall trend, and still fails to capture time-aware features, ultimately resulting in inaccurate forecasting.

To address the aforementioned issues, this paper proposes a new transformation for time series, namely Wavelet Derivative Transform (WDT). WDT is built upon the Wavelet Transform (WT), targeted in modeling the derivative of time series rather than the raw series. It decomposes time series into slowly varying trend coefficients and stationary oscillatory coefficients, revealing hierarchical (macro) and stationary (detail) patterns (Figure \ref{fig:intro} (g) and (h)).

Moreover, as shown in Figure \ref{fig:intro} (i), the coefficient of standard WT (blue curve) predominantly smooths the signal and simply fits its overall trend, ignoring the depiction of abrupt change points in black dotted boxes. However, by encoding the series’ derivative, the coefficient of WDT (red curve) not only preserves the macroscopic trend, but also highlights local bursts more sharply than the standard wavelet coefficient, providing richer cues for pinpointing sharp discontinuities, thereby facilitating its perception of local dynamics. Based on WDT, we propose a multi-branch architecture \textbf{WaveTS}, where each branch corresponds to a WaveTS block designated to process a specific order of WDT. WDT first decomposes time series into time-frequency coefficients. Subsequently, Frequency Refinement Units, comprising multiple hardware-friendly real-valued linear layers, refine these coefficients into more granular time-frequency representations, enabling WaveTS to extract richer patterns across various spectral bands. Ultimately, the inverse WDT reconstructs both historical and future sequences back into the time domain. To make this multi-order construction mathematically sound, we rigorously analyze WDT and prove that it defines a well-posed linear transform with an exact inverse and stable multi-order coefficients. These results clarify when WDT preserves the energy and information content of the original series and explain why its derivative-aware coefficients can be safely exploited in downstream models.
% Additionally, the reversible nature of the WDT guarantees that no energy or information is lost, enhancing effective and efficient modeling. 
Our contributions can be articulated as follows:
\vspace{-2mm}

\begin{figure}[t]
  \centering
 \includegraphics[width=\linewidth, trim=5 10 5 10, clip]{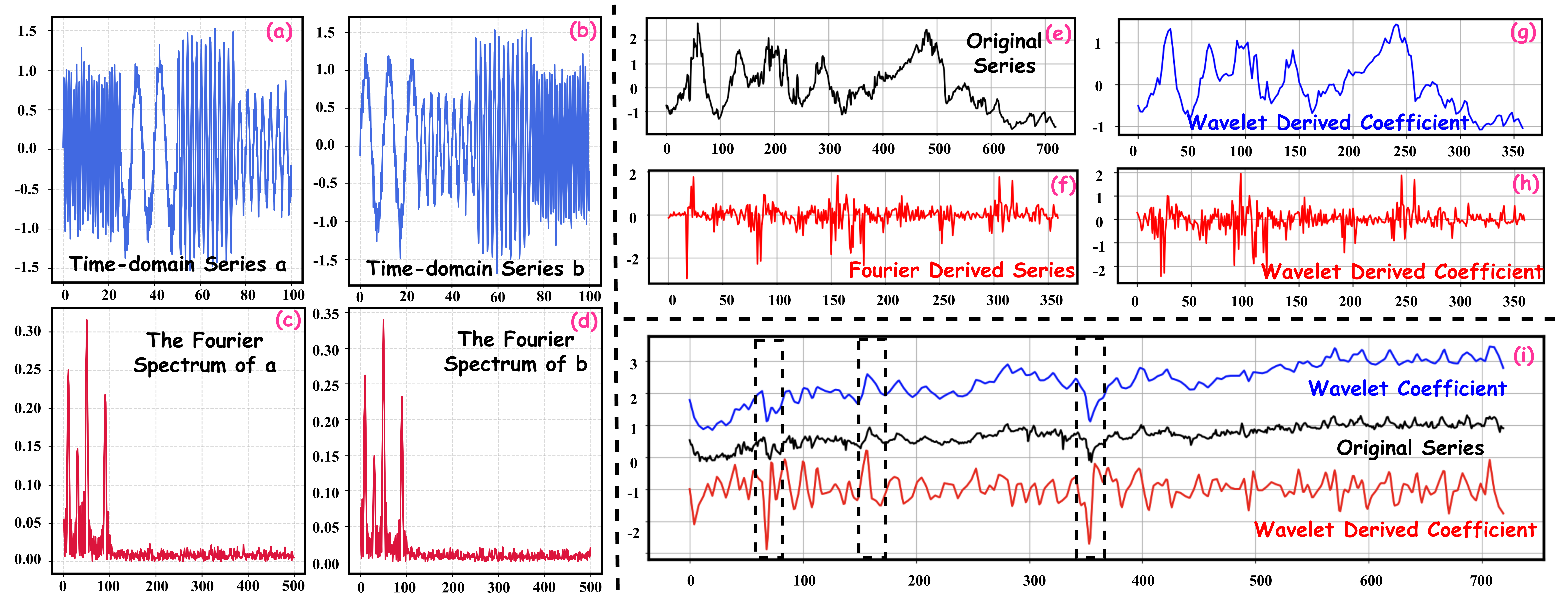}
  \vspace{-6mm}
  \caption{(a)–(d) Distinct temporal variations may yield identical frequency spectra after applying the Fourier transform, revealing spectral ambiguity; (e) original time series; (f) the Fourier Derived Series only exhibits high-frequency fluctuations with values around 0 and fails to capture macro temporal patterns such as the overall trend of the original series in (e); (g) the Wavelet Derivative Transform extracts macro low-frequency trend patterns; (h) the Wavelet Derivative Transform extracts micro high-frequency fluctuations; (i) relative to the wavelet coefficients in the standard wavelet transform (blue curve), WDT coefficients (red curve) pinpoint abrupt regime shifts, yielding a sharper cue for non-stationary dynamics.}

  \label{fig:intro}
  \vspace{-6mm}
\end{figure}

\begin{itemize}[leftmargin=*]
    \item We propose a multi-order Wavelet Derivative Transform that captures multi-scale time–frequency patterns, enabling sharper localization of regime shifts than the standard wavelet transform.
    \item We provide a rigorous theoretical analysis of the proposed Wavelet Derivative Transform, proving the existence of an exact inverse transform and establishing energy-preserving properties, thereby clarifying the mathematical difficulty and significance of our construction.
    \item Built upon our proposed Wavelet Derivative Transform (with its inverse), we introduce a new multi-branch deep forecasting model, namely WaveTS, for effective multivariate time series forecasting. 
    \item WaveTS unequivocally outperforms previous state-of-the-art methods, achieving $5.1\%$ MSE reduction with the lowest memory usage and fastest inference speed, demonstrating its efficiency.
\end{itemize}

% \begin{itemize}[leftmargin=*]
%     \item We propose a multi-order Wavelet Derivative Transform that captures multi-scale time-frequency patterns, enabling sharper localization of regime shifts than the standard wavelet transform.
%     \item  Built upon our proposed Wavelet Derivative Transform (with its inverse), we introduce a new multi-branch deep forecasting model, namely WaveTS, for effective multivariate time series forecasting. 
%     \item WaveTS unequivocally outperforms previous state-of-the-art methods, achieving $5.1\%$ MSE reduction with the lowest memory usage and fastest inference speed, demonstrating its efficiency.
% \end{itemize}

\vspace{-3mm}
\section{Preliminaries}
\vspace{-3mm}
\subsection{Wavelet Transform}
\label{sec:dwt}
\vspace{-3mm}
The Discrete Wavelet Transform (DWT)~\cite{mallat1989theory} decomposes a discrete time series signal 
into time-frequency components. 
Given a
time series \([X(1),\dots,X(N)]\) of 
length 
\(N\),
the DWT coefficient
  $W_{k,j}$
at scale \(k\) 
and translation \(j\)
is given by:
\begin{equation}\label{eq:dwt}
  W_{k,j}
  =\sum_{n=0}^{N-1} X(n)\,
        \psi_{k,j}[n],\;
k=0,\ldots,K-1,
j=0,\ldots,\tfrac{N}{2^{k}}-1,
%  =\sum_{n=0}^{N-1} X[n]\, 2^{-k/2}\, \psi\!\bigl[2^{-k}n-j\bigr],\;
\end{equation}
where 
$\psi_{k,j}[n]
=2^{-k/2}\, \psi\bigl[2^{-k}n-j\bigr]$
is the discrete basis function
with mother wavelet \(\psi[n]\), 
and \(K=\lfloor\log_{2} N\rfloor\) is the maximum dyadic scale.
%$\psi_{k,j}[n]$ is simply the mother wavelet $\psi[n]$ dilated by $2^{-k}$ and shifted by $j$ samples, with the factor $2^{-k/2}$ keeping each scaled copy unit-energy. 
At each scale \(k\), the signal is split into a high-frequency (detail) component capturing the abrupt changes, and a low-frequency (approximation) component reflecting the overall trends. 
Multi-resolution analysis of both the time and frequency components
is obtained by repeatedly decomposing the low-frequency components.
The Haar~\citep{haar1911theorie} and Daubechies~\citep{daubechies1988orthonormal} wavelets are commonly used. They provide orthogonal or nearly orthogonal decompositions, ensuring that the DWT coefficients at different scales and translations capture non-redundant features of the signal \citep{liu2024disentanglingimperfectwaveletinfusedmultilevel}.

\vspace{-2mm}
\subsection{Related Work}
\vspace{-1mm}

\paragraph{Deep Learning for Time Series Forecasting}
Deep learning-based methodologies have been extensively applied to time series forecasting, as evidenced by TCNs \citep{tcn1timesnet, liu2022scinet}, and Transformer-based models \citep{trans1itransformer, trans2PatchTST, wang2024card,haoyietal-informer-2021}. More recently, LLM-based methods \citep{zhou2023one, jin2023time, liang2024foundation} have emerged, leveraging the capacity of large pre-trained language models for knowledge transfer. Alongside these, FRL-based approaches \citep{frets,xu2023fits,yi2024fouriergnn,filternet,yi2023surveydeeplearningbased, zhou2022film, zhou2025revitalizingcanonicalprealignmentirregular} employ the Fourier transform to uncover frequency patterns implicit in time-domain signals. FAN \citep{ye2024fan} identifies non-stationary factors by extracting dominant real frequency components, while DeRiTS \citep{derits} represents time series through frequency derivation. However, existing FRL-based models still face challenges in capturing the inherently multi-scale time-frequency features, leading to ineffective time series modeling.

\vspace{-3mm}
\paragraph{Wavelet Transform for Time Series Analysis} 
In contrast to Fourier-based transformations, the wavelet transform
decomposes signals into localized time-frequency components, making it well-suited for multi-scale analysis \citep{zhou2022fedformer,stefenon2023Wavelet, gao2021time, ouyang2021adaptive}. For example, wMDN \citep{wang2018multilevel} embeds wavelet-based frequency analysis into a deep learning framework through fine-tuning. AdaWaveNet \citep{yu2024adawavenetadaptiveWaveletnetwork} employs an adaptive and learnable wavelet decomposition scheme to address non-stationary data, while T-WaveNet \citep{minhao2021t} adopts a tree-structured network for iteratively disentangling dominant frequency subbands. WPMixer \citep{murad2024wpmixerefficientmultiresolutionmixing} couples multi-resolution wavelet decomposition with patching and MLP mixing. WaveletMixer \citep{WaveletMixer} is an iterative multi-level, multi-resolution, and multi-phase design that captures long- and short-term dependencies. However, most wavelet-coefficient pipelines do not explicitly address regime shifts and thus perform worse around transitions, even if overall accuracy is competitive. Augmenting them with shift-sensitive mechanisms (e.g., derivative-enhanced transforms) improves alignment near abrupt changes.

% However, these Wavelet-based approaches 
% %\citep{zhou2022fedformer,murad2024wpmixerefficientmultiresolutionmixing,stefenon2023Wavelet, gao2021time, ouyang2021adaptive,yu2024adawavenetadaptiveWaveletnetwork,minhao2021t,wang2018multilevel} 
% fall short of capturing the hierarchical and robust features critical for more nuanced multi-scale patterns, ignoring the modeling of NS temporal patterns and distribution shifts\footnote{*** state clearly what u mean by "distr shift" in time series. note that one might think u're talking about a collection of TS data} of NS\footnote{*** i prefer NOT to use this acronym in the paper. just write "non-stationary"} time series. 

\vspace{-2mm}
\section{Methodology}
\vspace{-3mm}
\label{methodology}
% \subsection{Overall Architecture}

Though standard wavelet-based forecasters can provide a hierarchical scale-by-scale decomposition, their coefficients mainly encode amplitude and thus under-emphasize multi-scale rates of change. This makes them less effective at highlighting local regime shifts and at modeling non-stationary
temporal patterns or time-varying distribution shift within a time series (i.e., the change of its mean, variance over time). In this paper, we propose the Wavelet Derivative Transform (WDT) in Section \ref{WDT} with its inverse,
which explicitly models the derivatives of a time series to sharpen multi-scale change signals. Building on it, we further propose \textbf{WaveTS}, a multi-branch architecture built upon WaveTS blocks (Section \ref{wavetsblock}) that applies multi-order WDT, which linearly refines the resulting coefficients through Frequency Refinement Unit, and reconstructs them back to the time domain for accurate and efficient backcasting while forecasting (Section \ref{loss}).  

%\subsection{Problem Formulation}
%\vspace{-2mm}
In the following, 
we denote a multivariate time series by \([X_1, X_2, \ldots, X_T] \in \mathbb{R}^{T \times C}\), where each \(X_t \in \mathbb{R}^C\) corresponds to the observations of \(C\) variates at the \(t\)-th time step.
We use a backcasting
and forecasting approach 
to align the model forecasts with the observed historical trend \citep{cao2020spectral}. Specifically, at time \(t\), the model input 
\(\mathbf{X}_t = [X_{t-L+1}, X_{t-L+2}, \ldots, X_t] \in \mathbb{R}^{L \times C}\) is
a window of \(L\) observations, and the model output is
\(\mathbf{Y}_t = [\hat{X}_{t-L+1}, \dots, \hat{X}_{t-1},\hat{X}_{t}, \hat{X}_{t+1},\ldots, \hat{X}_{t+\tau}] \in \mathbb{R}^{(L+\tau) \times C}\),
%with \(L + \tau\) values,
where 
$[\hat{X}_{t-L+1}, \dots, \hat{X}_{t-1},\hat{X}_{t}]$
%the former \( L\) values in \(\mathbf{Y}_t\) 
are the backcasting results and 
%the latter \(\tau\) values 
$[\hat{X}_{t+1},\ldots, \hat{X}_{t+\tau}]$
are the forecasting results. The model, parameterized by \(\theta\), denoted \(f_\theta(\cdot)\), utilizes the historical data \(\mathbf{X}_t\) to estimate the backcasting and forecasting 
values 
%\(\mathbf{Y}_t\), such that the forecast is given by 
\(\mathbf{Y}_t = f_\theta(\mathbf{X}_t)\). 

\subsection{Wavelet Derivative Transform (WDT)}
\label{WDT}
While the classical WT decomposes a series across scales, applying it directly to the raw values mostly captures magnitude information and does not explicitly single out rapid trend jumps, key symptoms of non-stationarity. In this section, we propose the Wavelet Derivative Transform (WDT), which models the derivative of time series. This disentangles the non-stationary trends and stationary variations in the time-frequency domain, leading to more effective modeling.

However, computing the derivative directly in the time domain is fragile: finite derivatives amplify high-frequency noise, shorten the usable sequence and complicate boundary handling. WDT avoids these pitfalls by performing derivative of the wavelet basis instead of the raw series—a spectral-domain trick that keeps the transform perfectly invertible, suppresses numerical noise and preserves scale alignment. Put simply, \textbf{the WT of the multi-order derivative of \(X(t)\), can be expressed as the WT of \(X(t)\) using the multi-order derivative of the mother wavelet}. Therefore, WDT captures derivative information without the usual time-domain baggage. See detailed proof in Appendix~\ref{proof_Wavelet_derivative}.

%\begin{definition}[\textbf{WDT and iWDT}] 
\begin{definition}[Wavelet Derivative Transform (WDT)] 
\label{prop:wdt_inverse}
Let \(X(t)\;(t=0,\dots,T-1)\) be a discrete time series, the WDT is the wavelet transform of \(X(t)\) using the \(n\)-order derivative of the mother wavelet \(\psi^{(n)}(t)\) as:
\[
  \mathrm{WDT}^{(n)}_{k,j}
  \;=\;
  (-1)^{\,n}\,2^{nk}\,\tilde W^{(n)}_{k,j}.
\]

Here,
\(
  \tilde W^{(n)}_{k,j}
  \;=\;
  \sum_{t=0}^{T-1} X(t)\,\psi^{(n)}_{k,j}(t)
\)
is the wavelet transform of \(X(t)\) using the \(n\)-order derivative of the mother wavelet \(\psi^{(n)}(t)\), where \(\psi^{(n)}_{k}(t)\) is the \(n\)-order derivative of the discrete wavelet basis function, defined as:
\(
\psi^{(n)}_{k,j}(t)
  \;=\;
  2^{\,k/2}\,
  \psi^{(n)}\!\bigl(2^{\,k}t-j\bigr).
\)
\end{definition}

\begin{definition}[Inverse Wavelet Derivative Transform (iWDT)] 
To reconstruct \(X(t)\) from its wavelet derivative coefficients, the inverse wavelet transform is applied with the appropriate scaling factors, resulting in inverse Wavelet Derivative Transform (iWDT):
\begin{equation}
  X(t)
  =
  \mathrm{iWDT}_{K}^{(n)}\bigl(\mathcal D^{(n)}(W_{k,j})\bigr)
  =
  \sum_{k=1}^{K}\sum_{j}
  \frac{\mathcal D^{(n)}(W_{k,j})}{(-1)^{\,n}\,2^{nk}}\,
  \psi_{k,j}(t)
  =
  \sum_{k=1}^{K}\sum_{j}
  \tilde W^{(n)}_{k,j}\,\psi_{k,j}(t),
\end{equation}
where $\mathcal D^{(n)}(W_{k,j})$ denotes the $n$-th-order WDT coefficient at scale $k$ (there are total $K$ scales) and translation $j$, $\psi_{k,j}(t)=2^{-k/2}\psi(2^{-k}t-j)$ is the standard discrete wavelet basis associated with the mother wavelet $\psi$,
and $\tilde W^{(n)}_{k,j} \!=\! (-1)^{-n}2^{-nk}\mathcal D^{(n)}(W_{k,j})$ is the rescaled coefficient that matches a conventional inverse WT.
Because the rescaling is exact and the inverse WT is perfectly reconstructive, iWDT restores $X(t)$ without information loss; a Parseval-type proof of its energy-conservation property is provided in Appendix \ref{prof:wdt_energy_conservation}.

\end{definition}

\subsection{WaveTS}
\label{WaveTS}
\vspace{-3mm}
%rather than a difference between separate datasets.\footnote{*** state clearly what u mean by "distr shift" in time series. note that one might think u're talking about a collection of TS data (addressed)}  

Figure \ref{fig:framework} shows the  proposed WaveTS, which takes
sequence \(\mathbf{X}_t\) 
as input 
and outputs sequence \(\mathbf{Y}_t\). \(\mathbf{X}_t\) is first normalized (Section \ref{sec:norm}) before feeding into $N$ parallel branches (WaveTS blocks). Each block corresponds to a specific order of the Wavelet Derivative Transform (WDT) and its inverse, facilitating the capture of granular time-frequency patterns through Frequency Refinement Units. Subsequently, the representations obtained from the blocks are concatenated along the temporal dimension. They are then projected and denormalized to obtain the output sequence \(\mathbf{Y}_t\). WaveTS is the first model to tackle the limitations mentioned in Section \ref{sec:intro} in FRL-based methods (including both Fourier-based and Wavelet-based methods) by integrating the WDT into a multi-branch architecture, significantly improving forecasting accuracy.

\begin{figure*}[t]
\centering
\includegraphics[width=1.05\linewidth, trim=20 5 30 5, clip]{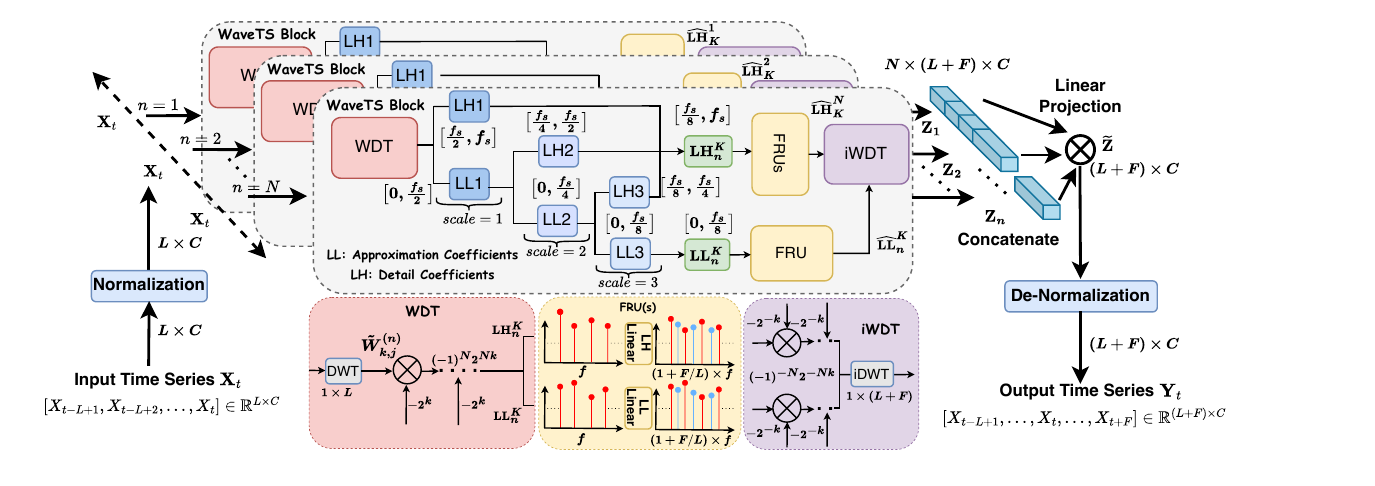}
\vspace{-7mm}
\caption{The main architecture of our proposed WaveTS. For each branch where $n$ starts from 1 to $N$, first-order to $N$-th order of Wavelet Derivative Transforms (WDT) are applied to capture multi-scale representations, facilitating more effective time series modeling. LH and LL are frequency coefficients that capture representations at different scales, corresponding to micro and macro patterns, respectively. These coefficients are transformed back to the time domain through Inverse Wavelet Derivative Transforms (iWDT). Finally, the representations from all branches are concatenated along the time dimension and projected to obtain the final forecast.}
\label{fig:framework}
\vspace{-1em}
\end{figure*}

\vspace{-2mm}
\subsubsection{Instance Normalization}
\label{sec:norm}
\vspace{-2mm}

Following many previous works \citep{trans1itransformer,xu2023fits,filternet}, WaveTS uses instance normalization \citep{kim2021reversible}, facilitating more accurate forecasting. For the input window \(\mathbf{X}_t = [X_{t-L+1}, X_{t-L+2}, \ldots, X_t] \in \mathbb{R}^{L \times C}\), instance normalization produces:
\(
    \operatorname{Norm}(\mathbf{X}_t) = \left[\frac{X_{c, t} - \operatorname{Mean}_L(X_{c, t}^{1:L})}{\operatorname{Std}_L(X_{c, t}^{1:L})}\right]_{c=1}^C,
\)
where \(\operatorname{Mean}_L(X_{c, t}^{1:L})\) is the mean value along the time dimension, and \(\operatorname{Std}_L(X_{c, t}^{1:L})\) is the corresponding standard deviation. Here, \(X_{c, t}^{1:L}\) refers to the first \(L\) observations of the \(c\)-th variate in \(\mathbf{X}_t\).

\vspace{-1mm}
\subsubsection{WaveTS Block}
\label{wavetsblock}
\vspace{-2mm}
As shown in Figure \ref{fig:framework}, our proposed WaveTS is built upon a multi-branch architecture, in which each branch
constitutes a WaveTS block dedicated to a specific order of the originally proposed Wavelet Derivative Transform (WDT). Within each WaveTS Block, the input time series is decomposed using the WDT to derive frequency coefficients. Subsequently, Frequency Refinement Units (FRUs) process these coefficients to yield fine-grained frequency representations. Finally, the inverse WDT (iWDT) is employed to convert the learned representations back into the time domain.

\paragraph{Time Series Decomposition with WDT}
Based on the proposed WDT, WaveTS consists of $N$ branches, each utilizing a WT of the total scale $K$, as shown in Figure~\ref{fig:framework}. For the $n$-th branch ($n=1,2,\ldots,N$), after instance normalization, we apply an $n$-order WDT (\(\mathrm{WDT}^{(n)}_{K,j}\)) to the input sequence $\mathbf{X}_t$:
\begin{equation}
    \label{eq:wdt_branch}
    \left\{ \mathbf{LH}_{K}^{(n)}, \;LL_{K}^{(n)} \right\} = \mathrm{WDT}^{(n)}_{K,j} \bigl(\mathbf{X}_t \bigr),
\end{equation}
where $LL_{K}^{(n)} \in \mathbb{R}^{\frac{L}{2^K}\times C}$ is a matrix containing the low-frequency (macro) coefficient and $\mathbf{LH}_{K}^{(n)} = \{LH_{1}^{(n)}, \dots, LH_{K}^{(n)}\}$ contains multiple matrices of the $K$ high-frequency (detail) coefficients at scale $K$. Specifically, for each branch, \(\mathrm{WDT}^{(n)}_{K,j}\) performs total $K$ levels of decomposition. At each decomposition level $k \in [1,K]$, the transform decomposes the current approximation coefficient $LL_{k-1}^{(n)}$ into a new approximation coefficient $LL_{k}^{(n)}$ and a detail coefficient $LH_{k}^{(n)}$. This hierarchical, tree-structured decomposition captures multi-scale information, ensuring that the final approximation coefficient  $LL_{K}^{(n)}$ corresponds to the frequency band \(\left[0, \frac{f_s}{2^K}\right]\), and the final detail coefficient $\mathbf{LH}_{K}^{(n)}$ covers the frequency band \(\left[\frac{f_s}{2^K}, f_s\right]\). Consequently, the full frequency spectrum \(\left[0, f_s\right]\) is comprehensively utilized. Note that the standard WT processes univariate input series rather than multivariate input, the input time series $\mathbf{X}_t$ is processed in a channel-independent way in each WaveTS block.

\paragraph{Frequency Refinement Unit (FRU)}
\vspace{-2mm}
\label{sec:freq_mappers}
After WDT-based decomposition, each branch $n$ produces one low-frequency coefficient $LL_{K}^{(n)}$ and multiple high-frequency coefficients $\mathbf{LH}_{K}^{(n)} = \{LH_{1}^{(n)}, \dots, LH_{K}^{(n)}\}$. The Frequency Refinement Units comprise multiple linear layers, subsequently refine these coefficients by expanding their length from $L$ to $L + F$ through frequency interpolation as introduced by FITS \citep{xu2023fits}. Specifically, we define a resolution factor $\alpha$ as:
\(\alpha = \frac{L + F}{L},\)
so that each length-$L$ coefficient vector is projected to $\alpha L = L+F$ discrete points. Formally,
\begin{equation}
\label{eq:freq_mapper_linear}
\widehat{LL}_{K}^{(n)} \;=\; \mathrm{FRU}\bigl(LL_{K}^{(n)}\bigr), 
\qquad
\widehat{\mathbf{LH}}_{K}^{(n)} \;=\; \mathrm{FRUs}\bigl(\mathbf{LH}_{K}^{(n)}\bigr).
\end{equation}
When $K=3$, the $\mathrm{FRU}(\cdot)$ denotes a single linear layer to model $LL_{K}^{(n)}$ and the \(\mathrm{FRUs}(\cdot)\) comprises 3 distinct linear layers that do not share weights to model $\mathbf{LH}_{K}^{(n)}$. Therefore, by learning these mappings for low- and high-frequency coefficients separately, the \(\mathrm{FRU}(\cdot)\) effectively performs time-frequency coefficients super-resolution in a discrete format, yielding a richer spectral representation without altering the overall domain. The resulting higher-resolution frequency features can then be merged and passed to the iWDT seamlessly. 

\paragraph{Branch Aggregation}
\vspace{-2mm}
\label{sec:branch_aggregation}
After mapping each branch’s detail and approximate coefficients (cf. Eq.~\ref{eq:freq_mapper_linear}), WaveTS applies the iWDT on each branch $n \in [1,N]$ to recover a time-domain representation $\mathbf{Z}_{n}$:
\begin{equation}
    \mathbf{Z}_{n} = \mathrm{iWDT}_{K}^{(n)} \bigl(\widehat{LL}_{K}^{(n)}, \widehat{\mathbf{LH}}_{K}^{(n)} \bigr) \in \mathbb{R}^{(L+F)\times C}. 
    \label{iwdt}
\end{equation}
We then concatenate all $N$ branches along the time dimension: $\widetilde{\mathbf{Z}}=
\mathrm{Concat}\bigl(\mathbf{Z}_{1},\,\mathbf{Z}_{2},\,\dots,\mathbf{Z}_{N}\bigr)\in\mathbb{R}^{N\times(L+F)\times C}$, thus integrating the representations from multiple branches. Finally, a linear layer projects the time  dimension from $N\times (L+F)$ to $L+F$, yielding the final output $\mathbf{Y}_t$:
\begin{equation}
\label{eq:agg_linear}
\mathbf{Y}_t
\;=\;
\mathrm{Projection}\;\bigl(\widetilde{\mathbf{Z}}\bigr)
\;\in\;
\mathbb{R}^{(L+F)\times C}.
\end{equation}
This design fuses multi-order wavelet derivative coefficients into one sequence while preserving the desired dimension of $C$. Finally, the output sequence $\mathbf{Y}_t$ is obtained using inverse instance normalization: \(
    \operatorname{DeNorm}(\mathbf{Y}_t)=\left[Y_{c,t} \times \operatorname{Std}_L(X_{c,t}^{1:L}) + \operatorname{Mean}_L(X_{c,t}^{1:L})\right]_{1}^C,
\)
where \(\mathbf{Y}_t = [Y_{c, t-L}, \ldots, Y_{c, t+F}] \in \mathbb{R}^{(L + F) \times C}\), $\operatorname{Std}_L(X_{c,t}^{1:L})$ and $\operatorname{Mean}_L(X_{c,t}^{1:L})$ originate from the input \(\mathbf{X}_t\).

\subsubsection{Training Objective}
\vspace{-2mm}
\label{loss}
Previously, the Direct Forecast (DF) paradigm has been widely adopted in deep time series forecasting \citep{trans1itransformer,tcn1timesnet,trans3sdformer,huang2023timesnet}. In DF, a multi‐output model \(f_{\theta}\) generates predictions for the next \(F\) steps in one pass:
\(\hat{Y} = f_{\theta}(X),\) where \(X\) is the input. The L2 loss is computed by summing over all horizons: \(\mathcal{L}_{DF} = \sum_{f=1}^{F} \lVert \hat{Y}_f - Y_f \rVert^2\), where $Y_f$ is the ground truth~\citep{wang2025fredf}. As mentioned in Section~\ref{sec:intro}, the frequency features in FRL are usually processed by neural networks (we illustrate the FRL paradigm for better understanding in Appendix~\ref{appendix:frl}). However, in the simplest instantiation, e.g., a single complex-valued linear layer as in FITS~\citep{xu2023fits} or a single circular convolution layer as in PaiFilter \citep{filternet} and DeRiTS \citep{derits}, this mapping simplifies to a linear transformation, so the new spectrum is merely a linear combination of the original. Consequently, simply reusing the previously adopted DF supervision paradigm is problematic, since \textbf{it neglects the inherent presence of historical information within the future forecasting window in the frequency domain}, contradicting the core principle that frequency transforms naturally preserve past information. Simply put, the output spectrum already carries a ``free copy" of the recent past spectrum, because each frequency component is only re-scaled and phased-shifted during the linear mapping. If training focuses solely on the future window, that readily available signal is wasted, leading to suboptimal time series modeling.

Based on the aforementioned analysis, WaveTS adopts a \textbf{backcasting while forecasting} supervision method in training. Adding a backcasting term converts the preserved past into an extra set of labels at no additional data cost, forcing the network to reproduce what it has implicitly retained. This self-supervised consistency narrows the solution space, speeds convergence, and anchors the learned spectrum, reducing the drift that appears when non-stationary series are modeled in the frequency domain \citep{xu2023fits,cao2020spectral,liu2024unitime}. Specifically, at each time $t$, WaveTS produces an extended output 
\(
\hat{\mathbf{Z}}_t
\in
\mathbb{R}^{(L + \tau) \times C},
\)
where the first $L$ steps backcast the ground-truth past series $\mathbf{X}_t \in \mathbb{R}^{ L  \times C }$ and the last $\tau$ steps forecast ground-truth $\mathbf{Y}_t \in \mathbb{R}^{\tau \times C}$. By concatenating $\mathbf{X}_t$ and $\mathbf{Y}_t$ along the time dimension to form
\(
[\mathbf{X}_t \oplus \mathbf{Y}_t]
\in
\mathbb{R}^{ (L+\tau) \times C}
\), the overall L2 loss $\mathcal{L}$ is formulated as:
\begin{equation}
\label{eq:loss_equation}
\mathcal{L}
\;=\;
\underbrace{\frac{1}{CL}\,
\bigl\|\,\hat{\mathbf{Z}}_{t}^{\text{history}}-\mathbf{X}_t\bigr\|_{F}^{2}}
_{\text{backcast loss}}
\;+\;
\underbrace{\frac{1}{C\tau}\,
\bigl\|\,\hat{\mathbf{Z}}_{t}^{\text{future}}-\mathbf{Y}_t\bigr\|_{F}^{2}}
_{\text{forecast loss}}
\;=\;
\frac{1}{C(L+\tau)}\,
\bigl\|\,\hat{\mathbf{Z}}_{t}-[\mathbf{X}_t\oplus\mathbf{Y}_t]\bigr\|_{F}^{2}.
\end{equation}
This approach effectively unifies backcasting and forecasting within a single objective, ensuring that the model aligns future predictions with $\mathbf{Y}_t$ while simultaneously reconstructing the past $\mathbf{X}_t$. 

% \textbf{Although the loss has been adopted previously \citep{xu2023fits,cao2020spectral}, none have fully discussed its alignment with their architectures or its most suitable application scenarios. We show that the combined backcast–forecast loss is most beneficial in the simplest FRL setting, where the frequency stem is an element-wise linear map that preserves a recoverable copy of the original spectra.}

\vspace{-2mm}
\section{Experiments}
\vspace{-2mm}
\subsection{Experimental Setup}
\vspace{-2mm}
\paragraph{Datasets}
This paper conducts extensive experiments on 8 datasets to evaluate the performance of our proposed WaveTS in the long-term time series forecasting (LTSF) task. These datasets include Weather, Exchange, Traffic, Electricity (ECL), and ETT (four subsets). Additionally, 5 datasets are utilized for the short-term time series forecasting (STSF) task, including M4 and PEMS (four subsets). Detailed descriptions of these datasets can be found in Appendix \ref{appendix:dataset}.
\vspace{-2mm}
\paragraph{Baselines}
WaveTS is compared with several SOTA models from two categories (see details in Appendix \ref{appendix:Baselines}), including (1) FRL-based forecasters: FITS \citep{xu2023fits}, DeRiTS \citep{derits}, WPMixer \citep{murad2024wpmixerefficientmultiresolutionmixing} and TexFilter \citep{filternet}. (2) Other forecasters: one hybrid model TimeMixer++ \citep{wang2025timemixer++}, two representative Transformer-based models, including iTransformer \citep{trans1itransformer}, PatchTST \citep{trans2PatchTST} and one MLP-based model DLinear \citep{dlinear}. We follow the official configurations when reproducing the results.

\vspace{-2mm}
\paragraph{Implementation Details}
Our experiments are conducted using PyTorch on a single NVIDIA RTX A6000 GPU. We utilize the ADAM \citep{kingma2017adam} optimizer to minimize the backcasting while forecasting loss, as elaborated in Section \ref{loss}. Furthermore, we adhere to the settings established in FITS \citep{xu2023fits} for WaveTS and the other nine baselines, considering the lookback length as a hyperparameter for grid search. This approach ensures a fair comparison, as previous literature has demonstrated that extended lookback windows optimize the performance of FRL-based forecasters \citep{FBM}, aligning with their training configurations. Detailed explanations of the evaluation metrics, MSE (Mean Squared Errors) and MAE (Mean Absolute Errors), are provided in Appendix \ref{appendix:Implementation Details}. The \textbf{db1} basis \citep{daubechies1988orthonormal} function is uniformly selected across experiments due to its real-valued property and more analysis of the wavelet basis function in Appendix \ref{appendix:waveletbaissfunction}.

\vspace{-3mm}

\subsection{Model Comparison on Time Series Forecasting}
\begin{table*}[!ht]
\vspace{-4mm}
% \caption{Forecasting \textbf{MSE} comparison.  Left: long-term horizons
% $\{96,192,336,720\}$ (averaged).  Right: short-term PEMS horizons
% $\{12,24\}$ (per dataset).}
\centering
\begin{minipage}[t]{0.49\textwidth}
\centering
\captionof{table}{Long-term forecasting MSE, results with MAE on more forecasting horizons are provided in Table \ref{tab:long_term_appendix}.}
\vspace{-2mm}
\label{tab:longtermmain}
\setlength{\tabcolsep}{2pt}
\scalebox{0.76}{
\begin{tabular}{c | c c c c c c c c}
\toprule[1.2pt]
Dataset & WaveTS & FITS & DeRiTS & WPM & TexF & TMix+ & iTrans \\
\midrule
ETTm1   & \textcolor{red}{\textbf{0.356}} & \textcolor{blue}{\textbf{0.361}} & 0.715 & 0.365 & 0.391 & 0.377 & 0.407 \\
ETTm2   & \textcolor{red}{\textbf{0.244}} & \textcolor{blue}{\textbf{0.252}} & 0.321 & 0.264 & 0.285 & 0.269 & 0.288 \\
ETTh1   & \textcolor{red}{\textbf{0.410}} & \textcolor{blue}{\textbf{0.412}} & 0.682 & 0.418 & 0.441 & 0.419 & 0.454 \\
ETTh2   & \textcolor{red}{\textbf{0.332}} & \textcolor{blue}{\textbf{0.337}} & 0.435 & 0.354 & 0.383 & 0.339 & 0.383 \\
ECL     & \textcolor{red}{\textbf{0.160}} & 0.172  & 0.293 & 0.175 &  0.172  & \textcolor{blue}{\textbf{0.165}} & 0.178 \\
Exchange& \textcolor{blue}{\textbf{0.361}} & 0.458 & 0.427 & 0.426 & 0.388 & \textcolor{red}{\textbf{0.357}} &  0.362  \\
Traffic & \textcolor{red}{\textbf{0.408}} & 0.428 & 0.976 & 0.448 & 0.462 & \textcolor{blue}{\textbf{0.416}} & \textcolor{blue}{\textbf{0.427}} \\
Weather & 0.237 & \textcolor{blue}{\textbf{0.230}} & 0.293 & 0.235 & 0.245 & \textcolor{red}{\textbf{0.226}} & 0.258 \\
\midrule
Avg & \textcolor{red}{\textbf{0.314}} & 0.331 & 0.518 & 0.336 & 0.346 & \textcolor{blue}{\textbf{0.320}} & 0.344 \\
\bottomrule[1.2pt]
\end{tabular}}
\end{minipage}
\hfill
\begin{minipage}[t]{0.49\textwidth}
\centering
\setlength{\tabcolsep}{2pt}
\captionof{table}{Short-term forecasting MSE on four PEMS datasets, results with MAE are provided in Table \ref{tab:short_term_pems}.}
\vspace{-2mm}
\label{tab:short_term_pems_main}
\scalebox{0.73}{
\begin{tabular}{l|c | c c c c c c c}
\toprule[1.2pt]
\multicolumn{2}{c|}{Dataset} & WaveTS & FITS & DeRiTS & WPM & TexF & iTrans & PatchT \\  
\midrule
\multirow{4}*{\rotatebox{90}{\scalebox{0.8}{\(Output=\)12}}}
   &PEMS03 & \textcolor{red}{\textbf{0.075}} & 1.041 & 0.239 & 0.083 & \textcolor{blue}{\textbf{0.076}} & 0.083 & 0.097 \\
  &PEMS04 & \textcolor{blue}{\textbf{0.091}} & 1.116 & 0.206 & 0.098 & \textcolor{red}{\textbf{0.090}} & 0.101 & 0.105 \\
  &PEMS07 & \textcolor{red}{\textbf{0.071}} & 1.080 & 0.170 & 0.080 & \textcolor{blue}{\textbf{0.073}} & 0.077 & 0.095 \\
  &PEMS08 & \textcolor{blue}{\textbf{0.091}} & 1.375 & 0.276 & 0.096 & \textcolor{red}{\textbf{0.087}} & 0.095 & 0.168 \\
\midrule

\multirow{4}*{\rotatebox{90}{\scalebox{0.8}{\(Output=\)24}}}
  & PEMS03 & \textcolor{red}{\textbf{0.107}} & 1.204 & 0.327 & 0.126 & \textcolor{blue}{\textbf{0.109}} & 0.126 & 0.142 \\
  & PEMS04 & \textcolor{red}{\textbf{0.122}} & 1.287 & 0.270 & 0.145 & \textcolor{blue}{\textbf{0.127}} & 0.154 & 0.142 \\
   &PEMS07 & \textcolor{red}{\textbf{0.103}} & 1.238 & 0.236 & 0.128 & \textcolor{blue}{\textbf{0.104}} & 0.123 & 0.150 \\
   &PEMS08 & \textcolor{blue}{\textbf{0.138}} & 1.375 & 0.377 & 0.154 & \textcolor{red}{\textbf{0.128}} & 0.149 & 0.226 \\
\midrule
\multicolumn{2}{c|}{Avg} & \textcolor{blue}{\textbf{0.100}} & 1.215 & 0.263 & 0.114 & \textcolor{red}{\textbf{0.099}} & 0.114 & 0.141 \\
\bottomrule[1.2pt]
\end{tabular}}

\end{minipage}
\vspace{-2mm}
\end{table*}

Table \ref{tab:longtermmain} and Table \ref{tab:short_term_pems_main} summarize the LTSF and STSF results, respectively, with the best in \textcolor{red}{Red }and the second in \textcolor{blue}{Blue}\footnote{WPM \citep{murad2024wpmixerefficientmultiresolutionmixing}, TexF \citep{filternet}, TMix+ \citep{wang2025timemixer++}, iTrans \citep{trans1itransformer}, PatchT \citep{trans2PatchTST}.}. Only MSE are reported here due to the space limit, and Avg represents the average of the results presented. Table \ref{tab:longtermmain} highlights the significant superiority of WaveTS in long-term forecasting especially on datasets characterized by challenging real-world scenarios with highly intricate temporal variations, such as ECL (Electricity) and Traffic, achieving overall MSE reduction of 5.1\%. In short-term forecasting, WaveTS also demonstrates exceptional performance as summarized in Table \ref{tab:short_term_pems_main}. The PEMS datasets capture traffic time series data that exhibit complex fluctuations, irregular patterns, and sudden changes over time. Nonetheless, WaveTS achieves the best performance on these datasets, underscoring its effectiveness in leveraging non-stationary frequency features for time series forecasting. This paper also conducts experiments on the well-recognized M4 dataset for short-term forecasting task, the results are summarized in Table \ref{tab:full_forecasting_results_m4} in Appendix \ref{subsec:short_term_forecasting}.

\vspace{-2mm}
\subsection{Model Analysis}
 
\paragraph{Hyper-parameter Analysis}

\vspace{-4mm}
\begin{figure}[!ht]
\centering
\includegraphics[width=1\linewidth]{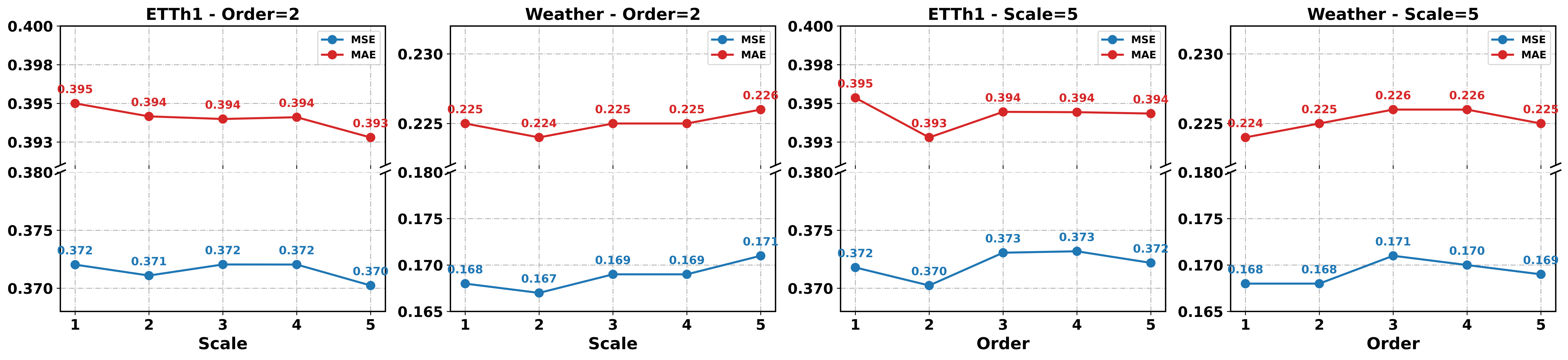}
\vspace{-7mm}
\caption{Model performance variations under different scales and orders in the WDT module. Results are collected from forecasting 96 experiments, each using its optimal input length.}
\label{fig:hyperparameter}
\vspace{-3mm}
\end{figure}

The effectiveness of the WDT is intrinsically linked to two crucial hyperparameters, i.e. the \textbf{order} of derivative in WDT and the \textbf{scale} of the wavelet transform decomposition. The former parameter dictates the number of branches, influencing the whole architecture of the WaveTS while the latter influences the hierarchical multi-scale patterns learned within each branch. Consequently, we undertake experiments to assess hyper-parameter sensitivity using the ETTh1 and Weather datasets. Figure \ref{fig:hyperparameter} delineates the influence that these two parameters exert on achieving optimal forecasting results. Generally, the performance of the WaveTS is not sensitive to the variations in order and scale, underscoring the robustness of the WaveTS. Additionally, a larger order or scale does not guarantee better performance; therefore, we adopt a case-by-case strategy in experiments for each forecasting horizon across different datasets. We provide numerical results on more datasets in Appendix \ref{sec:hyperparam_sensitivity_appendix}. In Appendix \ref{appendix:hyperparameterselection}, we also justify the principle for selecting hyperparameters with more experiments. In Appendix \ref{appendix:orderandscale}, we quantitatively analyze the contribution of each WaveTS block to the final prediction and identify the \textbf{order} and \textbf{scale} configurations most suitable for different dataset types. Additionally, the selection of the \textbf{wavelet basis function} is also crucial in wavelet-based methods; therefore, we make a detailed analysis in Appendix \ref{appendix:waveletbaissfunction}.

\paragraph{Ablation Study of the WDT}

\begin{figure}[!ht]
\centering
\includegraphics[width=\linewidth]{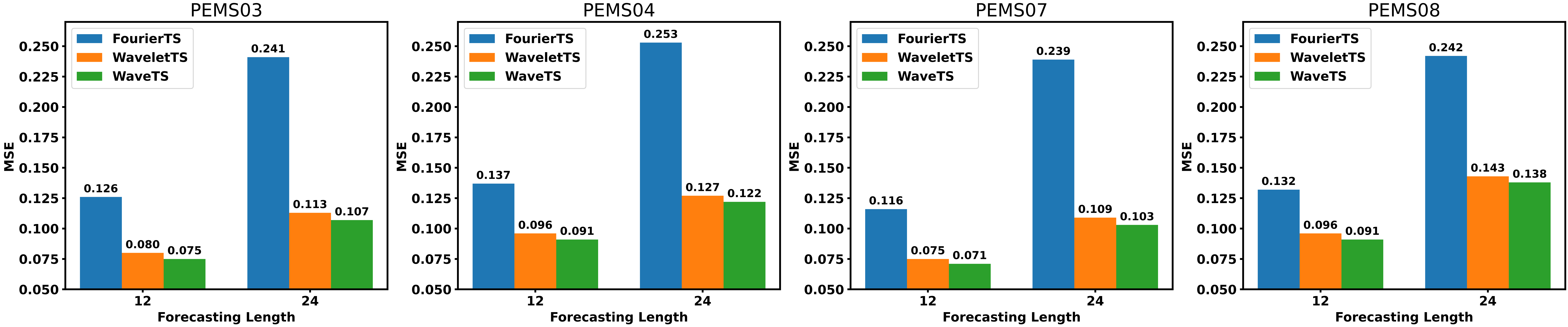}
\vspace{-5mm}
\caption{Ablation studies of the WDT on PEMS datasets. FourierTS and WaveletTS denote the substitution of the WDT with the Discrete Fourier Transform and the Discrete Wavelet Transform based on the WaveTS.}
\label{fig:ablation_main}
\vspace{-3mm}
\end{figure}

As previously mentioned, our proposed WaveTS, is centered on the WDT. Consequently, we conduct an ablation study to validate the necessity of the WDT for effective time series modeling. Specifically, we replace the WDT with both the Wavelet Transform (WT) and the Fourier Transform (FT), along with their corresponding inverse transformations, also maintain the instance normalization and subsequent Frequency Refinement Units unchanged, obtaining WaveletTS and FourierTS for comparison. Detailed description can be found in Figure \ref{fig:ablation_model} in Appendix \ref{appendix:ablation}. As demonstrated in Figure \ref{fig:ablation_main}, the omission of the WDT leads to a significant decline in forecasting performance, thereby underscoring its essential role within the WaveTS framework. Moreover, the replacement of the WDT with the FT yields inferior performance when compared to the WT, indicating that the wavelet transform is more effective for time series analysis in FRL-based forecasters. We present additional experiments in Appendix \ref{appendix:ablation} with more results on several long-term forecasting benchmark datasets that reinforce the conclusions summarized in Figure \ref{fig:ablation_main}.

% \begin{figure}[!ht]
% \vspace{-4mm}
% \hspace{-5mm}
% \includegraphics[width=1.05\linewidth]{}
% \vspace{-7mm}
% \caption{Visualization of Approximation Coefficients (LL) and Detail Coefficients (LHs) in the WDT. The data are collected from the second-order derivation (order=2)  at a wavelet decomposition scale of 2 (scale=2). We select the ``OT'' variate with the input length is 720 for each dataset. }
% \label{fig:LL_LH_main}
% \vspace{-4mm}
% \end{figure}

\paragraph{Visualizations of the Abrupt Changes Amplification}
As illustrated in Figure~\ref{fig:abruptchange}, across all four datasets, the highlighted windows show that the magnitude of the wavelet derived coefficients surges precisely when the original series experiences a swift trend break and volatility jump. This behavior is consistent with the analytical role of the derivative operation as a local high-pass filter: it suppresses slow-moving components while accentuating rapid derivatives. By contrast, the standard WT coefficients remain comparatively smooth, indicating limited sensitivity to such localized shocks. The repetition of this pattern on time series collected from various domain demonstrates that the WDT generalizes beyond a single domain, providing a robust mechanism for exposing change points that would otherwise be under-represented in frequency representations.

\begin{figure}[!ht]
\vspace{-2mm}
\centering
\includegraphics[width=\linewidth]{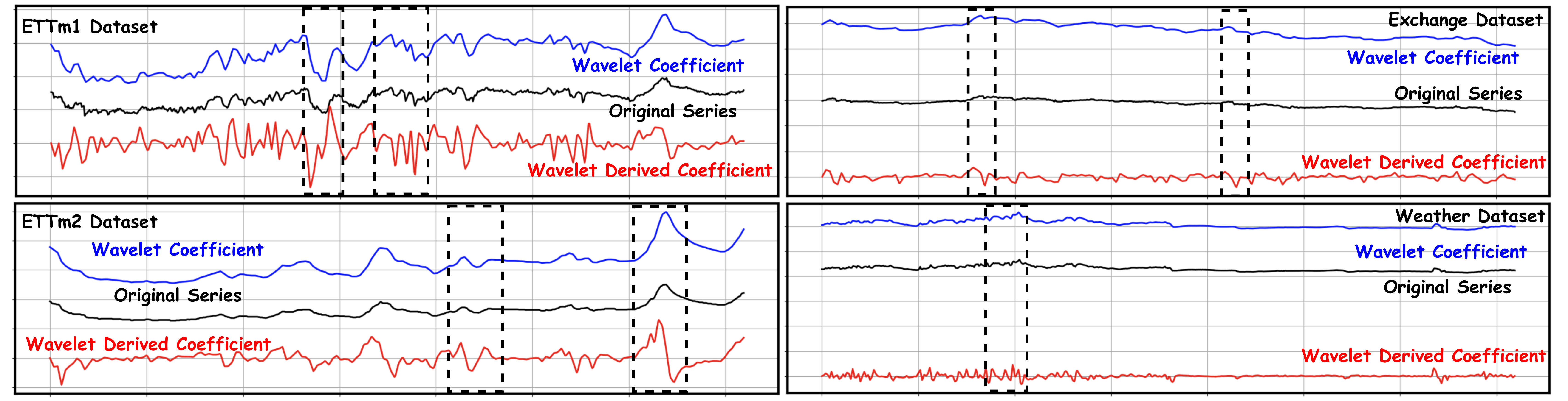}
\vspace{-4mm}
\caption{Additional visual evidence that the Wavelet Derived Coefficients (red) produced by the proposed WDT amplify abrupt regime shifts more clearly than the standard Wavelet Coefficients (blue).}
\label{fig:abruptchange}
\end{figure}

\paragraph{Visualization of Time-Frequency Scalograms}

Scalograms provide a robust framework for illustrating time-frequency characteristics within wavelet analysis, emphasizing the distribution of signal energy across various scales. In Figure~\ref{fig:scalogram_main}, we present two examples at scale=5 from the ETTh1 dataset after 10 training epochs. Each row corresponds to a particular frequency band, with the lowest frequency component (LL) at the bottom and successively higher frequency details (LH1 to LH5) stacked above. Specifically, each WDT branch learns a distinct energy distribution, thus extracting different hierarchies of patterns from time series data and enhancing the model's representational capacity. In other words, same time step may exhibit differing energy levels across different orders of derivation. Collectively, these multi-scale branches capture both subtle changes and global patterns, contributing to improved time series modeling. More visualizations are provided in Appendix~\ref{appendix_visual:scalogram}.

\begin{figure}[!ht]
\centering
\includegraphics[width=\linewidth]{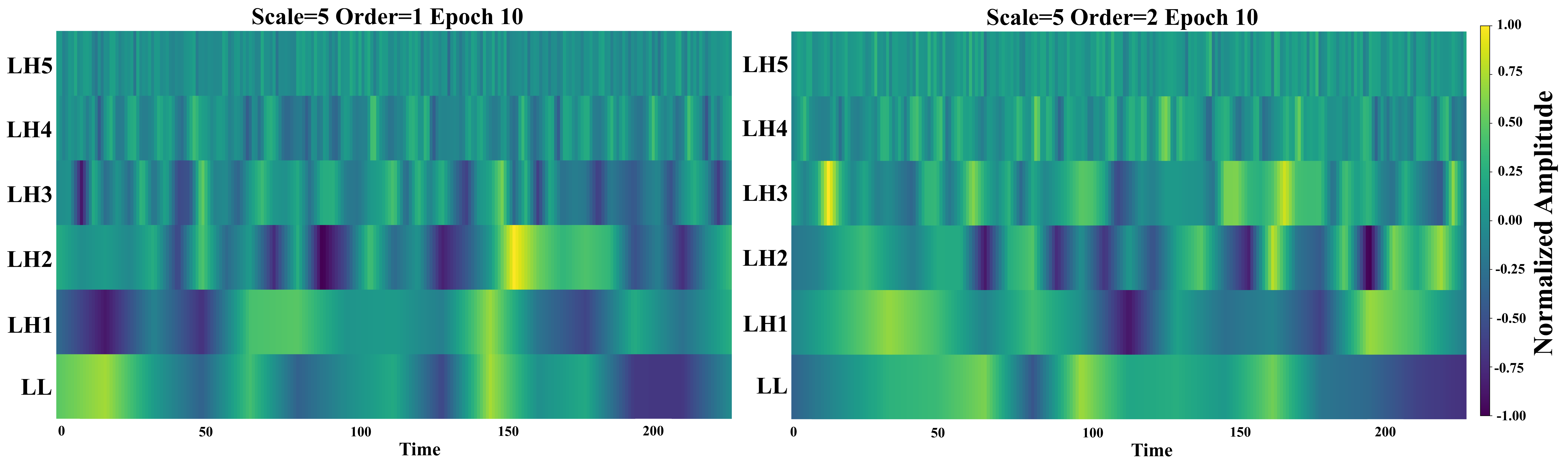}
\vspace{-6mm}
\caption{Visualization of scalograms in the WDT. The color intensity represents the normalized amplitude of the wavelet coefficients, indicating how different scales capture localized energy variations.}
\label{fig:scalogram_main}
\vspace{-3mm}
\end{figure}

\vspace{-3mm}
\paragraph{Efficiency Analysis}
\begin{wraptable}{r}{0.5\textwidth}  
  \caption{Efficiency summary of WaveTS and other baselines on the ETTm1 dataset with a forecast horizon of 96. The lookback length is 360 and batch size is 64. Other configurations are set as recommended.}
  \scalebox{0.75}{%
  \hspace{-6mm}
  \renewcommand{\arraystretch}{0.95}  
  \setlength{\tabcolsep}{3.3pt}        
    \begin{tabular}{@{}c|cccc@{}}
      \toprule
      Model & Memory & Train & Infer. & MSE \\ 
      \midrule
      FEDformer~\citep{zhou2022fedformer}& 7666 MB  & 52.40s  & 0.152s & 0.325 \\
      iTransformer~\citep{trans1itransformer}
                                       & 474 MB   & 5.23s   & 0.053s  & 0.314 \\
      PatchTST~\citep{trans2PatchTST}   & 3360 MB  & 25.11s  & 0.076s  & 0.325 \\
      TimeMixer++~\citep{wang2025timemixer++}& 17506 MB  & 30.21s  & 0.104s & \emph{0.310} \\
      TexFilter~\citep{filternet}       & 450 MB   & 3.34s   & 0.021s & 0.321 \\
      FreTS~\citep{frets}               & 460 MB   & 10.88s  & 0.024s & 0.343 \\
      DeRiTS~\citep{derits}             & 396 MB   & 4.52s   & \emph{0.018s} & 0.687 \\
      FITS~\citep{xu2023fits}           & \emph{394 MB} & \textbf{2.46s} & 0.019s & 0.332 \\
      WaveTS (Ours)                    & \textbf{380 MB} & \emph{2.61s}  & \textbf{0.015s} & \textbf{0.303} \\
      \bottomrule
    \end{tabular}%
  }
  \label{tab:efficiency}
\end{wraptable}

Table \ref{tab:efficiency} displays the comparative results in terms of efficiency. ``Train" and ``Infer." refer to training time per epoch and inference time per batch, respectively. WaveTS demonstrates superior computational efficiency than other baseline methods while maintaining accurate forecasts. Specifically, WaveTS requires only 380MB of GPU memory, takes 2.61s per epoch for training, and achieves a remarkably low inference time of 0.015s per batch, all while obtaining an MSE of 0.303, surpassing both existing FRL-based and Transformer-based approaches in overall performance. WaveTS strikes an optimal balance between forecast accuracy and hardware usage: its efficiency gains come from the real-valued wavelet computations, which avoid the hardware-unfriendly nature of complex-valued operations common in prior FRL-based forecasters \citep{frets,xu2023fits,zhou2022fedformer,cao2020spectral,yi2024fouriergnn,derits}. This design allows WaveTS to deliver superior inference speed and low memory consumption, making it well-suited for practical deployment on edge devices and some resource-limited scenarios without sacrificing forecasting precision.

% \vspace{-2mm}

\paragraph{Discussion}
\label{discussion}
Our proposed WaveTS expands the LTSF and the STSF toolbox and promotes the development of efficient, robust and scalable forecasting solutions through time-frequency transformation. We also provide discussion on the limitations of WaveTS in Appendix \ref{appendix:limitations}.
% % Recent advances in time series forecasting largely utilize FRL-based approaches to boost efficiency by focusing on crucial time-frequency features with minimal computational overhead. While this strategy has proven effective, WaveTS introduces a fresh perspective by leveraging the WDT to capture multi-scale stationary features in a more efficient and energy-preserving manner, amplifying sudden changes in time series data. We hope that this new framework not only enriches the LTSF community’s toolbox but also encourages the emergence of robust, scalable time series forecasting solutions. 

% WaveTS introduces a fresh perspective into FRL family for effective time series forecasting by leveraging the WDT to efficiently capture multi-scale time-frequency features while preserving energy and highlighting sudden changes. We believe this framework expands the LTSF and the STSF toolbox and promotes the development of robust and scalable forecasting solutions. We provide discussion on the limitations of WaveTS in Appendix \ref{appendix:limitations}.

\section{Conclusion and Future Work}
\vspace{-4mm}
\label{conclusion}
This paper introduces WaveTS, which employs a multi-branch architecture to achieve superior results in deep time series forecasting. In contrast to previous forecasting models that utilize the Fourier transform for feature extraction, WaveTS targets modeling the derivative of time series rather than the raw data, effectively captures multi-scale time-frequency features, highlighting regime changes in time series through a theoretically sound Wavelet Derivative Transform, while preserving both information and energy. Empirical results on long-term forecasting task and short-term forecasting task demonstrate that WaveTS outperforms previous FRL-based and Transformer-based methods, concurrently maintaining low computational complexity. Future work will aim to optimize the WaveTS architecture for real-world deployment and broaden its applicability to downstream tasks, including classification and anomaly detection.

\section*{Reproducibility statement}

We have taken several steps to ensure the reproducibility of our work. The model architecture is described in Section \ref{WaveTS} of the main text. The theoretical proofs of our proposed algorithm are provided in Appendix \ref{appendix:Theoretical Proofs}. The dataset statistics and descriptions are given in Appendix \ref{appendix:Datasets}. The implementation details, including training settings and hyperparameters, are provided in Appendix \ref{appendix:Implementation Details}. Furthermore, we include an anonymous link to the source code and scripts in the Abstract, which enables independent reproduction of our results.

\bibliography{main}
\bibliographystyle{main}

\clearpage

\onecolumn
\clearpage
\addtocontents{toc}{\protect\setcounter{tocdepth}{2}}
\appendix

\begin{center}
    \Large{\Huge Appendix (Supplementary Material)\\
    \vspace{3mm}
    \large Multi-Order Wavelet Derivative Transform for Deep Time Series Forecasting}\\
\end{center}
\vskip 4mm
\startcontents[sections]\vbox{\sc\Large Table of Contents}
\vspace{5mm}
\hrule height .8pt
\vspace{-2mm}
\printcontents[sections]{l}{1}{\setcounter{tocdepth}{2}}
\vspace{4mm}
\hrule height .8pt
\vskip 10mm

\clearpage

\section{Frequency Domain Representation Learning}

\label{appendix:frl}

\begin{figure}[!ht]
\centering
\includegraphics[width=1.05\linewidth, trim=15 10 15 15, clip]{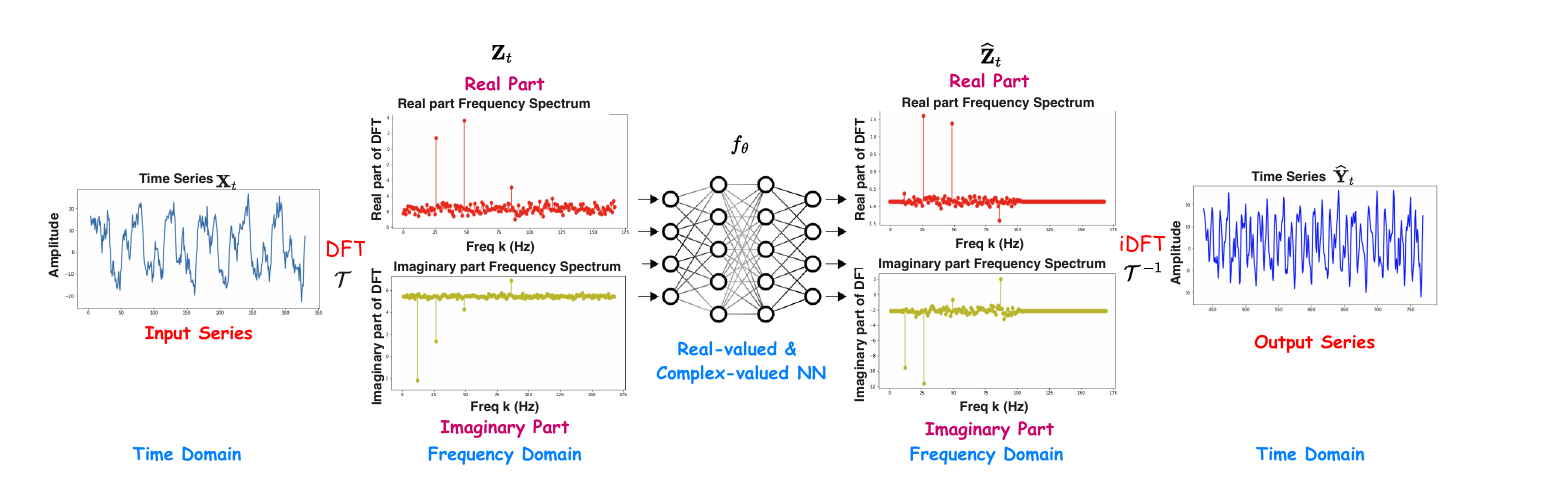}
\vspace{-6mm}
\caption{The paradigm of Frequency domain Representation Learning. Domain transformation is performed by the Discrete Fourier Transform (DFT) here.}
\label{fig:FRL}
\vspace{-1em}
\end{figure}

 We first clarify the definition of Frequency domain Representation Learning (FRL) for better understanding. Let \([X_1, X_2, \ldots, X_T]\in\mathbb{R}^{T \times C}\) denote a multivariate time series, where each \(X_t\in\mathbb{R}^C\) contains the observations of \(C\) variates at time \(t\). At time \(t\), the input window is \(\mathbf{X}_t=[X_{t-L+1},\ldots,X_t]\in\mathbb{R}^{L\times C}\) and the forecasting target is \(\mathbf{Y}_t=[\hat{X}_{t+1},\ldots,\hat{X}_{t+\tau}]\in\mathbb{R}^{\tau \times C}\). FRL models this map in the frequency domain: a bijective transform \(\mathcal{T}\) produces coefficients \(\mathbf{Z}_t=\mathcal{T}(\mathbf{X}_t)\); a learnable operator \(f(\cdot\,;\theta)\) acts on the coefficients to yield \(\widehat{\mathbf{Z}}_t=f(\mathbf{Z}_t\,;\theta)\); the inverse transform reconstructs the prediction \(\widehat{\mathbf{Y}}_t=\mathcal{T}^{-1}(\widehat{\mathbf{Z}}_t)\) (see Figure~\ref{fig:FRL}). In the Fourier case, the coefficients comprise real and imaginary parts, or magnitude and phase; in the wavelet case, they comprise an \(LL\) trend and multilevel \(\mathbf{LH}\) details. Working on coefficients separates scales and oscillations, mitigates nonstationarity, enables band aware filtering and denoising, and remains lightweight because \(f\) operates on compact structured representations. The effectiveness of FRL is that the frequency spectrum exposes seasonality and other global regularities, and provides a holistic view of temporal behavior while supporting multi scale and multi frequency analysis \cite{yi2023surveydeeplearningbased}. Therefore, FRL has been widely integrated across deep backbones for time series forecasting, including Transformer based designs \citep{zhou2022fedformer, eldele2024tslanet}, graph neural networks \citep{cao2020spectral, yi2024fouriergnn}, and MLP style architectures \citep{frets,derits, xu2023fits,fei2025amplifierbringingattentionneglected}. In this paper, WaveTS instantiates FRL with the WDT, which is real valued with an exact inverse: we transform \(\mathbf{X}_t\) to WDT coefficients, apply a compact branch wise mapping across derivative order and multilevel scale, then reconstruct \(\widehat{\mathbf{Y}}_t\) by \(\mathcal{T}^{-1}\).

\section{Theoretical Proofs}
\label{appendix:Theoretical Proofs}
\vspace{-2mm}
\subsection{Wavelet Derivative Transform}
\label{proof_Wavelet_derivative}

\begin{lemma}
Given a time series \(X(t)\), the wavelet transform of the \(n\)order derivative of \(X(t)\), namely Wavelet Derivative Transform (WDT), is related to the wavelet transform of \(X(t)\) using the \(n\)-th derivative of the mother wavelet \(\psi^{(n)}(t)\) as follows:
\begin{equation}
\mathrm{WDT}_{k}^{(n)} = (-1)^{n} 2^{n k} \tilde{W}^{(n)}_{k},
\end{equation}
where \(\tilde{W}^{(n)}_{k}\) is the wavelet transform of \(X(t)\) using the \(n\)-th derivative of the mother wavelet \(\psi^{(n)}(t)\):
\begin{equation}
  \tilde{W}^{(n)}_{k}
  \;=\;
  \sum_{t=0}^{T-1} X(t)\,\psi^{(n)}_{k}(t).
\end{equation}
Here \(\psi^{(n)}_{k}(t)\) is the \(n\)-th derivative of the discrete wavelet basis function, defined as:
\begin{equation}
\psi^{(n)}_{k}(t) = 2^{k/2} \psi^{(n)}(2^{k} t - j),
\end{equation}
with \(k\) indexing the scale and \(j\) the translation.
\end{lemma}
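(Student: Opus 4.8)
The plan is to prove the identity by transferring the $n$ derivatives off the signal and onto the analyzing wavelet via integration by parts (or summation by parts in the strictly discrete setting), and then absorbing the chain-rule scaling factors. First I would write the left-hand side as the pairing of the $n$-th derivative $X^{(n)}$ against the standard (undifferentiated) wavelet atom $\psi_k(t)=2^{k/2}\psi(2^{k}t-j)$:
\begin{equation}
\mathrm{WDT}_{k}^{(n)} = \int X^{(n)}(t)\,\psi_{k}(t)\,dt .
\end{equation}
Applying integration by parts $n$ times moves every derivative onto $\psi_k$ and produces exactly one sign $(-1)^{n}$,
\begin{equation}
\mathrm{WDT}_{k}^{(n)} = (-1)^{n}\int X(t)\,\frac{d^{n}}{dt^{n}}\psi_{k}(t)\,dt ,
\end{equation}
provided all boundary contributions vanish at each step.

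The second step is a direct chain-rule computation of $\tfrac{d^{n}}{dt^{n}}\psi_{k}(t)$. Because the argument $2^{k}t-j$ has $t$-derivative $2^{k}$, each of the $n$ differentiations contributes one factor $2^{k}$, so that $\tfrac{d^{n}}{dt^{n}}\psi_{k}(t)=2^{nk}\,2^{k/2}\psi^{(n)}(2^{k}t-j)=2^{nk}\,\psi^{(n)}_{k}(t)$, where $\psi^{(n)}_{k}$ is precisely the differentiated atom appearing in the definition of $\tilde{W}^{(n)}_{k}$. Substituting this back gives
\begin{equation}
\mathrm{WDT}_{k}^{(n)} = (-1)^{n}2^{nk}\int X(t)\,\psi^{(n)}_{k}(t)\,dt = (-1)^{n}2^{nk}\,\tilde{W}^{(n)}_{k},
\end{equation}
which is the claimed relation. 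The bookkeeping of the two factors is the crux: the $(-1)^{n}$ comes solely from the parity of the integration-by-parts operation, while the dyadic dilation $2^{nk}$ comes solely from differentiating through the scaled argument.

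The step I expect to be the main obstacle is rigorously discharging the boundary terms in the repeated integration by parts, together with the closely related question of how to interpret $\psi^{(n)}$ for the compactly supported bases actually used. For compactly supported or rapidly decaying mother wavelets the boundary values at $\pm\infty$ (or at the support endpoints) vanish automatically, so the argument is clean for sufficiently smooth Daubechies wavelets possessing enough vanishing moments and regularity. The delicate case is Haar/\textbf{db1}, which is only piecewise constant; there $\psi^{(n)}$ must be read in the distributional sense, and I would verify that pairing against the distributional derivative reproduces the same sign and scaling factors without spurious boundary jumps. Finally, to match the genuinely discrete finite-length formulation of Definition~\ref{prop:wdt_inverse}, the integral becomes a finite sum and integration by parts is replaced by summation by parts (Abel's lemma); the boundary terms are then controlled by a periodic or zero-padded extension of $X(t)$, and the identical $(-1)^{n}2^{nk}$ prefactor re-emerges from the discrete differencing operator acting on the dyadically scaled basis.
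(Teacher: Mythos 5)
Your proposal is correct and follows essentially the same route as the paper's own proof: write $\mathrm{WDT}_{k}^{(n)}$ as the pairing of $X^{(n)}$ with $\psi_{k,j}$, integrate by parts $n$ times (with vanishing boundary terms justified by compact support or rapid decay of the mother wavelet) to produce $(-1)^{n}$, apply the chain rule to the dilated argument to extract $2^{nk}$, and match the result with the definition of $\tilde{W}^{(n)}_{k}$. Your additional remarks on the distributional reading of $\psi^{(n)}$ for Haar/db1 and on summation by parts in the strictly discrete setting go slightly beyond the paper's continuous-domain argument, but they refine rather than alter the same proof.
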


\begin{proof}
Starting with the definition of the wavelet transform of the \(n\)-th derivative of \(X(t)\):
\begin{equation}
W_{X^{(n)}}(k,j) = \int_{-\infty}^{+\infty} X^{(n)}(t) \psi_{k,j}(t) dt.
\end{equation}
Using integration by parts \(n\) times, and assuming that boundary terms vanish (i.e., the boundary term $
\sum_{m=0}^{n-1}(-1)^m\Bigl[X^{(n-1-m)}(t)\,\psi_{k,j}^{(m)}(t)\Bigr]_{-\infty}^{+\infty}$ sum to zero. In practice, this assumption is typically satisfied because many mother wavelets, such as Daubechies wavelets, are either compactly supported or decay rapidly \citep{daubechies1988orthonormal,daubechies1992ten}, making \(\psi_{k,j}^{(m)}(t)\) negligible at the boundaries. We then transfer the derivative from \(X^{(n)}(t)\) to the wavelet function:
\begin{equation}
W_{X^{(n)}}(k,j) = (-1)^{n} \int_{-\infty}^{+\infty} X(t) \frac{d^{n}}{dt^{n}} \psi_{k,j}(t) dt.
\end{equation}
Calculating the \(n\)-th derivative of the discrete wavelet basis function:
\begin{equation}
\frac{d^{n}}{dt^{n}} \psi_{k,j}(t) = 2^{k/2} \frac{d^{n}}{dt^{n}} \psi(2^{k} t - j)
= 2^{k/2} \left( (2^{k})^{n} \psi^{(n)}(2^{k} t - j) \right) 
= 2^{k/2 + n k} \psi^{(n)}(2^{k} t - j).
\end{equation}

Therefore,
\begin{equation}
\begin{split}
W_{X^{(n)}}(k,j) &= (-1)^{n} \int_{-\infty}^{+\infty} X(t) \frac{d^{n}}{dt^{n}} \psi_{k,j}(t) dt \\
&= (-1)^{n} 2^{k/2 + n k} \int_{-\infty}^{+\infty} X(t) \psi^{(n)}(2^{k} t - j) dt.
\end{split}
\label{appendix_dwt}
\end{equation}
Recall that the wavelet transform of \(X(t)\) using the \(n\)-th derivative of the mother wavelet is:
\begin{equation}
\tilde{W}^{(n)}_{k,j} = 2^{k/2} \int_{-\infty}^{+\infty} X(t) \psi^{(n)}(2^{k} t - j) dt.
\end{equation}
Thus,
\begin{equation}
\int_{-\infty}^{+\infty} X(t) \psi^{(n)}(2^{k} t - j) dt = \frac{1}{2^{k/2}} \tilde{W}^{(n)}_{k,j}.
\end{equation}
Substituting back into the Eq.~\ref{appendix_dwt}:
\begin{equation}
W_{X^{(n)}}(k,j) = (-1)^{n} 2^{k/2 + n k} \left( \frac{1}{2^{k/2}} \tilde{W}^{(n)}_{k,j} \right) 
= (-1)^{n} 2^{n k} \tilde{W}^{(n)}_{k,j}.
\end{equation}
\end{proof}
 
% This result establishes the theoretical relationship between the wavelet transform of the 
% $n$-th derivative of a time series and the wavelet transform conducted with the $n$-th 
% derivative of the mother wavelet. 
% \textbf{To the best of our knowledge, this is the first theoretical analysis devoted to wavelet derivative learning}. In practice, however, one need not explicitly differentiate the mother wavelet function. Although the derivative formally relies on $\psi^{(n)}$, standard discrete wavelet filters 
% (such as Daubechies \citep{daubechies1992ten}) already capture difference and derivative features 
% at fine scales. Hence, one can simply apply the usual WT to the original signal and then scale the resulting coefficients (by factors depending on 
% the scale and derivative order) to emulate higher-order derivatives. This avoids numerical 
% differentiation and the complexity of directly constructing or discretizing $\psi^{(n)}$. Wavelets with sufficient vanishing moments inherently act as finite-difference operators, thereby providing a natural approximation to derivative information. The linearity and multi-scale nature of the wavelet transform make it especially attractive for derivative-related tasks 
% in signal processing \citep{mallat1989theory,daubechies1992ten}.

% ---- Replace everything after \end{proof} with the following ----

This lemma establishes a precise relationship between the WT of the $n$-th derivative $X^{(n)}(t)$ and the WT of $X(t)$ computed with the derivative wavelet $\psi^{(n)}$, where the only difference is a fixed scale-dependent gain factor $2^{nk}$ (and a sign $(-1)^n$).

\textbf{Practical implementation (no explicit $\psi^{(n)}$).}
Although the identity above is written in terms of $\psi^{(n)}$, we do not numerically differentiate the mother wavelet in practice. Instead, we implement WDT as one standard DWT + scale-wise rescaling: we first compute the standard DWT coefficients of $X(t)$ using $\psi$, denoted as $W_X(k,j)$, and then define the $n$-th order WDT coefficients by
\begin{equation}
\mathrm{WDT}^{(n)}_{k,j} \;=\; \alpha_{n,k}\, W_X(k,j),
\qquad
\alpha_{n,k} \;=\; (-1)^n 2^{nk}.
\label{eq:wdt_rescale_impl}
\end{equation}
In vectorized form consistent with our notation $\mathrm{WDT}_k^{(n)}$, the rescaling in Eq.~\ref{eq:wdt_rescale_impl} is applied element-wise to all translations $j$ at the same scale $k$.

\textbf{Why rescaling is theoretically justified.}
Wavelets with sufficient vanishing moments act as multiscale differential operators. In particular, when $\psi$ has $n$ vanishing moments, there exists a smoothing function $\theta$ such that $\psi(t)=(-1)^n \tfrac{d^n}{dt^n}\theta(t)$, and the (continuous) wavelet coefficients admit the differential form (see \citep{mallat1989theory,daubechies1992ten})
\begin{equation}
W_X(u,s) \;=\; s^{n}\,\frac{d^n}{du^n}\bigl(X * \theta_s\bigr)(u),
\label{eq:mallat_diff_form}
\end{equation}
which implies that standard wavelet coefficients already encode $n$-th order derivative information up to a scale gain.
Importantly, our discrete wavelet basis uses the convention $\psi_{k,j}(t)=2^{k/2}\psi(2^{k}t-j)$, i.e., the dilation is $2^{k}$ (rather than $2^{-k}$). Under this convention, the continuous scale parameter corresponds to $s=2^{-k}$, hence the factor $s^{-n}$ in Eq.~\ref{eq:mallat_diff_form} becomes $2^{nk}$, exactly matching the scale-wise gain in Eq.~\ref{eq:wdt_rescale_impl} and the $2^{nk}$ factor derived in the lemma. Therefore, Eq.~\ref{eq:wdt_rescale_impl} provides a theoretically sound realization of derivative-aware wavelet analysis while bypassing explicit construction of $\psi^{(n)}$.

\textbf{Numerical validity under db1 (Haar).}
In our implementation, we unify the mother wavelet to db1. This choice makes the above strategy particularly rigorous because db1’s high-pass analysis filter is
$G(z)=\tfrac{1}{\sqrt{2}}(1-z^{-1})$, i.e., the discrete first-difference operator (up to normalization). Consequently, higher-order behavior corresponds to the classical binomial difference operator
\begin{equation}
(1-z^{-1})^n \;=\; \sum_{m=0}^{n}\binom{n}{m}(-1)^m z^{-m},
\end{equation}
which is a standard and stable numerical approximation to $\tfrac{d^n}{dt^n}$ in the discrete domain \citep{leveque2007finite}.
Thus, with db1, ``standard DWT + scale-wise rescaling” is not a heuristic workaround, but a robust discrete realization of multiscale finite differences.
Moreover, this implementation remains efficient: it requires only a single standard DWT (typically $O(T)$ for length-$T$ signals) plus per-scale scalar multiplications.

\begin{theorem}[Energy Conservation of WDT]
\label{prof:wdt_energy_conservation}
Let $X(t) \in L^2(\mathbb{R})$ and $\psi(t)$ be an admissible mother wavelet (essentially has a well-defined wavelet transform, zero mean, and sufficient decay and vanishing moments to handle the boundary and derivative argument). For the $n$-th derivative of $X(t)$, denoted $X^{(n)}(t)$, its WT coefficients $W_{X^{(n)}}(k,j)$ (using $\psi_{k,j}$) satisfy:
\[
\sum_{k,j} \bigl|W_{X^{(n)}}(k,j)\bigr|^2 \;=\; \|\,X^{(n)}\|\!\ _{2}^{2}
\quad (\text{up to a constant factor dependent on the wavelet family}).
\]
Thus, the total energy of $X^{(n)}(t)$ is preserved in its wavelet coefficient domain.
\end{theorem}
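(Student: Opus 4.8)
The plan is to recognize that the array $W_{X^{(n)}}(k,j)$ is nothing more than the ordinary wavelet transform of the \emph{single} $L^2$ function $g:=X^{(n)}$, so that the claimed identity is a direct instance of the standard Parseval (Plancherel) theorem for the wavelet system $\{\psi_{k,j}\}$. In other words, I would not treat $X^{(n)}$ and its derivatives as coupled objects, but simply view $g$ as ``the signal'' and apply the energy identity that the wavelet frame already satisfies. Under this reduction the whole statement splits into two ingredients: (i) certifying that $g\in L^2(\mathbb{R})$, and (ii) invoking the energy identity that holds whenever $\{\psi_{k,j}\}$ is an orthonormal basis or, more generally, a tight frame of $L^2(\mathbb{R})$.

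First I would set $g=X^{(n)}$ and note by definition that
\[
  W_{X^{(n)}}(k,j)\;=\;\langle g,\psi_{k,j}\rangle\;=\;\int_{-\infty}^{+\infty} g(t)\,\psi_{k,j}(t)\,dt .
\]
Under the stated hypotheses—$X\in L^2$ smooth enough with rapid decay, and $\psi$ admissible with sufficient vanishing moments and decay—the function $g$ is a well-defined element of $L^2(\mathbb{R})$. This is precisely where the regularity of $X$ and the decay of $\psi^{(m)}$ that already justified the integration-by-parts step in the preceding lemma re-enter the argument. Next I would appeal to the completeness and (near-)orthonormality of the wavelet system: for an orthonormal wavelet basis, Parseval's identity gives immediately
\[
  \|g\|_2^2 \;=\; \sum_{k,j}\bigl|\langle g,\psi_{k,j}\rangle\bigr|^2 \;=\; \sum_{k,j}\bigl|W_{X^{(n)}}(k,j)\bigr|^2,
\]
which is exactly the claim with constant $1$. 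For a tight frame with frame bound $A$ (covering Daubechies-type families that are not strictly orthonormal near boundaries), the same computation yields $\sum_{k,j}|W_{X^{(n)}}(k,j)|^2 = A\,\|g\|_2^2$, which accounts for the ``constant factor dependent on the wavelet family'' in the statement. Substituting $g=X^{(n)}$ then closes the argument.

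The main obstacle is not the algebra but the hypothesis bookkeeping: one must guarantee that $X^{(n)}$ genuinely lies in $L^2(\mathbb{R})$ and that the frame/Parseval identity is applicable to it. Concretely, this amounts to assuming $X$ possesses $n$ weak derivatives in $L^2$ (i.e.\ $X\in H^n$) and that $\psi$ has enough vanishing moments and decay so that the frame constant is finite and the boundary terms from the differentiation/integration-by-parts vanish. Once these standing assumptions are in force, energy conservation follows as an immediate corollary of the wavelet Parseval theorem rather than a fresh computation. As an optional cross-check I would re-derive the identity through Plancherel in the Fourier domain, using $\widehat{X^{(n)}}(\omega)=(i\omega)^n\widehat{X}(\omega)$ together with the admissibility relation $\sum_{k,j}|\langle g,\psi_{k,j}\rangle|^2 = C_\psi\|g\|_2^2$; this exposes the family-dependent constant explicitly and confirms consistency with the scale-gain factor $2^{nk}$ established in the earlier lemma.
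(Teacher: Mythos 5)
Your proposal is correct, and it is in fact a cleaner argument than the one in the paper, but it takes a genuinely different route. You treat $W_{X^{(n)}}(k,j)=\langle X^{(n)},\psi_{k,j}\rangle$ as the ordinary wavelet coefficients of the single signal $g=X^{(n)}$ and invoke the Parseval/tight-frame identity for $\{\psi_{k,j}\}$ once, with the only real hypothesis being $X\in H^n$ so that $g\in L^2(\mathbb{R})$; the theorem then is a one-line corollary. The paper instead routes through its earlier integration-by-parts lemma, rewriting $W_{X^{(n)}}(k,j)=(-1)^n 2^{nk}\tilde W^{(n)}_{k,j}$ where $\tilde W^{(n)}_{k,j}$ are coefficients of $X$ itself against the derivative wavelet $\psi^{(n)}$, then argues that $\psi^{(n)}$ remains admissible and closes with a Fourier-domain correspondence $\widehat{X^{(n)}}(\omega)=(\mathrm{i}\omega)^n\widehat{X}(\omega)$ to identify the resulting weighted sum $\sum_{k,j}2^{2nk}|\tilde W^{(n)}_{k,j}|^2$ with $\|X^{(n)}\|_2^2$ up to a constant. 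What your argument buys is rigor and economy: you avoid the paper's loosest steps (the ``proportional to $\|X\|_2^2$'' claim, which as written cannot be right because of the $2^{2nk}$ weights, and the informal $\sim$ conclusion). What the paper's detour buys is the practically relevant content for the model: it shows the energy of $X^{(n)}$ is carried by the rescaled coefficients $(-1)^n2^{nk}\tilde W^{(n)}_{k,j}$ that WaveTS actually computes from $X$ alone, i.e., it ties energy conservation to the implementable WDT rather than to coefficients of a derivative one never forms explicitly. Your optional Plancherel cross-check, which exposes the $2^{nk}$ scale gain, essentially recovers that link, so the two proofs are complementary rather than in conflict.
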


\begin{proof}
From the standard WT Parseval relation \citep{parseval1806memoire}, we know that (up to a wavelet-dependent constant):
\[
\sum_{k,j} \bigl|W_X(k,j)\bigr|^2 \;=\; \|\,X\|\!\ _{2}^{2}.
\]
By integrating by parts $n$ times in the definition of $W_{X^{(n)}}(k,j)$ and assuming boundary terms vanish (due to $\psi$'s compact support and fast decay), one obtains the key identity:
\[
W_{X^{(n)}}(k,j) 
\;=\; (-1)^n\,2^{n\,k}\,\tilde{W}^{(n)}_{k,j},
\]
where $\tilde{W}^{(n)}_{k,j}$ is the transform of $X$ using the $n$-th derivative of the mother wavelet $\psi^{(n)}(t)$. Since $\psi^{(n)}$ typically remains admissible under suitable regularity assumptions, its associated wavelet transform also obeys a Parseval-type property. Consequently,
\[
\sum_{k,j} \bigl|W_{X^{(n)}}(k,j)\bigr|^2
\;=\;
\sum_{k,j} \bigl|(-1)^n\,2^{n\,k}\,\tilde{W}^{(n)}_{k,j}\bigr|^2
\;=\;
\sum_{k,j} 2^{2\,n\,k}\,\bigl|\tilde{W}^{(n)}_{k,j}\bigr|^2.
\]
This sum is proportional to $\|\,X\|\!\ _{2}^{2}$, but $(\mathrm{i}\omega)^n$ in the frequency domain corresponds to $X^{(n)}(t)$ in the time domain, implying:
\[
\|\,X^{(n)}\|\!\ _{2}^{2}
\;\;\sim\;\;
\sum_{k,j} \bigl|W_{X^{(n)}}(k,j)\bigr|^2.
\]
Hence the Wavelet Derivative Transform preserves the $L^2$ norm of $X^{(n)}(t)$, completing the proof.
\end{proof}

\subsection{LTI State Space Models are FRL-based Forecasters}
\label{proof_ssm_are_frlbased}

State space models (SSMs) are highly effective for time series modeling due to their inherent simplicity and capacity to capture complex autoregressive dependencies. By integrating observed data with latent state variables, SSMs can accurately delineate dynamic patterns and the underlying processes present within the data \citep{chimera,hu2024time,hu2024attractor}. This paper posits that SSMs serve as FRL-based forecasters in certain contexts; therefore, the \textbf{forecasting while backcasting} training approach consistently exhibits superior performance when applied to SSMs, corroborating the findings presented in Table \ref{tab:appendix:backcast_forecast}. Here, we present a two-stage proof demonstrating a direct correspondence between Linear Time-Invariant (LTI) SSMs and frequency-domain linear mappings utilized by FRL-based forecasters.

\paragraph{LTI SSM is a Global Convolution in the time domain}
Consider a 1D LTI SSM of the form:
\begin{equation}
\label{eq:ssm_definition}
\mathbf{x}_{k+1} = A\,\mathbf{x}_k + B\,u_k \; , \; y_k = C\,\mathbf{x}_k + D\,u_k,
\end{equation}
where \(\mathbf{x}_k \in \mathbb{R}^d\) is the hidden state at time \(k\), \(u_k \in \mathbb{R}\) is the input, and \(y_k \in \mathbb{R}\) is the output. Matrices \(A, B, C\) (and scalar \(D\)) remain constant over time. By iterating the first equation from some initial state \(\mathbf{x}_0\), one obtains:
\begin{equation}
\label{eq:state_iteration}
\mathbf{x}_k = A^k \mathbf{x}_0 \;+\;\sum_{n=0}^{k-1}A^{\,k-1-n}\,B\,u_n,
\end{equation}
and substituting into the second equation yields:
\begin{equation}
\label{eq:output_substitution}
y_k = C \Bigl(A^k \mathbf{x}_0 + \sum_{n=0}^{k-1} A^{\,k-1-n} B\,u_n \Bigr) 
\;+\; D\,u_k.
\end{equation}
Ignoring the \(C\,A^k \mathbf{x}_0\) term (for large \(k\) or with \(\|A\|<1\) ensuring stability) leads to:
\begin{equation}
\label{eq:global_convolution}
y_k = \sum_{\ell=0}^{\infty} \bigl(\overline{\mathbf{K}}\bigr)_{\ell} \,u_{k-1-\ell}
\;+\;
D\,u_k,\quad\overline{\mathbf{K}}=
\bigl(C\,B,\;C\,A\,B,\;C\,A^2\,B,\;\ldots\bigr).
\end{equation}
Thus, an LTI SSM is a \textbf{global convolution} of the input sequence \(\{u_n\}\) with kernel \(\overline{\mathbf{K}}\) (plus a direct feedthrough term \(D\,u_k\)).

\paragraph{Global Convolution is Frequency-domain Linear Mapping}

Since a global convolution is the central time-domain operation of an LTI SSM, we next show that this same operation translates into a simple, elementwise linear mapping in the frequency domain. Concretely, let \(\mathbf{H}(v)\), \(W(v)\), and \(B(v)\) be discrete-time sequences indexed by \(v \in \mathbb{Z}\) (the time index). Think of \(\mathbf{H}(v)\) as the convolution kernel (e.g., the impulse response), \(W(v)\) as the input signal, and \(B(v)\) as a bias or aforementioned “direct feedthrough” term. We denote by
\(\mathcal{H}(f)\), \(\mathcal{W}(f)\), and \(\mathcal{B}(f)\)
their corresponding frequency-domain representations (e.g., via the Discrete Fourier Transform). In the time domain, the output of interest is:
\begin{equation}
\label{eq:time_domain_output}
\mathbf{H}(v)*W(v) \;+\; B(v).
\end{equation}
By the Convolution Theorem \citep{oppenheim99}, we have:
\begin{equation}
\label{eq:convolution_theorem}
\mathcal{F}\Bigl(\mathbf{H}(v)*W(v)\Bigr) 
= 
\mathcal{H}(f)\,\mathcal{W}(f)
\;\text{and}\;
\mathcal{F}\bigl(B(v)\bigr)
=
\mathcal{B}(f).
\end{equation}
Therefore,
\begin{equation}
\label{eq:time_freq_equivalence}
\mathbf{H}(v)*W(v) + B(v)
\;\longleftrightarrow\;
\mathcal{H}(f)\,\mathcal{W}(f) + \mathcal{B}(f).
\end{equation}
Combining these observations, we see that LTI SSMs---being global convolutions in the time domain---correspond precisely to linear operations in the frequency domain. In other words, under our definition of FRL-based forecaster as a frequency-domain linear mapping, LTI SSMs are indeed FRL-based forecasters. \textbf{This novel perspective provides additional insight into the efficacy of joint forecasting while backcasting training for SSM-based time series forecasters, as it aligns naturally with the frequency-domain interpretation of linear operators.}

\vspace{-2mm}
\section{Experimental Details}
\label{appendix:experimentaldetails}
\vspace{-2mm}
\subsection{Datasets}
\label{appendix:Datasets}
\vspace{-2mm}
\label{appendix:dataset}
We evaluate WaveTS using 10 real-world datasets, with detailed information provided in Table \ref{tab:longterm_dataset} for long-term series forecasting and Table \ref{tab:shortterm_dataset} for short-term series forecasting. Specifically, the ``Dimension'' column indicates the number of variables in each dataset. ``Forecasting Length'' specifies the number of future time points to forecast, with four different forecasting horizons included for each dataset. ``Dataset Size'' represents the total number of time points in the training, validation, and test splits. ``Information (Frequency)'' denotes the domain and the sampling interval of the time series. Specifically, \textbf{ETT} comprises seven electricity transformer variables collected from July 2016 to July 2018. We utilize four subsets: ETTh1 and ETTh2 are sampled hourly, while ETTm1 and ETTm2 have 15-minute intervals. \textbf{Exchange} includes daily exchange rate data from eight countries spanning 1990 to 2016. \textbf{Weather} consists of 21 meteorological factors recorded every 10 minutes at the Weather Station of the Max Planck Bio-geochemistry Institute in 2020. \textbf{Electricity} captures hourly electricity consumption data from 321 clients. \textbf{Traffic} contains hourly road occupancy rates measured by 862 sensors on San Francisco Bay area freeways from January 2015 to December 2016. Additionally, we incorporate the \textbf{PEMS03}, \textbf{PEMS04}, \textbf{PEMS07}, and \textbf{PEMS08} datasets, which are traffic datasets from the California Transportation Agencies, capturing various aspects of traffic flow and congestion. The \textbf{M4} dataset comprises 100,000 time series from diverse domains, including finance, economics, and demographics, providing a comprehensive benchmark for evaluating forecasting methods across multiple scales and frequencies. Datasets are provided from \citep{tcn1timesnet, haoyietal-informer-2021}. All datasets undergo the same preprocessing steps and are split into training, validation, and test sets following the chronological order to prevent data leakage, as outlined in TimesNet \cite{tcn1timesnet}.

\begin{table}[!ht]
\centering
\caption{Detailed dataset descriptions in long-term forecasting.}
\vspace{-2mm}
\small
\begin{tabular}{@{} c c c c c @{}}
\toprule
\textbf{Dataset} & {\textbf{Dimension}} & \textbf{Forecasting Length} & \textbf{Dataset Size} & \textbf{Information (Frequency)} \\
\midrule
ETTm1 & 7 & \{96, 192, 336, 720\} & (34369, 11425, 11425) & Electricity (15 min) \\
ETTh1 & 7 & \{96, 192, 336, 720\} & (8445, 2785, 2785) & Electricity (Hourly) \\
ETTm2 & 7 & \{96, 192, 336, 720\} & (34369, 11425, 11425) & Electricity (15 min) \\
ETTh2 & 7 & \{96, 192, 336, 720\} & (8545, 2881, 2881) & Electricity (Hourly) \\
Exchange & 8 & \{96, 192, 336, 720\} & (5120, 665, 1422) & Exchange rate (Daily) \\
Weather  & 21 & \{96, 192, 336, 720\} & (36696, 5175, 10440) & Weather (10 min) \\
Electricity & 321 & \{96, 192, 336, 720\} & (18221, 2537, 5165) & Electricity (Hourly) \\
Traffic  & 862 & \{96, 192, 336, 720\} & (12089, 1661, 3413) & Transportation (Hourly) \\
\bottomrule
\end{tabular}
\label{tab:longterm_dataset}
\end{table}

\begin{table}[!ht]
\centering
\caption{Detailed dataset descriptions in short-term forecasting.}
\small
\vspace{-2mm}
\begin{tabular}{@{} c c c c c @{}}
\toprule
\textbf{Dataset} & {\textbf{Dimension}} & \textbf{Forecasting Length} & \textbf{Dataset Size} & \textbf{Information (Frequency)} \\
\midrule
PEMS03  & 358 & \{12, 24\} & (15617, 5135, 5135) & Transportation (5min) \\
PEMS04  & 307 & \{12, 24\} & (10172, 3375, 3375) & Transportation (5min) \\
PEMS07  & 883 & \{12, 24\} & (16911, 5622, 5622) & Transportation (5min) \\
PEMS08  & 170 & \{12, 24\} & (10690, 3548, 3548) & Transportation (5min) \\
M4-Yearly  & 1 & 6 & (23000, 0, 23000) & Various Domain Fusion\\
M4-Quarterly  & 1 & 8 & (24000, 0, 24000) & Various Domain Fusion \\
M4-Monthly & 1 & 18 & (48000, 0, 48000) & Various Domain Fusion \\
M4-Weekly  & 1 & 13 & (359, 0, 359) & Various Domain Fusion \\
M4-Daily  & 1 & 14 & (4227, 0, 4227) & Various Domain Fusion \\
M4-Hourly  & 1 & 58 & (414, 0, 414) & Various Domain Fusion \\
\bottomrule
\end{tabular}
\label{tab:shortterm_dataset}
\end{table}

\subsection{Baselines}
\label{appendix:Baselines}

\textbf{FITS~\cite{xu2023fits}:} FITS is a lightweight model for time series analysis that operates by interpolating time series in the complex frequency domain. This approach enables accurate forecasting and anomaly detection with only about 10k parameters, making it suitable for edge devices. We download the official code from: \href{https://github.com/VEWOXIC/FITS}{https://github.com/VEWOXIC/FITS}.

\textbf{DeRiTS~\cite{derits}:} DeRiTS addresses distribution shifts in non-stationary time series by utilizing the entire frequency spectrum through Frequency Derivative Transformation (FDT) and an Order-adaptive Fourier Convolution Network, enhancing forecasting performance and shift alleviation. Due to the absence of an open-source implementation of DeRiTS, we undertake the task of reproducing the code independently to replicate all experiments. 

\textbf{TexFilter~\cite{filternet}:} TexFilter enhances Transformer-based forecasting by incorporating learnable frequency filters that selectively pass or attenuate signal components. This method improves robustness to high-frequency noise and leverages the full frequency spectrum for more accurate predictions. We download the official code from: \href{https://github.com/aikunyi/FilterNet}{https://github.com/aikunyi/FilterNet}.

\textbf{WPMixer~\cite{murad2024wpmixerefficientmultiresolutionmixing}: }In WPMixer,  multi‑resolution wavelet transform extracts joint time–frequency features; the resulting sub‑series are patched and embedded to extend history and preserve local patterns. An MLP‑Mixer then globally mixes the patches, enabling the model to exploit both local and global information. However, a careful review of its official code revealed a fatal \textbf{drop last} bug (see line 23 in data\_factory.py in \href{https://github.com/Secure-and-Intelligent-Systems-Lab/WPMixer}{https://github.com/Secure-and-Intelligent-Systems-Lab/WPMixer}). Specifically, they excluded a significant portion of test data—especially with large batch sizes (128 and 256 in their experiments), resulting in unfair comparisons. Therefore, we fix the bug and rerun their experiments following the reported configurations.

\textbf{TimeMixer~\cite{wang2023timemixer}:} TimeMixer regards a time series as a set of disentangled fine‑scale (seasonal) and coarse‑scale (trend) signals, then fuses them bidirectionally: Past‑Decomposable‑Mixing (PDM) mixes seasonal and trend components from fine to coarse and coarse to fine, while Future‑Multipredictor‑Mixing (FMM) ensembles multiple predictors across those scales. We download the code from: \href{https://github.com/kwuking/TimeMixer}{https://github.com/kwuking/TimeMixer} for short-term forecasting.

\textbf{TimeMixer++~\cite{wang2025timemixer++}:} TimeMixer++ is a versatile model that excels at various time series tasks by effectively capturing and extracting complex patterns across multiple time scales and frequency resolutions. Through techniques like multi-resolution time imaging and various mixing strategies, TimeMixer++ achieves state-of-the-art performance, outperforming both general-purpose and task-specific models in forecasting, classification, and other time series analyses. Although an GitHub repository implementing TimeMixer++ is available at \href{https://anonymous.4open.science/r/TimeMixerPP}{https://anonymous.4open.science/r/TimeMixerPP}
, rerunning the code does not reproduce the results reported in the paper. Therefore, we adopt the results directly from the paper only for long-term forecasting task. For Table~\ref{tab:efficiency}, however, we rerun the provided code to obtain the reported efficiency metrics.

% However, when reproducing the results based on the official implementation: \href{https://anonymous.4open.science/r/TimeMixerPP}{https://anonymous.4open.science/r/TimeMixerPP}, it fails to obtain the result as the paper provide.

% \textbf{TimeFilter~\cite{hu2025timefilter}} TimeFilter is a GNN-based framework for time series forecasting that addresses the limitations of Channel Independent and Channel Dependent methods by adaptively modeling fine-grained dependencies. Instead of overlooking covariates or capturing all dependencies indiscriminately, it refines spatial-temporal relationships through patch-specific filtering, removing irrelevant correlations while retaining critical ones. We download the official code from: \href{https://github.com/TROUBADOUR000/TimeFilter}{https://github.com/TROUBADOUR000/TimeFilter}.

\textbf{iTransformer~\cite{trans1itransformer}:} iTransformer reconfigures standard Transformer models for time series forecasting by switching input dimensions and leveraging attention mechanisms across variate tokens. This adaptation improves the model's capability to learn temporal patterns. We download the official code from: \href{https://github.com/thuml/iTransformer}{https://github.com/thuml/iTransformer}.

\textbf{PatchTST~\cite{trans2PatchTST}:} PatchTST modifies Transformer architectures with patch-based processing and a channel-independent framework, optimizing performance for long-term forecasting tasks. Its design is particularly effective for capturing intricate temporal structures. We download the official code from: \href{https://github.com/PatchTST}{https://github.com/PatchTST}.

\textbf{TimesNet~\cite{tcn1timesnet}:} TimesNet introduces a task-agnostic framework that reshapes one-dimensional time series into two-dimensional tensors, enabling it to address multi-periodic patterns. We download the official code from: \href{https://github.com/thuml/Time-Series-Library}{https://github.com/thuml/Time-Series-Library}.

% \textbf{FEDformer~\cite{zhou2022fedformer}:} FEDformer incorporates a novel attention mechanism operating in the frequency domain, combining low-rank approximations and mixed decomposition techniques to reduce distribution shifts and enhance forecasting accuracy. We download the official code from: \href{https://github.com/MAZiqing/FEDformer}{https://github.com/MAZiqing/FEDformer}.

\textbf{DLinear \cite{dlinear}:} DLinear questions the suitability of Transformers for long-term time series forecasting and proposes a simple one-layer linear model (LTSF-Linear) as a baseline. Surprisingly, LTSF-Linear consistently outperforms sophisticated Transformer-based models across nine real-world datasets, highlighting the need to rethink research directions in time series analysis. We download the official code from: \href{https://github.com/cure-lab/LTSF-Linear}{https://github.com/cure-lab/LTSF-Linear} for long-term forecasting.

\vspace{-1mm}

\subsection{Implementation Details}
\label{appendix:Implementation Details}
\vspace{-2mm}
All experiments are implemented in PyTorch~\citep{paszke2019pytorch} and executed on a single NVIDIA RTX\,A6000 GPU (40\,GB). Optimization is carried out with Adam~\citep{kingma2017adam}; the initial learning rate~$\eta$ is sampled log‑uniformly from $[10^{-5},10^{-2}]$, the loss defined in Section~\ref{loss} is minimized for training with early stop mechanism, and the batch size is tuned over $\{16,32,64,128\}$. Following FITS almost entirely, we use optuna \citep{akiba2019optuna} to carefully tune three hyper‑parameters—look‑back window length from $[192,1440]$, Wavelet decomposition scale $s\!\in\!\{1,2,3,4,5\}$, and Wavelet derivative order $k\!\in\!\{1,2,3,4,5\}$—choosing the best $L$ on the validation set to prevent information leakage, because longer contexts (e.g., $L\!=\!336$) typically improve most models whereas iTransformer~\citep{trans1itransformer} and FreTS~\citep{frets} are comparatively insensitive. After fixing the optimal hyper‑parameters, we retrain the model and report its test performance, while all baselines are re‑run with their officially recommended configurations to ensure a fair comparison.

\textbf{Long-Term Forecasting:} We employ a range of metrics specifically designed for the tasks at hand. In particular, for the ground truth \(\mathbf{X}\) and predictions \(\widehat{\mathbf{X}}\) of size \(H \times C\), the MSE and MAE are defined as follows:
\[
    \text{MSE} = \frac{1}{HC}\sum_{i=1}^{H}\sum_{c=1}^{C}\bigl(\mathbf{X}_{i,c} -\widehat{\mathbf{X}}_{i,c}\bigr)^2, \; \text{MAE} = \frac{1}{HC}\sum_{i=1}^{H}\sum_{c=1}^{C}\bigl|\mathbf{X}_{i,c} -\widehat{\mathbf{X}}_{i,c}\bigr|.
\]

\textbf{Short-Term Forecasting:} We utilize the symmetric mean absolute percentage error (SMAPE), mean absolute scaled error (MASE), and overall weighted average (OWA). Let \(\mathbf{X}, \widehat{\mathbf{X}} \in \mathbb{R}^{H\times C}\) be the ground truth and predictions, and \(\mathbf{X}_{i}\) the \(i\)-th future time point, then:
\[
\begin{aligned}
\text{SMAPE} &= \frac{200}{H} \sum_{i=1}^{H}
               \frac{\lvert \mathbf{X}_{i}-\widehat{\mathbf{X}}_{i}\rvert}
                    {\lvert\mathbf{X}_{i}\rvert+\lvert\widehat{\mathbf{X}}_{i}\rvert},\\[2pt]
\text{MASE}  &= \frac{1}{H} \sum_{i=1}^{H}
               \frac{\lvert \mathbf{X}_{i}-\widehat{\mathbf{X}}_{i}\rvert}
                    {\displaystyle\frac{1}{H-m}\sum_{j=m+1}^{H}
                       \bigl\lvert \mathbf{X}_{j}-\mathbf{X}_{j-m}\bigr\rvert},\\[2pt]
\text{OWA}   &= \tfrac12\!\left(
               \frac{\text{SMAPE}}{\text{SMAPE}_{\mathrm{\text{Na\"ive2}}}} \;+\;
               \frac{\text{MASE}}{\text{MASE}_{\mathrm{\text{Na\"ive2}}}}\right).
\end{aligned}
\]

where \(m\) is the data periodicity. OWA is a specialized metric employed in the M4 competition \citep{makridakis2018m4}, which facilitates the comparison of SMAPE and MASE against a Naïve2 baseline.

\section{Additional Results}

\subsection{Long-term Forecasting}

\label{appendix:longtermforecasting}
Table~\ref{tab:long_term_appendix} presents the long-term forecasting performance of our proposed WaveTS model compared to various contemporary methodologies across multiple datasets and forecasting horizons (96, 192, 336, 720). The reported results are averaged over 5 runs with different random seeds (see standard deviations in Table \ref{tab:std_error}). \textbf{Two-tailed paired Student’s t-tests (\(\mathbf{\alpha = 0.05}\)) applied to the per-run metrics show that WaveTS’s improvements compared with previous baselines are statistically significant for most datasets and forecasting horizons (all $\mathbf{p < 0.05}$).} Specifically, WaveTS consistently outperforms existing FRL-based forecasters, such as FITS \citep{xu2023fits} and DeRiTS \citep{derits} achieving the best MSE and MAE in most datasets. Across all datasets, WaveTS maintains the lowest overall average MSE of \textbf{0.314} and MAE of \textbf{0.338}, demonstrating a substantial improvement of approximately 5.1\% in MSE compared to the next best model, FITS \citep{xu2023fits}. Additionally, WaveTS achieves top performance in the ETT datasets, further highlighting its robust ability to model complex temporal patterns. 

% These results underscore WaveTS's superior effectiveness and versatility as a SOTA solution for long-term time series forecasting.

\begin{table*}[!ht]
    \centering
    \caption{Full long-term forecasting results. \textcolor{red}{Red}: the best results, \textcolor{blue}{Blue}: the second best.} 
    \vspace{-3mm}
    \scalebox{0.55}{
    \begin{tabular}{c|c|cc|cc|c c|c c|c c|c c|c c |c c |c c }
    \toprule[1.5pt]
    % 第1行（大类标题）不变
    \multicolumn{2}{c|}{\multirow{3}{*}{Models}} & \multicolumn{10}{c|}{\scalebox{1.05}{\textbf{\textit{FRL-based Forecasters}}}} & \multicolumn{8}{c}{\scalebox{1.2}{\textbf{\textit{Other Forecasters}}}} \\
    \noalign{\vskip 0.5ex} 
    \cline{3-20}
    \noalign{\vskip 0.5ex} 
    \multicolumn{2}{c|}{}  
    & \multicolumn{2}{c}{\shortstack{WaveTS\\(\textbf{Ours})}}
    & \multicolumn{2}{c}{\shortstack{FITS\\\citeyearpar{xu2023fits}}}
    & \multicolumn{2}{c}{\shortstack{DeRiTS\\\citeyearpar{derits}}}
    & \multicolumn{2}{c}{\shortstack{WPMixer\\\citeyearpar{murad2024wpmixerefficientmultiresolutionmixing}}}
    & \multicolumn{2}{c}{\shortstack{TexFilter\\\citeyearpar{filternet}}}
    & \multicolumn{2}{c}{\shortstack{TimeMixer++\\\citeyear{wang2025timemixer++}}}
    & \multicolumn{2}{c}{\shortstack{iTransformer\\\citeyearpar{trans1itransformer}}}
    & \multicolumn{2}{c}{\shortstack{PatchTST\\\citeyearpar{trans2PatchTST}}}
    & \multicolumn{2}{c}{\shortstack{DLinear\\\citeyearpar{dlinear}}} \\
    \cmidrule(lr){1-2}\cmidrule(lr){3-4}\cmidrule(lr){5-6}\cmidrule(lr){7-8}\cmidrule(lr){9-10}\cmidrule(lr){11-12}\cmidrule(lr){13-14}\cmidrule(lr){15-16}\cmidrule(lr){17-18}\cmidrule(l){19-20}
    \multicolumn{2}{c|}{Metrics} 
    & MSE & MAE & MSE & MAE & MSE & MAE & MSE & MAE & MSE & MAE & MSE & MAE & MSE & MAE & MSE & MAE & MSE & MAE \\

       % \cmidrule(lr){1-2}\cmidrule(lr){3-4}\cmidrule(lr){5-6}\cmidrule(lr){7-8}\cmidrule(lr){9-10}\cmidrule(lr){11-12}\cmidrule(lr){13-14}\cmidrule(lr){15-16}\cmidrule(lr){17-18}\cmidrule(l){19-20}
       % \multicolumn{2}{c|}{Metrics} 
           % & MSE & MAE 
           % & MSE & MAE 
           % & MSE & MAE 
           % & MSE & MAE 
           % & MSE & MAE 
           % & MSE & MAE 
           % & MSE & MAE 
           % & MSE & MAE
           % & MSE & MAE
           %  \\
       \midrule[1pt]
             \multirow{5}*{\rotatebox{90}{ETTm1}}
             & 96  & \textcolor{red}{\textbf{0.301}} & \textcolor{red}{\textbf{0.344}} & \textcolor{blue}{\textbf{0.306}} & \textcolor{blue}{\textbf{0.348}} & 0.691 & 0.541 & 0.309 & 0.346
             & 0.321 & 0.361 
             & 0.313 & 0.354
             &0.334 &0.368 &0.329 &0.367 
             & 0.346 & 0.374  
             \\
             
             & 192 & \textcolor{red}{\textbf{0.338}} & \textcolor{red}{\textbf{0.365}} & \textcolor{blue}{\textbf{0.340}} & \textcolor{blue}{\textbf{0.369}} & 0.708 & 0.550 & 0.350 & \textcolor{blue}{\textbf{0.369}} 
             & 0.367 & 0.387 & 0.356 & 0.380
             & 0.377 & 0.391 & 0.367 &0.385 
             & 0.382 & 0.391
             \\
             
             & 336 & \textcolor{red}{\textbf{0.367}} & \textcolor{red}{\textbf{0.384}} & \textcolor{blue}{\textbf{0.373}} & \textcolor{blue}{\textbf{0.388}}  & 0.719 & 0.558 & 0.372 & 0.394 & 0.401 & 0.409 & 0.386 & 0.402
             & 0.426 & 0.420 &0.399 &0.410 
             &0.415& 0.415 
              \\
             
             & 720 & \textcolor{red}{\textbf{0.416}} & \textcolor{red}{\textbf{0.412}} & \textcolor{blue}{\textbf{0.424}} & \textcolor{blue}{\textbf{0.419}} & 0.742 & 0.572 & 0.430 & 0.422 & 0.477 &0.448 & 0.452 & 0.437
             & 0.491 & 0.459 & 0.454 & 0.439
             &0.473 & 0.451 
             \\
             \cmidrule(lr){2-20}
             & Avg & \textcolor{red}{\textbf{0.356}} & \textcolor{red}{\textbf{0.376}} & \textcolor{blue}{\textbf{0.361}} & \textcolor{blue}{\textbf{0.381}} & 0.715 & 0.555 & 0.365 & 0.383 & 0.391 & 0.401
             & 0.377 & 0.393
               & 0.407 & 0.410 & 0.387 & 0.400 
               & 0.404 & 0.408  
               \\
             \midrule[1pt]
             
             \multirow{5}*{\rotatebox{90}{ETTm2}} 
             & 96 & \textcolor{red}{\textbf{0.162}} & \textcolor{blue}{\textbf{0.252}} & \textcolor{blue}{\textbf{0.165}} & 0.256 & 0.227 & 0.308 & 0.170 & 0.254 & 0.175 & 0.258 & 0.170 & \textcolor{red}{\textbf{0.245}}
             & 0.180 & 0.264 & 0.175 & 0.259 
             & 0.193 &0.293 
             \\
             
             & 192 & \textcolor{red}{\textbf{0.215}} & \textcolor{blue}{\textbf{0.292}} & \textcolor{blue}{\textbf{0.219}} & 0.294 & 0.284 & 0.338 & 0.228 & 0.293 & 0.240 & 0.301 & 0.229 & \textcolor{red}{\textbf{0.291}}
             &0.250 &0.309 &0.241 &0.302 
             &0.284 & 0.361 
             \\
             
             & 336 & \textcolor{red}{\textbf{0.263 }} & \textcolor{red}{\textbf{0.326 }} & \textcolor{blue}{\textbf{0.271}} & \textcolor{blue}{\textbf{0.328}} & 0.339 & 0.370 & 0.290 & 0.330 & 0.311 & 0.347 & 0.303 & 0.343     
             &0.311 &0.348 &0.305 &0.343 
             &0.382 &0.429 
             \\
             
             & 720 & \textcolor{red}{\textbf{0.335 }} & \textcolor{red}{\textbf{0.373 }} & \textcolor{blue}{\textbf{0.352}} & \textcolor{blue}{\textbf{0.382}}  & 0.434 & 0.419 & 0.367 & 0.390 & 0.414 & 0.405 & 0.373 & 0.399
             &0.412 &0.407 &0.402 &0.400 
             &0.558 & 0.525 
             \\
             \cmidrule(lr){2-20}
             & Avg & \textcolor{red}{\textbf{0.244}} & \textcolor{red}{\textbf{0.311}} & \textcolor{blue}{\textbf{0.252}} & \textcolor{blue}{\textbf{0.315}} & 0.321 & 0.359 & 0.264 & 0.317  & 0.285 & 0.328
             & 0.269 & 0.320
             & 0.288 & 0.332 & 0.281 & 0.326 
             & 0.354 & 0.402
             \\
             \midrule[1pt]
             
             \multirow{5}*{\rotatebox{90}{ETTh1}} 
             & 96 & \textcolor{blue}{\textbf{0.367}} & \textcolor{red}{\textbf{0.391}}& 0.374 &  0.396  & 0.625 & 0.531 & 0.368 & \textcolor{blue}{\textbf{0.394}} & 0.382 & 0.402 & \textcolor{red}{\textbf{0.361}} & 0.403
             &0.386 &0.405 &0.374 &0.399 
             & 0.397 & 0.412
               \\
             
             & 192 & \textcolor{red}{\textbf{0.404}} & \textcolor{red}{\textbf{0.414}} & \textcolor{blue}{\textbf{0.407}} & \textcolor{blue}{\textbf{0.416}} & 0.665 & 0.550 & 0.419 & 0.419 & 0.430 &0.429 &  0.416  & 0.441
             &0.441 &0.436 &0.417 &0.422 
             &0.446 & 0.441 
                \\
             
             & 336 & \textcolor{red}{\textbf{0.427}} & \textcolor{red}{\textbf{0.432}} & \textcolor{blue}{\textbf{0.430}} & 0.436 & 0.710 & 0.574 & 0.438 & \textcolor{blue}{\textbf{0.433}} & 0.472 &0.451 & \textcolor{blue}{\textbf{0.430}} & 0.434
             &0.487 &0.458 & 0.431 & 0.436 
             &0.489 & 0.467
              \\
             
             & 720 & \textcolor{blue}{\textbf{0.440}} & \textcolor{blue}{\textbf{0.455}} & \textcolor{red}{\textbf{0.435}} & 0.458  & 0.730 & 0.608 & 0.446 & 0.460 & 0.481 & 0.473 & 0.467 & \textcolor{red}{\textbf{0.451}}
             & 0.503 & 0.491 & 0.445 & 0.463 
             & 0.513 & 0.510  
              \\
              \cmidrule(lr){2-20}
             & Avg & \textcolor{red}{\textbf{0.410 }} & \textcolor{red}{\textbf{0.423}} & \textcolor{blue}{\textbf{0.412}} & \textcolor{blue}{\textbf{0.427}} & 0.682 & 0.566 & 0.418 & 0.427 & 0.441 & 0.439
             & 0.419 & 0.432
               & 0.454 & 0.448 & 0.417 & 0.435
               & 0.461 & 0.457
               \\
             
             \midrule[1pt]
             \multirow{5}*{\rotatebox{90}{ETTh2}} 
             & 96 & \textcolor{red}{\textbf{0.267}} & \textcolor{blue}{\textbf{0.333}} & \textcolor{blue}{\textbf{0.273}} & 0.339 & 0.380 & 0.400 & 0.281 &  0.336  & 0.293 &0.343 & 0.276 & \textcolor{red}{\textbf{0.328}}
             &0.297 &0.349 &0.302 &0.348 
             &0.340  & 0.394 
             \\
             
             & 192 & \textcolor{red}{\textbf{0.332}} & \textcolor{red}{\textbf{0.375}} & \textcolor{blue}{\textbf{0.334}} & \textcolor{blue}{\textbf{0.377}} & 0.442 & 0.435 & 0.350 & 0.380 & 0.374 & 0.396 & 0.342 & 0.379
             &0.380 &0.400 &0.388 &0.400 
             &0.482 & 0.479
              \\
             
             & 336 & \textcolor{blue}{\textbf{0.349}} & \textcolor{red}{\textbf{0.396}} &  0.356  & \textcolor{blue}{\textbf{0.398}} & 0.465 & 0.461 & 0.374 & 0.405 & 0.417 &0.430 & \textcolor{red}{\textbf{0.346}} & \textcolor{blue}{\textbf{0.398}}
             &0.428 &0.432 &0.426 &0.433 
             & 0.591 & 0.541
             \\
             
             & 720 & \textcolor{red}{\textbf{0.380}} &  0.428  & \textcolor{blue}{\textbf{0.384}} & \textcolor{blue}{\textbf{0.427}} & 0.452 & 0.459 & 0.412 & 0.432 & 0.449 &0.460 & 0.392 & \textcolor{red}{\textbf{0.415}}
             &0.427 &0.445 &0.431 &0.446 
             &0.839 & 0.661   
              \\
              \cmidrule(lr){2-20}
             & Avg & \textcolor{red}{\textbf{0.332}} & \textcolor{blue}{\textbf{0.383}} & \textcolor{blue}{\textbf{0.337}} & 0.385 & 0.435 & 0.439 & 0.354 & 0.388 & 0.383 & 0.407
             & 0.339 & \textcolor{red}{\textbf{0.380}}
               & 0.383 & 0.407 & 0.387 & 0.407 
              & 0.563 & 0.519 
               \\
             \midrule[1pt]
             
    \multirow{5}*{\rotatebox{90}{Electricity}} 
           & 96 & \textcolor{red}{\textbf{0.131}} & \textcolor{blue}{\textbf{0.227}} &  0.145  & 0.242 & 0.275 & 0.362 & 0.148 & 0.240 & 0.147 & 0.245
           & \textcolor{blue}{\textbf{0.135}} & \textcolor{red}{\textbf{0.222}}
           & 0.148 &  0.240 
           & 0.181 & 0.270 & 0.210 & 0.302 
            \\
           
           & 192 & \textcolor{red}{\textbf{0.146 }} & \textcolor{blue}{\textbf{0.240 }} &  0.157  & 0.252 & 0.277 & 0.364 & 0.161 &  0.250  & 0.160 & 0.251
           & \textcolor{blue}{\textbf{0.147}} & \textcolor{red}{\textbf{0.235}}
           &0.162 &0.253 
           &0.188 &0.274 & 0.210 & 0.305
            \\
           
           & 336 & \textcolor{red}{\textbf{0.162}} & \textcolor{blue}{\textbf{0.256}} & 0.174 & 0.269 & 0.291 & 0.376 & 0.177 &  0.265 & 0.173 & 0.267
           & \textcolor{blue}{\textbf{0.164}} & \textcolor{red}{\textbf{0.245}}
           & 0.178 & 0.269 
           & 0.204 & 0.293 & 0.223 & 0.319 
             \\
           
           & 720 & \textcolor{red}{\textbf{0.200}} & \textcolor{red}{\textbf{0.288}} & 0.213 & \textcolor{blue}{\textbf{0.301}} & 0.329 & 0.402 & 0.215 & 0.302 &\textcolor{blue}{\textbf{0.210}} & 0.309
           & 0.212 & 0.310
           &0.225 &0.317 
           &0.246 &0.324 &  0.258 & 0.350
             \\
           \cmidrule(lr){2-20}
           & Avg & \textcolor{red}{\textbf{0.160}} & \textcolor{red}{\textbf{0.253}} & 0.172 & \textcolor{blue}{\textbf{0.266}} & 0.293 & 0.376 & 0.175 & 0.264 &  0.172  & 0.268
           & \textcolor{blue}{\textbf{0.165}} & \textcolor{red}{\textbf{0.253}} 
           & 0.178 & 0.270 & 0.205 & 0.290
           & 0.225 & 0.319
           \\
           \midrule[1pt]
           
           \multirow{5}*{\rotatebox{90}{Exchange}} 
           & 96 & \textcolor{blue}{\textbf{0.086}} & \textcolor{red}{\textbf{0.204}} & 0.109 & 0.235 & 0.143 & 0.271 & 0.102 & 0.220 &0.091 & 0.211 
           & \textcolor{red}{\textbf{0.085}} & 0.214
           & 0.088 & 0.208 &0.089 &\textcolor{blue}{\textbf{0.205}} 
           & 0.088 & 0.218
        \\
           
           & 192 & 0.177 & 0.300 & 0.229 & 0.350 & 0.240 & 0.355 & 0.202 & 0.310 & 0.186 &0.305 
           & \textcolor{red}{\textbf{0.175}} & 0.313
           &  0.178 &\textcolor{red}{\textbf{0.298}} & 0.179 & \textcolor{blue}{\textbf{0.299}} 
           & \textcolor{blue}{\textbf{0.176}} & 0.315  
            \\
           
           & 336 &  0.322  & \textcolor{red}{\textbf{0.411}} & 0.400 & 0.463 & 0.387 & 0.456 & 0.360 & 0.433 & 0.380 &0.449  
           & \textcolor{blue}{\textbf{0.316}} & 0.420
           & 0.331 &\textcolor{blue}{\textbf{0.417}} & 0.366 & 0.435 
           & \textcolor{red}{\textbf{0.313}} & 0.427 
             \\
           
           & 720 &  0.860 & \textcolor{blue}{\textbf{0.693}} & 1.095 & 0.781 & 0.940 & 0.938 & 1.041 & 0.923 & 0.896 &0.712 
           & \textcolor{blue}{\textbf{0.851}} & \textcolor{red}{\textbf{0.689}}
           & 0.852 & 0.698  
           &0.901 &0.714 
           &\textcolor{red}{\textbf{0.839}} & 0.695 \\

           \cmidrule(lr){2-20}
           & Avg & 0.361 & \textcolor{blue}{\textbf{0.402}} & 0.458 &  0.457  & 0.427 & 0.505 &  0.426 & 0.471  & 0.388 & 0.421 & \textcolor{blue}{\textbf{0.357}} & \textcolor{red}{\textbf{0.391}}
           & 0.362 & 0.405 & 0.383 & 0.413 
           &  \textcolor{red}{\textbf{0.354}} & 0.414  \\
           
           \midrule[1pt]
           
           \multirow{5}*{\rotatebox{90}{Traffic}} 
           & 96 & \textcolor{red}{\textbf{0.382 }} & \textcolor{blue}{\textbf{0.266}} & 0.401 & 0.280 & 0.961 & 0.542 & 0.431 & 0.312 & 0.430 & 0.294 & \textcolor{blue}{\textbf{0.392}} & \textcolor{red}{\textbf{0.253}}
           & 0.395  & 0.268  
           &0.462 &0.295 & 0.650 & 0.396 
             \\
           
           & 192 & \textcolor{red}{\textbf{0.394}} & \textcolor{blue}{\textbf{0.270}} & 0.415 & 0.286 & 0.973 & 0.547 & 0.411 & 0.310 & 0.452 & 0.307 & \textcolor{blue}{\textbf{0.402}} & \textcolor{red}{\textbf{0.258}}
           & 0.416 & 0.276
           &0.466 &0.296 &0.598& 0.370
             \\
           
           & 336 & \textcolor{red}{\textbf{0.409}} & \textcolor{blue}{\textbf{0.278}} &  0.429   & 0.290  & 0.959 & 0.536 & 0.443 & 0.311 & 0.470 & 0.316 & \textcolor{blue}{\textbf{0.428}} & \textcolor{red}{\textbf{0.263}} & 0.430
            & 0.282 
           &0.482 &0.304 &0.605  & 0.373
            \\
           
           & 720 & \textcolor{blue}{\textbf{0.447}} & \textcolor{blue}{\textbf{0.298}} & 0.468 & 0.308 & 1.010 & 0.556 & 0.505 & 0.329 & 0.498 & 0.323 & \textcolor{red}{\textbf{0.441}} & \textcolor{red}{\textbf{0.282}}
           & 0.466 & 0.305 
           &0.514 &0.322 &0.645 & 0.394 
          \\

           \cmidrule(lr){2-20}
           & Avg & \textcolor{red}{\textbf{0.408}} & \textcolor{blue}{\textbf{0.278 }} & 0.428 & 0.291 & 0.976 & 0.545 & 0.448 & 0.316 & 0.462 & 0.310 & \textcolor{blue}{\textbf{0.416}} & \textcolor{red}{\textbf{0.264}}
           & 0.427 & 0.283 & 0.481 & 0.304
           & 0.625 & 0.383 
            \\
           \midrule[1pt]

           \multirow{5}*{\rotatebox{90}{Weather}} 
           & 96  & 0.167 & 0.223  & \textcolor{red}{\textbf{0.145}} & \textcolor{red}{\textbf{0.199}}& 0.216 & 0.270 & 0.168 & \textcolor{blue}{\textbf{0.205}} & 0.162 & 0.207
           & \textcolor{blue}{\textbf{0.155}} &\textcolor{blue}{\textbf{0.205}}
           &0.174 &0.214 &0.177 &0.218 
           &0.195 & 0.252  \\

           & 192 & 0.210  & 0.258  & \textcolor{red}{\textbf{0.190}} & \textcolor{red}{\textbf{0.243}}  & 0.264 & 0.304 &  0.209 & 0.270 & 0.210 &  0.250
           & \textcolor{blue}{\textbf{0.201}} & \textcolor{blue}{\textbf{0.245}}
           &0.221 &0.254 &0.225 &0.259 
           &0.237 & 0.295  \\
           
           & 336 &  0.256  & 0.294  & \textcolor{blue}{\textbf{0.238}} & \textcolor{blue}{\textbf{0.282}} & 0.312 & 0.335 & 0.263 &  0.289 &0.265 & 0.290
           & \textcolor{red}{\textbf{0.237}} & \textcolor{red}{\textbf{0.265}}
           &0.278 &0.296 &0.278 &0.297 
           &0.282  & 0.331 \\
           
           & 720 &  0.315 &  0.336 & \textcolor{red}{\textbf{0.310}} & \textcolor{red}{\textbf{0.332}} & 0.380 & 0.375 & 0.339 & 0.339 & 0.342 & 0.340
           & \textcolor{blue}{\textbf{0.312}} & \textcolor{blue}{\textbf{0.334}}
           &0.358 &0.347 &0.354 &0.348 
           &0.345  & 0.382
           \\
           \cmidrule(lr){2-20}
           & Avg &  0.237 & 0.278 & \textcolor{blue}{\textbf{0.230}}& \textcolor{blue}{\textbf{0.266}} & 0.293 & 0.321 &  0.235  & 0.283 & 0.245 &  0.272
           & \textcolor{red}{\textbf{0.226}} & \textcolor{red}{\textbf{0.262}}
           & 0.258 & 0.278 & 0.259 & 0.280 
           & 0.265 & 0.315   \\
           \midrule[1.0pt]
            \multicolumn{2}{c|} {{Overall Avg}} & \textcolor{red}{\textbf{0.314}} & \textcolor{blue}{\textbf{0.338}} &  0.331  &  0.348  & 0.518 & 0.458 & 0.336 & 0.356  & 0.346 & 0.356 & \textcolor{blue}{\textbf{0.320}}  & \textcolor{red}{\textbf{0.335}}  & 0.344 & 0.354
           & 0.350 & 0.357 & 0.406 & 0.402 
           \\
           \bottomrule[1.5pt]
        \end{tabular}
        }
    \label{tab:long_term_appendix}
    \vspace{-1mm}
\end{table*}

\vspace{-3mm}
\begin{table}[!ht]
\centering
\captionsetup{font=small} 
\vspace{2mm}
\caption{Standard error results of WaveTS in long-term forecasting with the input length of 336.}\label{tab:std_error}
\scalebox{0.9}{
\setlength\tabcolsep{1.5pt}
\begin{tabular}{c|cc| cc| cc| cc| cc| cc| cc| cc}
\toprule
Tasks &\multicolumn{2}{c|}{ETTm1}  &\multicolumn{2}{c|}{ETTm2} &\multicolumn{2}{c|}{ETTh1}  &\multicolumn{2}{c|}{ETTh2}  &\multicolumn{2}{c|}{Weather} &\multicolumn{2}{c|}{Electricity}  &\multicolumn{2}{c|}{Traffic} &\multicolumn{2}{c}{Exchange} \\
 \midrule
Metric&MSE & MAE&MSE & MAE&MSE & MAE&MSE & MAE&MSE & MAE&MSE & MAE&MSE & MAE&MSE & MAE \\
  \midrule
  96    & 2e-3 &1e-3
        &2e-3 &2e-3
        &1e-3 &1e-3
        &1e-3 &1e-3
        &2e-3& 2e-3
        &1e-3 &1e-3 
        &3e-3 &2e-3
        &1e-3 &1e-3\\
  
  192   &2e-3 &1e-3
        &1e-3 &1e-3
        &2e-3 &2e-3
        &1e-3 &1e-3
        &2e-3 &2e-3
        &2e-3 &1e-3
        &3e-3 &2e-3
        &2e-3 &1e-3\\
        
  336   &2e-3 &1e-3
        &2e-3 &2e-3
        &1e-3 &1e-3
        &1e-3 &1e-3
        &2e-3 &2e-3
        &3e-3 &3e-3
        &3e-3& 2e-3
        &3e-3& 2e-3\\
        
  720   &3e-3 &2e-3
        &4e-3 &3e-3
        &3e-3 &2e-3
        &5e-3 &4e-3
        &3e-3 &3e-3
        &4e-3 &4e-3
        &5e-3 &4e-3 
        &3e-3 &3e-3\\
        \midrule
  Avg   &2e-3 &1e-3
        &2e-3 &2e-3
        &2e-3 &1e-3
        &4e-3 &2e-3
        &2e-3 &2e-3
        &2e-3 &2e-3
        &3e-3 &2e-3
        &2e-3 &2e-3\\
  \bottomrule
\end{tabular}
}
\end{table}

% \begin{figure}[!ht]
% \centering
% \includegraphics[width=0.9\linewidth]{}
% \caption{Comparison of MSE reduction of several recent SOTA deep time series forecasting models.}
% \label{fig:recent_improvements}
% \vspace{-5mm}
% \end{figure}

Furthermore, it is worth emphasizing that our 5.1\% reduction in MSE represents a significant leap forward. In many recently proposed forecasting methods \citep{filternet,wang2023timemixer,hu2024attractor,wang2024timexer,liu2024autotimes,cyclenet}, improvements over baselines tend to remain below the 5\%, 
% as illustrated in Figure~\ref{fig:recent_improvements}, 
indicating relatively marginal performance gains. 

% Notably, the values shown in Figure~\ref{fig:recent_improvements} reflect each model’s average MSE improvement (or lack thereof) over its strongest baseline across a variety of datasets and forecasting horizons. In fact, some methods even fail to outperform the baseline when averaged over all experimental settings. \textbf{Surpassing this margin underscores that WaveTS’s improvement is not marginal, highlighting its capacity in long-term forecasting.}

\subsection{Short-term Forecasting}
\label{subsec:short_term_forecasting}

\begin{table*}[!ht]
\vspace{-1mm}
    \centering
    \caption{Short-term forecasting results on PEMS datasets with forecasting horizon $ \tau \in \{12, 24\}$.} 
    \vspace{-2mm}
    \scalebox{0.72}{
    \begin{tabular}{l | c|c c|c c|c c|c c|c c|c c |c c }
    \toprule[1.5pt]
       \multicolumn{2}{c|}{Models}  
       & \multicolumn{2}{c|}{WaveTS}
       & \multicolumn{2}{c|}{FITS}
       & \multicolumn{2}{c|}{DeRiTS}
       & \multicolumn{2}{c|}{WPMixer}
       & \multicolumn{2}{c|}{TexFilter}
       & \multicolumn{2}{c|}{iTransformer} 
       & \multicolumn{2}{c}{PatchTST} 

        \\
       % \cmidrule(r){1-2}\cmidrule(lr){3-4}\cmidrule(lr){5-6}\cmidrule(lr){7-8}\cmidrule(lr){9-10}\cmidrule(lr){11-12}\cmidrule(lr){13-14}\cmidrule(l){15-16}
\midrule
       
       \multicolumn{2}{c|}{Metrics} 
           & MSE & MAE 
           & MSE & MAE 
           & MSE & MAE 
           & MSE & MAE 
           & MSE & MAE 
           & MSE & MAE 
           & MSE & MAE 
            \\
       \midrule[1pt]
             \multirow{4}*{\rotatebox{90}{ \(O=\)  12}}
             & PEMS03  & \textcolor{red}{\textbf{0.075}} & \textcolor{red}{\textbf{0.178}} & 1.041 & 0.844 & 0.239 & 0.370 & 0.083 & 0.192
             & \textcolor{blue}{\textbf{0.076}} & \textcolor{blue}{\textbf{0.186}}
             & 0.083 & 0.194 & 0.097 & 0.214 
             \\
             
             & PEMS04 & \textcolor{blue}{\textbf{0.091}} & \textcolor{red}{\textbf{0.196}} & 1.116 & 0.886 & 0.206 & 0.340 & 0.098 & 0.210 
             & \textcolor{red}{\textbf{0.090}} & 0.201 & 0.101 & 0.211 & 0.105 & 0.224 
               \\
             
             & PEMS07 & \textcolor{red}{\textbf{0.071}} & \textcolor{red}{\textbf{0.174}} & 1.080 & 0.849 & 0.170 & 0.359 & 0.080 & 0.188 & \textcolor{blue}{\textbf{0.073}} & \textcolor{blue}{\textbf{0.177}} 
             & 0.077 & 0.184 & 0.095 & 0.207 
              \\
             
             & PEMS08 & \textcolor{blue}{\textbf{0.091}} & \textcolor{red}{\textbf{0.193}} & 1.375 & 0.976 & 0.276 & 0.395 & 0.096 & 0.206 & \textcolor{red}{\textbf{0.087}} & \textcolor{blue}{\textbf{0.194}}  
             & 0.095 & 0.203 & 0.168 & 0.232
               \\
             \midrule[1pt]
             
             \multirow{4}*{\rotatebox{90}{\(O=\) 24}} 
             & PEMS03 & \textcolor{red}{\textbf{0.107}} & \textcolor{red}{\textbf{0.208}} & 1.204 & 0.915 & 0.327 & 0.432 & 0.126  & 0.240 & \textcolor{blue}{\textbf{0.109}} &0.225          
             &0.126 &0.241 &0.142 &0.259 
               \\
             
             & PEMS04 & \textcolor{red}{\textbf{0.122
             }} & \textcolor{blue}{\textbf{0.228}} & 1.287 & 0.956 & 0.270 & 0.387 & 0.145 & 0.258 & \textcolor{blue}{\textbf{0.127}} & 0.242
             &0.154 &0.266 &0.142 &0.259 
              \\
             
             & PEMS07 & \textcolor{red}{\textbf{0.103}} & \textcolor{red}{\textbf{0.209}} & 1.238 & 0.916 & 0.236 & 0.362 & 0.128 & 0.241 & \textcolor{blue}{\textbf{0.104}} & 0.216       
             &0.123 &0.236 &0.150 &0.262 
              \\
             
             & PEMS08 & \textcolor{blue}{\textbf{0.138}} & \textcolor{red}{\textbf{0.231}} & 1.375 & 0.976 & 0.377 & 0.462 & 0.154 & 0.264 & \textcolor{red}{\textbf{0.128}} & \textcolor{blue}{\textbf{0.237}}
             &0.149 &0.258 &0.226 &0.284            
              \\ 
             \bottomrule[1.5pt]
        \end{tabular}
        }
    \label{tab:short_term_pems}
    \vspace{-1mm}
\end{table*}

\begin{table*}[!ht]
  \caption{Full results for the short-term forecasting task in the M4 dataset.}\label{tab:full_forecasting_results_m4}
  \vspace{-2mm}
  \centering
  \scalebox{0.85}{
  \begin{threeparttable}
  \begin{small}
  \renewcommand{\multirowsetup}{\centering}
  \setlength{\tabcolsep}{1.2pt}
  \begin{tabular}{cc|c|ccccccccccc}
    \toprule
    \multicolumn{3}{c}{Models} & 
    \multicolumn{1}{c}{\textbf{\scalebox{0.8}{WaveTS}}} &
    \multicolumn{1}{c}{\scalebox{0.8}{FITS}} &
    \multicolumn{1}{c}{\scalebox{0.8}{DeRiTS}} &
    \multicolumn{1}{c}{\scalebox{0.8}{WPMixer}} &
    \multicolumn{1}{c}{\scalebox{0.8}{TexFilter}} &
    \multicolumn{1}{c}{\scalebox{0.8}{iTransformer}} &
    \multicolumn{1}{c}{\scalebox{0.8}{PatchTST}} & 
    \multicolumn{1}{c}{\scalebox{0.8}{FEDformer}} & 
    \multicolumn{1}{c}{\scalebox{0.8}{TimesNet}} &
    \multicolumn{1}{c}{\scalebox{0.8}{TimeMixer}} \\
    \toprule
    \multicolumn{2}{c}{\multirow{3}{*}{\rotatebox{90}{\scalebox{0.95}{Yearly}}}}
    & \scalebox{0.8}{SMAPE} &\textcolor{blue}{\textbf{13.766}} & 14.090 &15.695 & 14.001  &14.240 & 15.419 &13.778 &13.838 &13.787 & \textcolor{red}{\textbf{13.365}} \\
    & & \scalebox{0.8}{MASE} &\textcolor{red}{\textbf{2.919}} &3.366 &3.346 & 3.040  &3.109 & 3.218 &2.985 &3.048 &3.134 & \textcolor{blue}{\textbf{2.926}} \\
    & & \scalebox{0.8}{OWA}  &\textcolor{red}{\textbf{0.783}} & 0.808 &0.901 & 0.800 &0.827 & 0.845 & 0.799 & 0.803 &0.812 & \textcolor{blue}{\textbf{0.793}}\\
    \midrule
    \multicolumn{2}{c}{\multirow{3}{*}{\rotatebox{90}{\scalebox{0.95}{Quarterly}}}}
    & \scalebox{0.8}{SMAPE} &\textcolor{blue}{\textbf{10.680}} &11.456 &13.520 & 10.698 &11.364 & 10.699 &10.999 &10.822 &10.704 &\textcolor{red}{\textbf{10.001}}\\
    & & \scalebox{0.8}{MASE} &\textcolor{blue}{\textbf{1.288}} & 1.302 &1.645 & 1.212 &1.328 & 1.289 &1.293 &1.382 &1.390& \textcolor{red}{\textbf{1.168}} \\
    & & \scalebox{0.8}{OWA} &0.955 & 1.020 &1.214 & 0.980 &1.000 &0.958 &\textcolor{red}{\textbf{0.803}} &0.958 &0.990& \textcolor{blue}{\textbf{0.830}}\\
    \midrule
    \multicolumn{2}{c}{\multirow{3}{*}{\rotatebox{90}{\scalebox{0.95}{Monthly}}}}
    & \scalebox{0.8}{SMAPE} &\textcolor{blue}{\textbf{13.379}} & 14.302 &15.985 &  13.623 &14.014 &16.650 &\textcolor{red}{\textbf{12.641}} &14.262 &13.670&13.544 \\
    & & \scalebox{0.8}{MASE} &\textcolor{red}{\textbf{1.016}} & 1.230 & 1.286 & 1.302 &1.052 &1.394 &1.030 &1.102 &1.133& \textcolor{blue}{\textbf{1.020}} \\
    & & \scalebox{0.8}{OWA}  &\textcolor{red}{\textbf{0.942}} & 1.011 &1.159 & 0.950 &0.981 &1.232 &\textcolor{blue}{\textbf{0.946}} &1.012 & 0.978& 0.997\\
    \midrule
    \multicolumn{2}{c}{\multirow{3}{*}{\rotatebox{90}{\scalebox{0.95}{Others}}}}
    & \scalebox{0.8}{SMAPE} &\textcolor{red}{\textbf{4.510}} & 6.011 &6.908 & 5.901 &15.880 &5.543 &5.826 & 5.955 &5.991& \textcolor{blue}{\textbf{4.640}} \\
    & & \scalebox{0.8}{MASE} &\textcolor{red}{\textbf{3.114}} &4.007 &4.507 & 4.001 &11.434 &3.998 &4.610 &3.268 &3.613& \textcolor{blue}{\textbf{3.210}} \\
    & & \scalebox{0.8}{OWA} &\textcolor{blue}{\textbf{1.041}} & 1.231 &1.437 & 1.249  &3.474 &1.224 &1.044 &1.056 &1.045& \textcolor{red}{\textbf{0.990}}\\
    \midrule
    \multirow{3}{*}{\rotatebox{90}{\scalebox{0.95}{Weighted}}}& \multirow{3}{*}{\rotatebox{90}{\scalebox{0.95}{Average}}} & 
    \scalebox{0.8}{SMAPE} &\textcolor{blue}{\textbf{12.442}} & 13.165 & 14.875 & 13.423  &13.525 &14.153 &14.821 &12.850 &12.829& \textcolor{red}{\textbf{11.729}}\\
    & &  \scalebox{0.8}{MASE} &\textcolor{red}{\textbf{1.560}} & 1.900 & 2.007 & 1.690 &2.111 &1.916 &1.690 &1.703 &1.685& \textcolor{blue}{\textbf{1.590}}\\
    & & \scalebox{0.8}{OWA}   &\textcolor{red}{\textbf{0.840}} & 0.901 & 1.073 & 0.904 &1.023 &1.051 &0.901 &0.918 &0.900 & \textcolor{blue}{\textbf{0.850}}\\
    \bottomrule
    \end{tabular}
    \end{small}
  \end{threeparttable}
  }
  \vspace{-1mm}
\end{table*}
 
Table \ref{tab:short_term_pems} summarizes the short-term forecasting on four PEMS datasets with forecasting horizons range from 12 to 24. WaveTS achieves superior performance on these datasets, underscoring its effectiveness in leveraging non-stationary frequency features for enhanced time series forecasting. Table~\ref{tab:full_forecasting_results_m4} showcases the short-term forecasting performance of our proposed WaveTS model in comparison with several state-of-the-art models on the M4 dataset. The evaluation spans multiple forecasting horizons, including Yearly, Quarterly, Monthly, and Others, utilizing metrics such as SMAPE, MASE, and OWA. In the \textbf{Yearly} category, WaveTS achieves the best MASE (\textbf{2.919}) and OWA (\textbf{0.783}) scores, indicating superior accuracy and reliability. These results collectively demonstrate that WaveTS not only maintains competitive performance across various forecasting horizons but also excels in multiple key metrics, validating its strong modeling capabilities for different time series.

\vspace{-2mm}

\subsection{The Effectiveness of Backcasting while Forecasting}
\vspace{-3mm}
Table \ref{tab:appendix:backcast_forecast} compares each model trained only on the forecast window (F) with the same model trained to reconstruct the past and forecast the future (BF). MS is MambaSimple \citep{gu2024mambalineartimesequencemodeling} for short). For every frequency-representation learner: WaveTS, FITS \citep{xu2023fits} and DeRiTS \citep{derits}, adding the backcast term consistently lowers both MSE and MAE at all four horizons on every dataset.
The same trend holds for the linear state-space model MambaSimple \citep{gu2024mambalineartimesequencemodeling}, whose frequency-domain stem is a diagonal (element-wise) linear filter; details and the proof that such SSMs are FRL-equivalent are given in Appendix \ref{proof_ssm_are_frlbased}. The gain comes from an architectural match: FRL modules apply invertible, element-wise linear transforms in the frequency domain, so the transformed spectrum still contains a recoverable copy of the original coefficients. Training with BF forces the network to exploit that embedded history, providing a richer supervisory signal, stabilizing optimization, and ultimately producing more accurate forecasts.

\vspace{-2mm}

\begin{table}[!ht]       
	\centering
	\scriptsize
     \renewcommand{\arraystretch}{0.8}
	\caption{The effectiveness of backcasting and forecasting for FRL-based forecasters, (F) means only forecasting, (BF) means both backcasting and forecasting.}
    \vspace{-1mm}
    \scalebox{0.8}{
 \begin{tabular}{c|c|cc|cc|cc|cc|cc|cc|cc|cc}
	\toprule
	\multicolumn{2}{c|}{Methods} & \multicolumn{2}{c}{WaveTS(F)} & \multicolumn{2}{c|}{WaveTS(BF)}  & \multicolumn{2}{c}{FITS(F)} & \multicolumn{2}{c|}{FITS(BF)}  & \multicolumn{2}{c}{DeRiTS(F)} & \multicolumn{2}{c|}{DeRiTS(BF)} & \multicolumn{2}{c}{MS(F)} & \multicolumn{2}{c}{MS(BF)} \\
	\multicolumn{2}{c|}{Metrics}  & \multicolumn{1}{c}{MSE} & \multicolumn{1}{c}{MAE} & \multicolumn{1}{c}{MSE} & MAE   & MSE   & \multicolumn{1}{c}{MAE} & \multicolumn{1}{c}{MSE} & \multicolumn{1}{c|}{MAE} & \multicolumn{1}{c}{MSE} & \multicolumn{1}{c}{MAE} & \multicolumn{1}{c}{MSE} & \multicolumn{1}{c|}{MAE} & \multicolumn{1}{c}{MSE} & \multicolumn{1}{c}{MAE} & \multicolumn{1}{c}{MSE} & \multicolumn{1}{c}{MAE}\\
	\midrule
    \multirow{4}[2]{*}{\rotatebox{90}{ETTh1}} & 96 & 0.372   & 0.394   & \textbf{0.370} & \textbf{0.393} & 0.706 & 0.568 & \textbf{0.368} & \textbf{0.392} &   0.625 & 0.531 & \textbf{ 0.377}  &  \textbf{0.399}  & 0.463  &  0.445  & \textbf{0.411} & \textbf{0.424} \\
        & 192 &  0.405  & 0.415 & \textbf{0.404} & \textbf{0.413 } & 0.713 & 0.575  & \textbf{0.405} & \textbf{0.412 } &  0.665 & 0.550 &  \textbf{0.407}  &  \textbf{0.417}   &0.504  &  0.494  & \textbf{0.467 } & \textbf{0.480 } \\
        & 336 &  0.431  & 0.429  & \textbf{0.429} & \textbf{0.428} & 0.705 & 0.579  & \textbf{0.425 } & \textbf{0.436 } & 0.710 & 0.574  & \textbf{0.435} & \textbf{0.435} &   0.563 & 0.517   & \textbf{0.480} & \textbf{0.470  } \\
        & 720 & 0.441  & 0.457 & \textbf{0.436} & \textbf{0.454} & 0.701 & 0.596 & \textbf{0.431} & \textbf{0.451}  & 0.730 & 0.608 & \textbf{0.467} &  \textbf{0.477} & 0.538 & 0.508 & \textbf{0.533} & \textbf{0.505} \\
    \midrule
    \multirow{4}[2]{*}{\rotatebox{90}{ETTh2}}    & 96  & 0.275 & 0.337  & \textbf{0.270} & \textbf{0.335} &  0.383  & 0.418 & \textbf{0.276} & \textbf{0.338 } & 0.380  & 0.400  & \textbf{0.284} & \textbf{0.346} & 0.366 & 0.393  & \textbf{0.330 } & \textbf{0.381 } \\
   & 192   & 0.334  & 0.376  & \textbf{0.331 } & \textbf{0.375 } & 0.397 & 0.428 & \textbf{0.335} & \textbf{0.378} & 0.442 & 0.435   & \textbf{0.344} & \textbf{0.383} &  0.446  & 0.438 & \textbf{0.435} & \textbf{0.430} \\
   & 336  & 0.357 & 0.399 & \textbf{0.354} & \textbf{0.397 } & 0.389 & 0.426 & \textbf{0.326} & \textbf{0.398} & 0.456 & 0.461 & \textbf{0.370} & \textbf{0.406} & 0.493 & 0.471 & \textbf{0.477 } & \textbf{0.448  } \\
   & 720   & 0.388  & 0.427 & \textbf{0.379 } & \textbf{0.425 } & 0.430 & 0.454 & \textbf{0.380} & \textbf{0.420} & 0.452  & 0.459  & \textbf{0.401} & \textbf{0.439} & \textbf{0.444} & \textbf{0.458} & 0.568 & 0.547 \\

    \midrule
     \multirow{4}[2]{*}{\rotatebox{90}{ETTm1}} & 96 & 0.308 & 0.347 & \textbf{0.300}  & \textbf{0.343}  & 0.679 & 0.544  & \textbf{0.305 } & \textbf{0.347 } & 0.691   & 0.541    & \textbf{0.311 } & \textbf{0.356} & 0.421 & 0.437  & \textbf{ 0.336} & \textbf{0.375} \\
   &  192   & 0.341 &  0.366  & \textbf{0.337 } & \textbf{0.365 } &  0.687 & 0.549  & \textbf{0.338 } & \textbf{0.366 } & 0.708   &  0.550 & \textbf{0.341} & \textbf{0.373} & 0.426  & 0.421 & \textbf{0.382} & \textbf{0.401} \\
   &  336   & 0.375 &  0.387  & \textbf{0.368} & \textbf{0.386} & 0.700 & 0.556   & \textbf{0.372 } & \textbf{0.387 } &  0.719   &  0.558  & \textbf{0.376} & \textbf{0.393} & 0.481 & 0.468 & \textbf{0.423} & \textbf{0.433} \\
   &  720   &  0.427  &  0.416  & \textbf{0.418} & \textbf{0.414 } & 0.718 & 0.569 & \textbf{0.427} & \textbf{0.416} &  0.742  &  0.572  & \textbf{0.426} & \textbf{0.420} &  0.540  & 0.495   & \textbf{  0.510 } & \textbf{ 0.476 } \\

    \midrule
    \multirow{4}[2]{*}{\rotatebox{90}{ETTm2}}    & 96 & 0.165 & 0.254 & \textbf{0.161} & \textbf{0.251} & 0.277 & 0.346 & \textbf{0.167 } & \textbf{0.256 } & 0.227   &  0.308  & \textbf{0.167} & \textbf{0.257} & 0.202  & 0.283 & \textbf{0.198 } & \textbf{0.280} \\
   & 192  & 0.219   &  0.292   & \textbf{0.215 } & \textbf{0.289} & 0.306 & 0.362 & \textbf{0.222} & \textbf{0.293 } &   0.284  & 0.338   & \textbf{0.231 } & \textbf{0.302 } &  \textbf{0.278}  &  \textbf{0.328}  &   0.328  &   0.381 \\
   & 336 &  0.272  & 0.327 & \textbf{0.267} & \textbf{0.325} &  0.341 & 0.380 & \textbf{0.277} & \textbf{0.329} &  0.339  &  0.370  & \textbf{0.289} & \textbf{0.338} &     0.336  &  0.364  & \textbf{0.330} & \textbf{0.354} \\
   & 720  & 0.366  & 0.382 & \textbf{0.349} & \textbf{0.379} & 0.422 & 0.424 & \textbf{0.366} & \textbf{0.382} &  0.434  &  0.419  & \textbf{0.374} & \textbf{0.392} & \textbf{0.441} & \textbf{0.423} & 0.522   & 0.513 \\
    \bottomrule
    \end{tabular}}
    \label{tab:appendix:backcast_forecast}
    \vspace{-2mm}
\end{table}%

\vspace{-2mm}

\subsection{Selection of the Branch Aggregation Strategy}
\vspace{-2mm}
\begin{wraptable}{r}{0.55\textwidth}
    \hspace{2mm}
    \vspace{-7mm}
    \caption{Comparison of different concatenation strategies. (C) denotes concatenation along the channel dimension, while (T) denotes concatenation along the temporal dimension.}
    \label{table:linearprojection}   
    \resizebox{\linewidth}{!}
    {
    \begin{tabular}{l c|c c|c c|c c|c c}
    
    \toprule
       \multicolumn{2}{c|}{Models}  &\multicolumn{2}{c|}{WaveTS(C)} &\multicolumn{2}{c|}{WaveTS(T)} & \multicolumn{2}{c|}{DeRiTS(C)} & \multicolumn{2}{c}{DeRiTS(T)} \\
        \multicolumn{2}{c|}{Metrics}& MSE & MAE & MSE & MAE & MSE & MAE & MSE & MAE \\
       \midrule
        \multirow{4}*{\rotatebox{90}{ETTh1}} & 96 & 0.409 & 0.423 & \textbf{0.370}  & \textbf{0.393}  & 0.625 & 0.531 & \textbf{0.447}  & \textbf{0.452} \\
         & 192 & 0.426 & 0.433 & \textbf{0.404}  & \textbf{0.413} &0.665& 0.550 & \textbf{0.471}  & \textbf{0.465} \\
         & 336 &  0.444 & 0.449 & \textbf{0.429}  & \textbf{0.428} &0.710& 0.574 & \textbf{0.475}  & \textbf{0.471}  \\
         & 720 & 0.456 & 0.468 &  \textbf{0.436}  & \textbf{0.454} & 0.730& 0.608 &  \textbf{0.469}  & \textbf{0.486}  \\
         \midrule
         \multirow{4}*{\rotatebox{90}{ETTh2}} &  96 & 0.345 &  0.393  & \textbf{0.270}  & \textbf{0.335}&  0.380 & 0.400 & \textbf{0.316}  & \textbf{0.382}\\
         & 192 & 0.361 &  0.401  &  \textbf{0.331}  & \textbf{0.375} & 0.442 &   0.435 &\textbf{0.355}  & \textbf{0.407}\\
         & 336 & 0.362 &  0.407 &  \textbf{0.354}  & \textbf{0.397} & 0.456 &   0.461 & \textbf{0.386}  & \textbf{0.429}\\
         & 720 & 0.425 & 0.454 & \textbf{0.379}  & \textbf{0.425} & 0.452 &   0.459 & \textbf{0.416}  & \textbf{0.454}\\
         \midrule
         \multirow{4}*{\rotatebox{90}{ETTm1}} & 96 & 0.329 & 0.364 & \textbf{0.300}  & \textbf{0.343}& 0.691 & 0.541 &  \textbf{0.612}  & \textbf{0.522}\\
         & 192 &  0.255 & 0.314 &  \textbf{0.337}  & \textbf{0.365}& 0.708 & 0.550 & \textbf{0.471}  & \textbf{0.465}\\
         & 336 & 0.383 &  0.392 &  \textbf{0.368}  & \textbf{0.386} & 0.719 & 0.558 & \textbf{0.640}  & \textbf{0.538}\\
         & 720 & 0.435 & 0.422 &  \textbf{0.418}  & \textbf{0.414} & 0.742 & 0.572 &  \textbf{0.667}  & \textbf{0.554} \\
         \midrule
         \multirow{4}*{\rotatebox{90}{ETTm2}} & 96 & 0.175 & 0.265 & \textbf{0.161}  & \textbf{0.251}& 0.257 & 0.338 & \textbf{0.248}  & \textbf{0.334}\\
         & 192 & 0.354 &  0.377  &  \textbf{0.215}  & \textbf{0.289}& 0.299 &   0.358 & \textbf{0.285}  & \textbf{0.355}\\
          &  336 & 0.290 & 0.338  & \textbf{0.267}  & \textbf{0.325} & 0.339 &  0.380 &  \textbf{0.326}  & \textbf{0.375} \\
         & 720 & 0.416 & 0.411 & \textbf{0.349}  & \textbf{0.379} & 0.439 & 0.429 & \textbf{0.415}  & \textbf{0.423}\\
         \midrule
         \multirow{4}*{\rotatebox{90}{Exchange}} &  96 & 0.176 & 0.310  &  \textbf{0.084}  & \textbf{0.203}& 0.268 & 0.375 &  \textbf{0.134}  & \textbf{0.277} \\
         & 192 & 0.305 &  0.414  &  \textbf{0.175}  & \textbf{0.296} & 0.359 &   0.447 & \textbf{0.238}  & \textbf{0.365}\\
          &  336 & 0.426 & 0.487 &  \textbf{0.339}  & \textbf{0.421}&   0.523 & 0.547 & \textbf{0.379}  & \textbf{0.463}\\
         & 720 & 1.155 & 0.816 & \textbf{0.864}  & \textbf{0.693} & 1.383 &   0.891 & \textbf{0.924}  & \textbf{0.736}\\
         \bottomrule
    \end{tabular}
        }
    \vspace{-3mm}
\end{wraptable}

As illustrated in Figure \ref{fig:framework}, our proposed WaveTS is constructed upon a multi-branch architecture, wherein each branch corresponds to a specific order of Wavelet derivation. Upon obtaining the learned representations from each branch, the fusion and utilization of these representations presents a nontrivial issue. Consequently, this paper introduces two branch aggregation strategies aimed at identifying the optimal solution based on numerical results. The results, summarized in Table \ref{table:linearprojection}, indicate that concatenation along the temporal dimension consistently outperforms concatenation along the channel dimension. These findings demonstrate that the multi-scale patterns excavated by each branch are time-aware, reflecting temporal variations while aligning with the channel independence of the WDT within WaveTS. A similar phenomenon is also observed in experiments on the similar multi-branch DeRiTS \citep{derits}.

\vspace{-2mm}

\subsection{Hyperparameter Sensitivity Analysis on Additional Datasets}
\label{sec:hyperparam_sensitivity_appendix}
\vspace{-2mm}
We evaluate the sensitivity of our model to two key hyperparameters, the order \(O\) and the scale \(S\), on additional datasets: ETTm1, Weather, and ECL. Table~\ref{tab:horizon_order_results} reports forecasting performance when varying \(O\) (with \(S=5\) fixed), and Table~\ref{tab:order2_scale_results} shows the results for different \(S\) (with \(O=2\) fixed). We also provide results on even larger \(O\) and \(S\) in Figure~\ref{fig:hyperparameter_appendix}. Despite altering these hyperparameters, the changes in forecasting accuracy are relatively minor, indicating that our model maintains robust performance under varying settings.

\begin{table}[!ht]
\centering
\setlength{\tabcolsep}{5pt}
\caption{Forecasting performance across horizons, scale is fixed to 5 and order ranges from 1 to 4.}
\scalebox{0.9}{
\begin{tabular}{c|cccc|cccc|cccc}
\toprule
\multirow{2}{*}{Horizon} 
    & \multicolumn{4}{c|}{\textbf{ETTm1}}  
    & \multicolumn{4}{c|}{\textbf{Weather}} 
    & \multicolumn{4}{c}{\textbf{ECL}} \\ \cmidrule(lr){2-5} \cmidrule(lr){6-9} \cmidrule(l){10-13}
    & $O{=}1$ & $O{=}2$ & $O{=}3$ & $O{=}4$
    & $O{=}1$ & $O{=}2$ & $O{=}3$ & $O{=}4$
    & $O{=}1$ & $O{=}2$ & $O{=}3$ & $O{=}4$ \\ \midrule
96  & 0.301 & 0.303 & 0.308 & 0.305 & 0.168 & 0.171 & 0.170 & 0.169 & 0.134 & 0.134 & 0.134 & 0.134 \\
192 & 0.341 & 0.338 & 0.340 & 0.340 & 0.210 & 0.210 & 0.213 & 0.212 & 0.149 & 0.149 & 0.149 & 0.148 \\
336 & 0.375 & 0.375 & 0.373 & 0.374 & 0.258 & 0.259 & 0.255 & 0.258 & 0.165 & 0.165 & 0.165 & 0.164 \\
720 & 0.428 & 0.431 & 0.430 & 0.430 & 0.321 & 0.321 & 0.322 & 0.322 & 0.204 & 0.204 & 0.205 & 0.206 \\ \bottomrule
\end{tabular}
}
\label{tab:horizon_order_results}
\end{table}

\begin{table}[!ht]
\centering
\setlength{\tabcolsep}{5pt}
\caption{Forecasting performance across horizons, order is fixed to 2 and scale ranges from 2 to 5.}
\scalebox{0.9}{
\begin{tabular}{c|cccc|cccc|cccc}
\toprule
\multirow{2}{*}{Horizon} 
    & \multicolumn{4}{c|}{\textbf{ETTm1}}  
    & \multicolumn{4}{c|}{\textbf{Weather}} 
    & \multicolumn{4}{c}{\textbf{ECL}} \\ \cmidrule(lr){2-5} \cmidrule(lr){6-9} \cmidrule(l){10-13}
    & $S{=}2$ & $S{=}3$ & $S{=}4$ & $S{=}5$
    & $S{=}2$ & $S{=}3$ & $S{=}4$ & $S{=}5$
    & $S{=}2$ & $S{=}3$ & $S{=}4$ & $S{=}5$ \\ \midrule
96  & 0.305 & 0.306 & 0.303 & 0.303 & 0.167 & 0.169 & 0.169 & 0.171 & 0.134 & 0.134 & 0.134 & 0.134 \\
192 & 0.342 & 0.344 & 0.338 & 0.338 & 0.212 & 0.211 & 0.214 & 0.210 & 0.149 & 0.149 & 0.149 & 0.149 \\
336 & 0.375 & 0.374 & 0.375 & 0.375 & 0.258 & 0.257 & 0.257 & 0.259 & 0.165 & 0.165 & 0.165 & 0.165 \\
720 & 0.430 & 0.432 & 0.429 & 0.431 & 0.322 & 0.320 & 0.322 & 0.321 & 0.204 & 0.204 & 0.204 & 0.204 \\ \bottomrule
\end{tabular}
}
\label{tab:order2_scale_results}
\end{table}

\vspace{-5mm}

\begin{figure}[!ht]
\centering
\includegraphics[width=\linewidth]{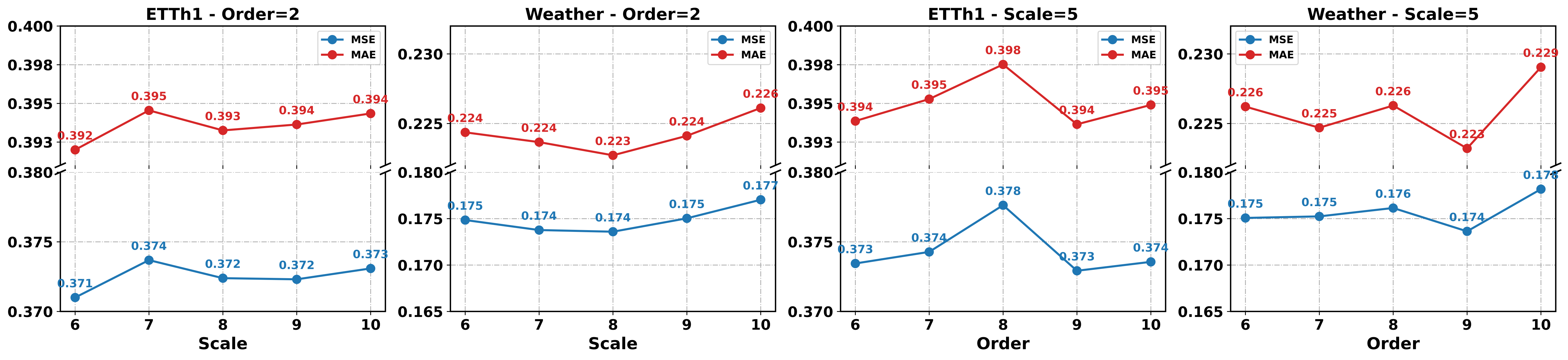}
\vspace{-7mm}
\caption{Model performance variations under different scales and orders in the WDT module. Results are collected from forecasting 96 experiments, each using its optimal input length.}
\label{fig:hyperparameter_appendix}
\vspace{-4mm}
\end{figure}

\subsection{Fair and Reproducible Hyperparameter Selection}

\label{appendix:hyperparameterselection}

\paragraph{Lookback Window Length \(L\).}
We treat \(L\) as a tunable hyperparameter because architectures require different effective receptive fields to perform optimally. Enforcing a single \(L\) (e.g., 96) conflates equality with fairness and disadvantages models (e.g., DLinear \citep{dlinear}) that benefit from longer histories, while many Transformer variants gain little from longer inputs. For WaveTS, a larger \(L\) improves time–frequency fidelity and remains affordable; thus we tune \(L\) on the validation set. For baselines, we strictly follow lookbacks from their papers/official code (defaults when unspecified). Evaluating each method under its own validated configuration is widely accepted and improves fairness and reproducibility \citep{dlinear,FBM,xu2023fits}.

\paragraph{WDT Order \(O\) and Scale \(S\).}
WaveTS uses the WDT derivative order \(O\) (number of branches \(N\) in Eq.~\ref{eq:wdt_branch}) and scale \(S\) (decomposition levels \(K\) in Eq.~\ref{eq:wdt_branch}). For each dataset and horizon, we pick \((O,S)\) via a tiny, validation-driven grid (optionally with Optuna), typically \(O,S\in\{1,\ldots,5\}\). There is no closed-form rule from data traits to optimal \((O,S)\), so the search is principled, lightweight, and reproducible. Sensitivity analyses in Figure \ref{fig:hyperparameter} and Figure \ref{fig:hyperparameter_appendix} show performance is flat around the optimum: periodic/stable datasets (e.g., Traffic, Electricity) often prefer smaller \(O,S\); more non-stationary, multi-scale datasets benefit from larger \(O,S\); longer horizons favor a larger \(S\).

\subsection{Additional analysis on the order and the scale}
\label{appendix:orderandscale}

\noindent\textbf{Contribution of each branch.}
As described in Section~\ref{sec:branch_aggregation}, after obtaining the multi-branch time-domain representations \(\{\mathbf{Z}_n\}_{n=1}^N\) in Eq.~\ref{iwdt}, we concatenate them along the time dimension and apply a linear projection to obtain the final reconstruction and prediction as in Eq.~\ref{eq:agg_linear}. To make the role of each derivative-order branch explicit, we provide a quantitative analysis of its contribution to the final forecast. Because the final aggregation is \textbf{strictly linear}, the projection weight in Eq.~\ref{eq:agg_linear} can be written as the block matrix \(\mathbf{W}=[\mathbf{W}^{(1)}\; \mathbf{W}^{(2)}\; \cdots\; \mathbf{W}^{(N)}]\), where each block \(\mathbf{W}^{(n)}\) multiplies the corresponding branch output \(\mathbf{Z}_n\). This yields the exact additive decomposition \(\mathbf{Y}_t=\sum_{n=1}^{N}\mathbf{W}^{(n)}\mathbf{Z}_n+\mathbf{b}\), and we define the per-branch contribution as \(\mathbf{Y}_t^{(n)}=\mathbf{W}^{(n)}\mathbf{Z}_n\), so that \(\mathbf{Y}_t=\sum_{n=1}^{N}\mathbf{Y}_t^{(n)}+\mathbf{b}\). Since no nonlinearity is applied after concatenation, there are no cross terms and the superposition principle holds. We quantify each branch’s impact using the \(\ell_2\) energy share \(\rho_n^{(2)}=\frac{\lVert \mathbf{Y}_t^{(n)}\rVert_2}{\sum_{m=1}^{N}\lVert \mathbf{Y}_t^{(m)}\rVert_2}\), computed on the test set. This incurs no retraining and only a single forward pass, together with a closed-form split of \(\mathbf{W}\) into its \(N\) column blocks \(\{\mathbf{W}^{(n)}\}\). Tables~\ref{tab:traffic_energy_share} and \ref{tab:ettm2_energy_share} report \(\rho_n^{(2)}\) for WaveTS with four branches \((O=4)\) on the Traffic and ETTm2 datasets across horizons \(\{96,192,336,720\}\). Lower-order branches dominate on strongly periodic data (Traffic), whereas higher-order branches contribute more on more nonstationary data (ETTm2), which is consistent with our hyperparameter selection rationale. Boldface marks the largest contribution.

\begin{table}[!ht]
\centering
\caption{\(\ell_2\) energy share on Traffic (larger indicates a greater contribution).}
\label{tab:traffic_energy_share}
\setlength{\tabcolsep}{6pt}
\begin{tabular}{@{}lcccc@{}}
\toprule
 & Traffic\_96 & Traffic\_192 & Traffic\_336 & Traffic\_720 \\
\midrule
Branch 1 \((O=1)\) & \textbf{0.301543} & \textbf{0.299975} & \textbf{0.316984} & 0.264803 \\
Branch 2 \((O=2)\) & 0.281398 & 0.250889 & 0.254451 & \textbf{0.276797} \\
Branch 3 \((O=3)\) & 0.217832 & 0.244350 & 0.242937 & 0.219048 \\
Branch 4 \((O=4)\) & 0.199227 & 0.204786 & 0.185628 & 0.249352 \\
\bottomrule
\end{tabular}
\end{table}

\begin{table}[!ht]
\centering
\caption{\(\ell_2\) energy share on ETTm2 (larger indicates a greater contribution).}
\label{tab:ettm2_energy_share}
\setlength{\tabcolsep}{6pt}
\begin{tabular}{@{}lcccc@{}}
\toprule
 & ETTm2\_96 & ETTm2\_192 & ETTm2\_336 & ETTm2 \_720 \\
\midrule
Branch 1 \((O=1)\) & 0.242765 & 0.244528 & 0.237057 & 0.250936 \\
Branch 2 \((O=2)\) & 0.242479 & 0.228037 & 0.254821 & 0.222338 \\
Branch 3 \((O=3)\) & 0.251197 & \textbf{0.264088} & 0.230385 & \textbf{0.264954} \\
Branch 4 \((O=4)\) & \textbf{0.263559} & 0.263347 & \textbf{0.277738} & 0.261772 \\
\bottomrule
\end{tabular}
\end{table}

\noindent\textbf{Which orders and scales suit which datasets?}
Complementing the contribution analysis above, we examine which derivative order \(O\) and wavelet scale \(S\) are suitable for different datasets and horizons. In our setting, \(O\) equals the WDT derivative order and \(S\) is the number of wavelet decomposition levels. For each dataset and each forecasting horizon, we select \((O,S)\) on the validation set via a small grid (optionally accelerated by Optuna), typically \(O,S\in\{1,\dots,5\}\). We adopt this protocol because there is no closed form mapping from dataset characteristics such as periodicity and nonstationarity, together with horizon length, to the optimal \((O,S)\). The search space is tiny, so the procedure is negligible in cost and reproducible; moreover, our sensitivity analyses show performance is relatively flat near the optimum, indicating the search chooses among near equivalent settings rather than overfitting.  Empirically, the validated choices in Tables~\ref{tab:os_traffic_elec} and \ref{tab:os_ett} follow a consistent pattern: strongly periodic datasets such as Traffic and Electricity tend to prefer smaller \(O\) and \(S\), while datasets with more pronounced nonstationarity or multi scale fluctuations such as ETTm1 and ETTm2 often benefit from larger \(O\) and \(S\). Longer horizons usually favor a slightly larger \(S\) to stabilize low frequency forecasting.

\begin{table*}[t]
\centering
\setlength{\tabcolsep}{6pt}

\begin{minipage}[t]{0.45\textwidth}
\centering
\captionof{table}{Traffic and Electricity.}
\label{tab:os_traffic_elec}
\vspace{-2mm}
\begin{tabular}{@{}cccccc@{}}
\toprule
\multicolumn{3}{c}{Traffic} & \multicolumn{3}{c}{Electricity} \\
\cmidrule(lr){1-3}\cmidrule(l){4-6}
Horizon & \(O\) & \(S\) & Horizon & \(O\) & \(S\) \\
\midrule
96  & 2 & 3 & 96  & 2 & 2 \\
192 & 3 & 2 & 192 & 3 & 2 \\
336 & 3 & 4 & 336 & 4 & 3 \\
720 & 2 & 4 & 720 & 3 & 5 \\
\bottomrule
\end{tabular}
\end{minipage}\hspace{0.02\textwidth}%
\begin{minipage}[t]{0.45\textwidth}
\centering
\captionof{table}{ETTm1 and ETTm2.}
\label{tab:os_ett}
\vspace{-2mm}
\begin{tabular}{@{}cccccc@{}}
\toprule
\multicolumn{3}{c}{ETTm1} & \multicolumn{3}{c}{ETTm2} \\
\cmidrule(lr){1-3}\cmidrule(l){4-6}
Horizon & \(O\) & \(S\) & Horizon & \(O\) & \(S\) \\
\midrule
96  & 3 & 2 & 96  & 2 & 2 \\
192 & 3 & 3 & 192 & 4 & 3 \\
336 & 4 & 5 & 336 & 4 & 4 \\
720 & 5 & 5 & 720 & 4 & 5 \\
\bottomrule
\end{tabular}
\end{minipage}

\end{table*}

\subsection{Selection of the Wavelet Basis Function}
\label{appendix:waveletbaissfunction}

Selecting an appropriate Wavelet basis can be non-trivial, as each candidate has unique filter lengths and boundary effects. In our experiments (input = 360, horizon = 96), results in Table~\ref{tab:wavelet_basis_results} show minimal performance differences among three short-length Wavelet bases: \texttt{bior1.1}, \texttt{rbio1.1}, and \texttt{db1} (Haar). We focus on short filters because they provide consistent dyadic decompositions, ensuring each level halves the temporal dimension without introducing significant boundary overlap. Longer Wavelet filters (e.g., \texttt{db2}, \texttt{coif5}) can lead to overlapping boundaries, mismatch in LL/LH decomposition dimensions, and increased model errors. Moreover, \texttt{db1} is a real-valued Wavelet basis (often referred to as the Haar Wavelet), which avoids additional complexities arising from complex-valued Wavelets. Hence, we select \texttt{db1} (Haar) in practice, balancing simplicity, fast computation, and reliable decomposition properties.

\begin{table}[!ht]
\centering
\caption{Forecasting performance of different Wavelet bases on eight datasets .}
\vspace{-2mm}
\setlength{\tabcolsep}{5pt}
\scalebox{0.9}{
\begin{tabular}{c|cc|cc|cc|cc}
\toprule
\multirow{2}{*}{Wavelet} 
    & \multicolumn{2}{c|}{\textbf{ETTh1}} 
    & \multicolumn{2}{c|}{\textbf{ETTh2}}
    & \multicolumn{2}{c|}{\textbf{ETTm1}} 
    & \multicolumn{2}{c}{\textbf{ETTm2}}
    \\ \cmidrule(lr){2-3}\cmidrule(lr){4-5}\cmidrule(lr){6-7}\cmidrule(l){8-9}
    & MSE & MAE & MSE & MAE & MSE & MAE & MSE & MAE \\ \midrule
bior1.1 & 0.371 & 0.393 & 0.275 & 0.337 & 0.305 & 0.348 & 0.166 & 0.256   \\
db1     & 0.373 & 0.394 & 0.273 & 0.336 & 0.304 & 0.347 & 0.166 & 0.256   \\
rbio1.1 & 0.372 & 0.393 & 0.276 & 0.338 & 0.303 & 0.346 & 0.165 & 0.255  \\ 
\end{tabular}
}
\scalebox{0.9}{
\begin{tabular}{c|cc|cc|cc|cc}
\toprule
\multirow{2}{*}{Wavelet} 
    & \multicolumn{2}{c|}{\textbf{Exchange}} 
    & \multicolumn{2}{c|}{\textbf{ECL}}
    & \multicolumn{2}{c|}{\textbf{Traffic}} 
    & \multicolumn{2}{c}{\textbf{Weather}}
    \\ \cmidrule(lr){2-3}\cmidrule(lr){4-5}\cmidrule(lr){6-7}\cmidrule(l){8-9}
    & MSE & MAE & MSE & MAE & MSE & MAE & MSE & MAE \\ \midrule
bior1.1 & 0.087 & 0.206 & 0.133 & 0.229 & 0.383 & 0.267 & 0.169 & 0.224   \\
db1     & 0.089 & 0.208 & 0.134 & 0.231 & 0.384 & 0.269 & 0.171 & 0.226   \\
rbio1.1 & 0.089 & 0.208 & 0.134 & 0.230 & 0.384 & 0.269 & 0.171 & 0.226  \\ \bottomrule
\end{tabular}
}
\label{tab:wavelet_basis_results}
\end{table}

\vspace{-2mm}

\subsection{Ablation study}
\label{appendix:ablation}

The WaveTS framework is centered on the Wavelet Derivative Transform (WDT), which captures multi scale and time aware structure by separating low frequency trends and high frequency fluctuations while preserving exact invertibility. To isolate the effect of the front end transform, we design two ablated models, FourierTS and WaveletTS, by substituting the WDT in WaveTS with the Discrete Fourier Transform (DFT) and the Discrete Wavelet Transform (DWT), respectively, while keeping all other components and training settings identical (normalization, Frequency Refinement Units, linear projection, and losses). The ablation architectures are illustrated in Figure~\ref{fig:ablation_model}. Results on four ETT datasets and four PEMS datasets are summarized in Table~\ref{tab:appendix_ablation}, showing that WDT consistently improves long horizon forecasting across metrics and horizons.

\medskip
\noindent\textbf{Ablation design and pipelines.}
Let \(\mathbf{X}\in\mathbb{R}^{L\times C}\) be the normalized input and \(\hat{\mathbf{Y}}\in\mathbb{R}^{(L+F)\times C}\) the de normalized output; let \(\Pi\) denote the final linear projection along the time dimension.
\textbf{FourierTS}: apply a DFT to obtain the complex spectrum \(\mathbf{H}=\mathcal{F}(X)\) (implemented as real and imaginary channels), refine coefficients \(\widehat{\mathbf{H}}=\Phi(\mathbf{H})\) with the same refinement units as in WaveTS, invert \(\tilde{z}=\mathcal{F}^{-1}(\widehat{\mathbf{H}})\), and predict \(\hat{\mathbf{Y}}=\Pi(\tilde{z})\).
\textbf{WaveletTS}: apply a \(K\) level DWT \((\mathbf{LH}_K,LL_K)=\mathcal{W}_K(\mathbf{X})\) with \(\mathbf{LH}_K=\{LH_1,\ldots,LH_K\}\); refine per level \(\widehat{LL}_K=\Phi_{K}(LL_K)\) and \(\widehat{\textbf{LH}}_K=\Phi_K(\mathbf{LH}_K)\); invert \(\tilde{z}=\mathcal{W}_K^{-1}(\widehat{LL}_K,\widehat{\mathbf{LH}}_K)\); and predict \(\hat{\mathbf{Y}}=\Pi(\tilde{z})\).
In contrast, WaveTS uses multi order branching with derivative aware WDT per branch before reconstruction and aggregation. Thus, the ablations differ from WaveTS in two principled aspects: (i) absence of derivative amplification and multi order branching, and (ii) spectral structure, where FourierTS models a single global spectrum while WaveletTS and WaveTS operate on band localized coefficients, with only WaveTS coupling band localization and order wise derivative emphasis.

\vspace{-3mm}
\begin{table}[!ht]
    \centering
    \caption{Ablation of the transform front end on four ETT datasets in long horizon forecasting. ``I/O'' indicates lookback window size and forecasting length. FourierTS and WaveletTS replace WDT with DFT and DWT in WaveTS, respectively.}
    \vspace{1mm}
    \scalebox{0.65}{
    \begin{tabular}{ c | c c c c  c c c c | c c c c c c c c }
    \toprule
    Dataset & \multicolumn{8}{c|}{ETTh1} & \multicolumn{8}{c}{ETTh2} \\
    \cmidrule(r){1-1} \cmidrule(r){2-9}  \cmidrule(r){10-17} 
    I/O &  \multicolumn{2}{c}{336/96} &  \multicolumn{2}{c|}{336/192} &  \multicolumn{2}{c}{336/336} &  \multicolumn{2}{c|}{336/720} &  \multicolumn{2}{c}{336/96} &  \multicolumn{2}{c|}{336/192} &  \multicolumn{2}{c}{336/336} &  \multicolumn{2}{c}{336/720}\\
    \cmidrule(r){1-1} \cmidrule(r){2-5}  \cmidrule(r){6-9} \cmidrule(r){10-13} \cmidrule(r){14-17}
      Metrics & MSE & MAE & MSE & MAE & MSE & MAE & MSE & MAE & MSE & MAE & MSE & MAE & MSE & MAE & MSE & MAE\\
        \midrule
        FourierTS  & 0.374 & 0.396 & 0.407 & 0.416 & 0.433 & 0.432 & 0.447 & 0.461 & 0.273 & 0.339 & 0.341 & 0.384 & 0.368 & 0.411 & 0.396 & 0.434 \\
        WaveletTS  & 0.374 & 0.395 & 0.408 & 0.416 & 0.432 & 0.430 & 0.447 & 0.459 & 0.274 & 0.339 & 0.339 & 0.378 & 0.362 & 0.402 & 0.393 & 0.431 \\
        \midrule
        \textbf{WaveTS} & \textbf{0.370} & \textbf{0.393} & \textbf{0.404} & \textbf{0.413} & \textbf{0.429} & \textbf{0.428} & \textbf{0.436} & \textbf{0.454} & \textbf{0.270} & \textbf{0.335} & \textbf{0.331} & \textbf{0.375} & \textbf{0.354} & \textbf{0.397} & \textbf{0.379} & \textbf{0.425} \\
    \midrule
    Dataset & \multicolumn{8}{c|}{ETTm1} & \multicolumn{8}{c}{ETTm2} \\
    \cmidrule(r){1-1} \cmidrule(r){2-9}  \cmidrule(r){10-17} 
    I/O &  \multicolumn{2}{c}{336/96} &  \multicolumn{2}{c|}{336/192} &  \multicolumn{2}{c}{336/336} &  \multicolumn{2}{c|}{336/720} &  \multicolumn{2}{c}{336/96} &  \multicolumn{2}{c|}{336/192} &  \multicolumn{2}{c}{336/336} &  \multicolumn{2}{c}{336/720}\\
    \cmidrule(r){1-1} \cmidrule(r){2-5}  \cmidrule(r){6-9} \cmidrule(r){10-13} \cmidrule(r){14-17}
     Metrics & MSE & MAE & MSE & MAE & MSE & MAE & MSE & MAE & MSE & MAE & MSE & MAE & MSE & MAE & MSE & MAE\\
        \midrule
        FourierTS  & 0.305 & 0.347 & 0.340 & 0.369 & 0.371 & 0.389 & 0.428 & 0.417 & 0.166 & 0.256 & 0.221 & 0.294 & 0.274 & 0.329 & 0.376 & 0.391 \\
        WaveletTS  & 0.305 & 0.347 & 0.341 & 0.369 & 0.372 & 0.389 & 0.430 & 0.418 & 0.165 & 0.255 & 0.219 & 0.292 & 0.274 & 0.329 & 0.369 & 0.383 \\
        \midrule
        \textbf{WaveTS} & \textbf{0.300} & \textbf{0.343} & \textbf{0.337} & \textbf{0.365} & \textbf{0.368} & \textbf{0.386} & \textbf{0.418} & \textbf{0.414} & \textbf{0.161} & \textbf{0.251} & \textbf{0.215} & \textbf{0.289} & \textbf{0.267} & \textbf{0.325} & \textbf{0.349} & \textbf{0.379} \\
       \end{tabular}}
       
       \scalebox{0.65}{
       \hspace{-1mm}
       \begin{tabular}{ c | c c c c | c c c c | c c c c | c c c c }
       \midrule
    Dataset & \multicolumn{4}{c|}{PEMS03} & \multicolumn{4}{c|}{PEMS04} & \multicolumn{4}{c|}{PEMS07} & \multicolumn{4}{c}{PEMS08} \\
    \cmidrule(r){1-1} \cmidrule(r){2-5}  \cmidrule(r){6-9} \cmidrule(r){10-13} \cmidrule(r){14-17} 
    I/O &  \multicolumn{2}{c}{96/12} &  \multicolumn{2}{c|}{96/24} &  \multicolumn{2}{c}{96/12} &  \multicolumn{2}{c|}{96/24} &  \multicolumn{2}{c}{96/12} &  \multicolumn{2}{c|}{96/24} &  \multicolumn{2}{c}{96/12} &  \multicolumn{2}{c}{96/24}\\
    \cmidrule(r){1-1} \cmidrule(r){2-5}  \cmidrule(r){6-9} \cmidrule(r){10-13} \cmidrule(r){14-17}
     Metrics & MSE & MAE & MSE & MAE & MSE & MAE & MSE & MAE & MSE & MAE & MSE & MAE & MSE & MAE & MSE & MAE\\
        \midrule
        FourierTS  & 0.126 & 0.247 & 0.241 & 0.342 & 0.137 & 0.260 & 0.253 & 0.358 & 0.116 & 0.242 & 0.239 & 0.348 & 0.132 & 0.258 & 0.242 & 0.349 \\
        WaveletTS  & 0.080 & 0.181 & 0.113 & 0.212 & 0.096 & 0.199 & 0.127 & 0.232 & 0.075 & 0.179 & 0.109 & 0.214 & 0.096 & 0.196 & 0.143 & 0.233 \\
        \midrule
        \textbf{WaveTS} & \textbf{0.075} & \textbf{0.178} & \textbf{0.107} & \textbf{0.208} & \textbf{0.091} & \textbf{0.196} & \textbf{0.122} & \textbf{0.228} & \textbf{0.071} & \textbf{0.174} & \textbf{0.103} & \textbf{0.209} & \textbf{0.091} & \textbf{0.193} & \textbf{0.138} & \textbf{0.231} \\   
       \bottomrule
    \end{tabular}
    }
    \label{tab:appendix_ablation}
\end{table}

\vspace{2mm}

\begin{figure}[!ht]
\centering
\includegraphics[width=1.05\linewidth]{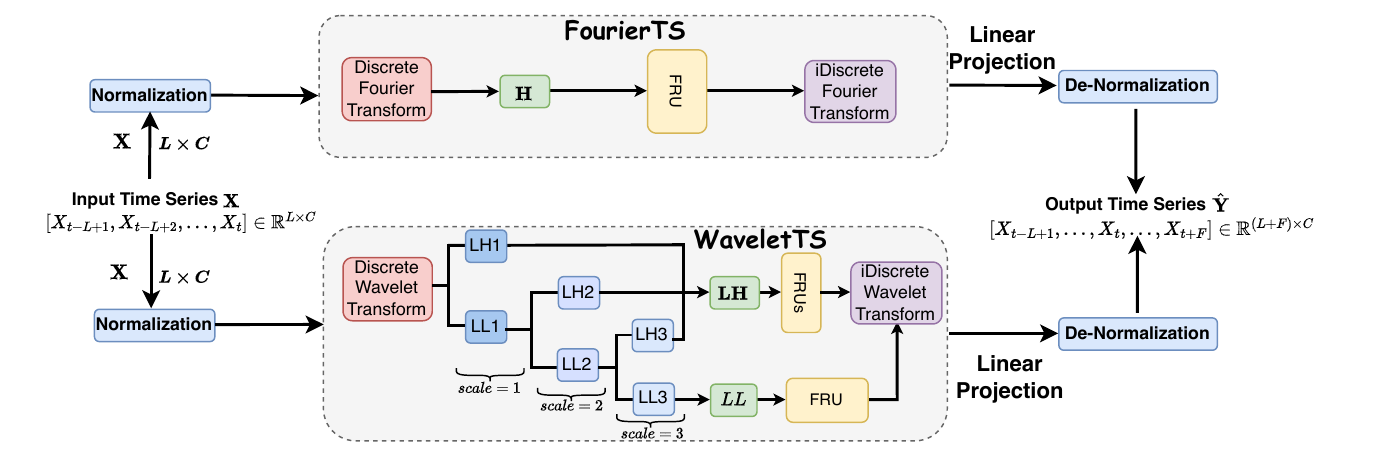}
\vspace{-4mm}
\caption{FourierTS and WaveletTS for the ablation, where DFT and DWT replace WDT in WaveTS.}
\label{fig:ablation_model}
\end{figure}

Across all datasets and horizons in Table~\ref{tab:appendix_ablation}, WaveTS surpasses both FourierTS and WaveletTS. The gains are most pronounced at longer horizons, where derivative aware multi order branching provides richer low frequency stabilization and high frequency discrimination than a global Fourier spectrum or a derivative free wavelet stack. These findings support the design choice of coupling band localization with derivative emphasis in WaveTS.

\clearpage

\section{Visualizations}
\subsection{Visualizations of the Forecasting Performance}
\label{appendix_visual:forecasting}

\vspace{-3mm}

\begin{figure}[!ht]
    \centering
    \subfigure[WaveTS]{
    \includegraphics[width=0.315\linewidth]{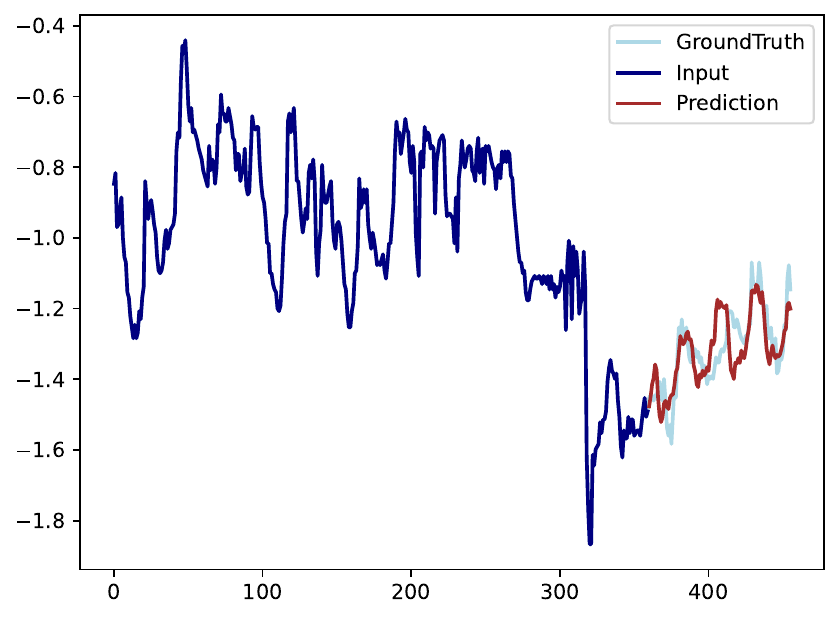}
    }
    \subfigure[FITS]{
    \includegraphics[width=0.315\linewidth]{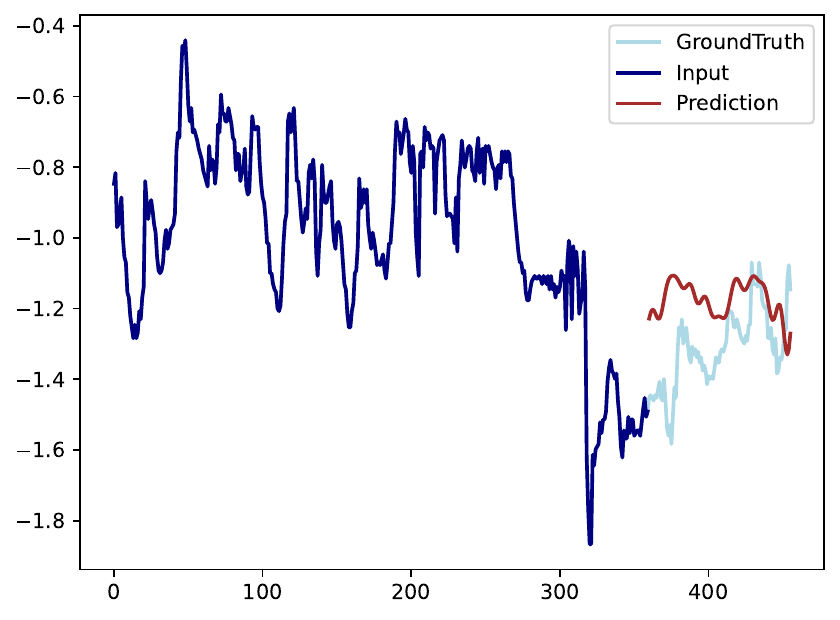}
    }
    \subfigure[DeRiTS]{
    \includegraphics[width=0.315\linewidth]{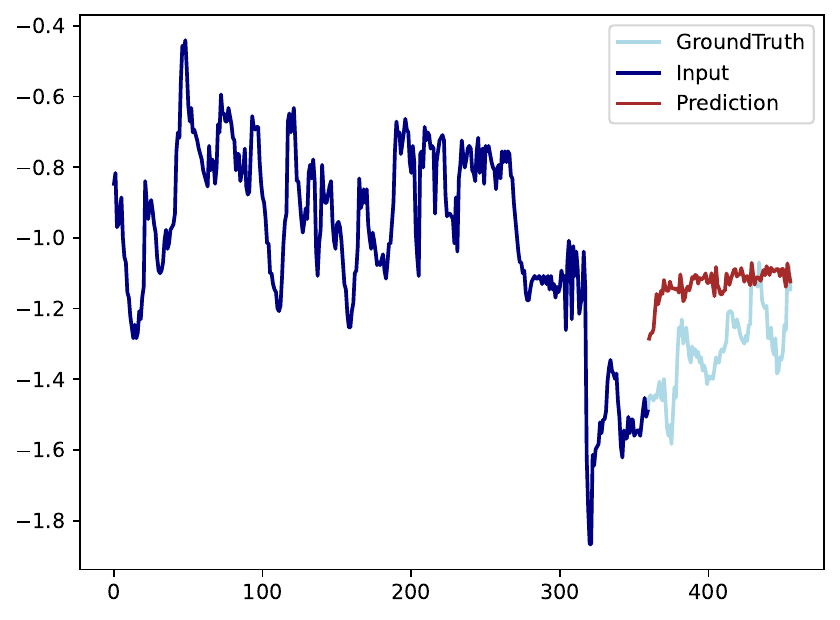}
    }
    \centering
    \subfigure[TexFilter]{
    \includegraphics[width=0.315\linewidth]{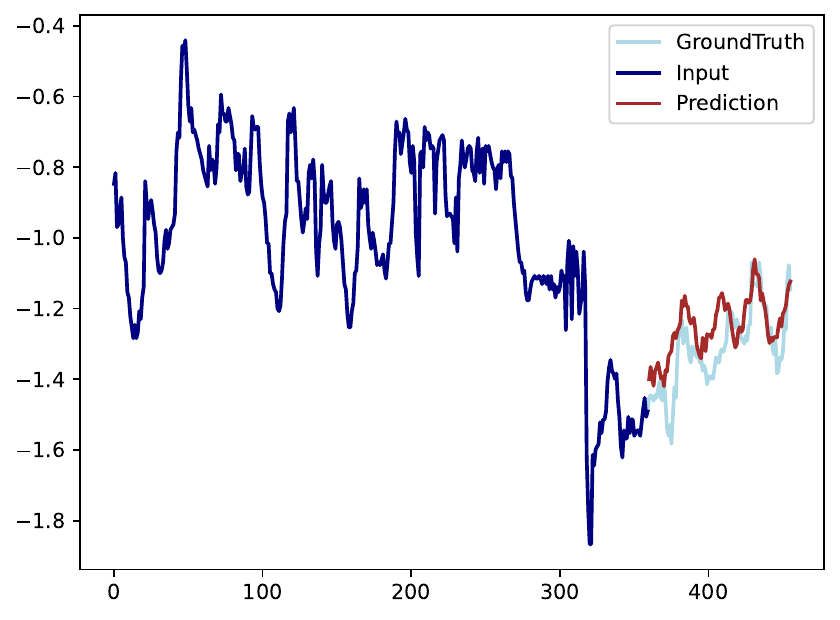}
    }
    \subfigure[iTransformer]{
    \includegraphics[width=0.315\linewidth]{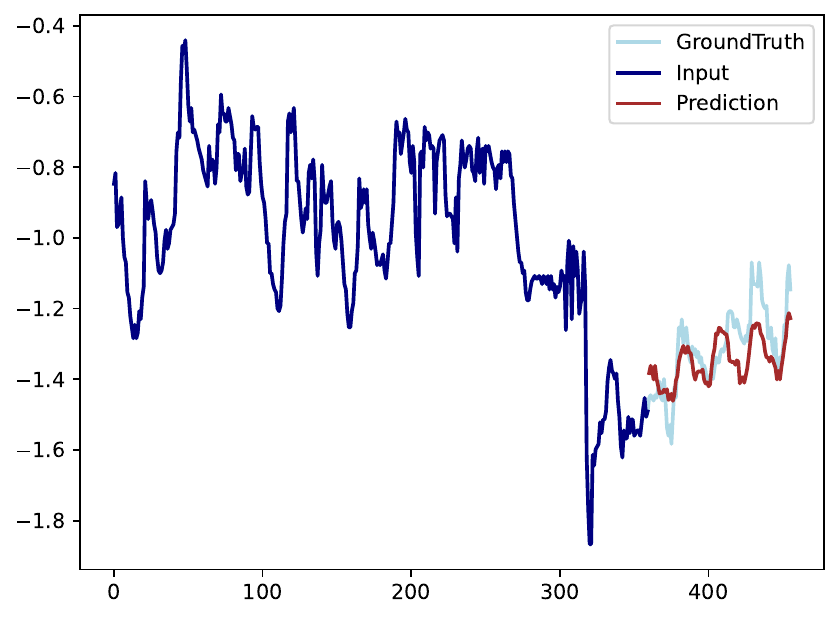}
    }
    \subfigure[DLinear]{
    \includegraphics[width=0.315\linewidth]{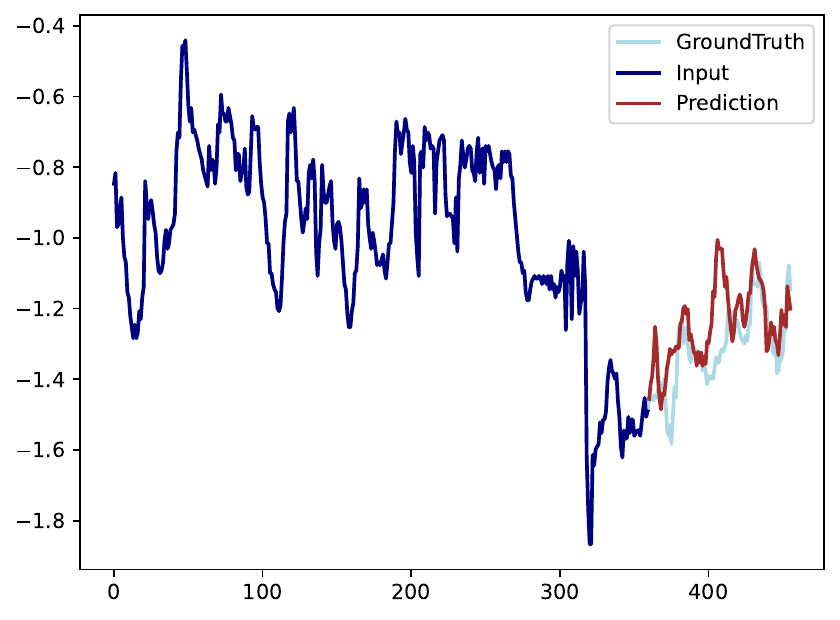}
    }
 
    \vspace{-3mm}
    \caption{Visualizations of the results on the ETTh1 dataset with lookback window size of 360 and forecasting length of 96.}
    \label{fig:visualizationetth1}
\end{figure}

\vspace{-2mm}

\begin{figure}[!ht]
    \centering
    \subfigure[WaveTS]{
    \includegraphics[width=0.315\linewidth]{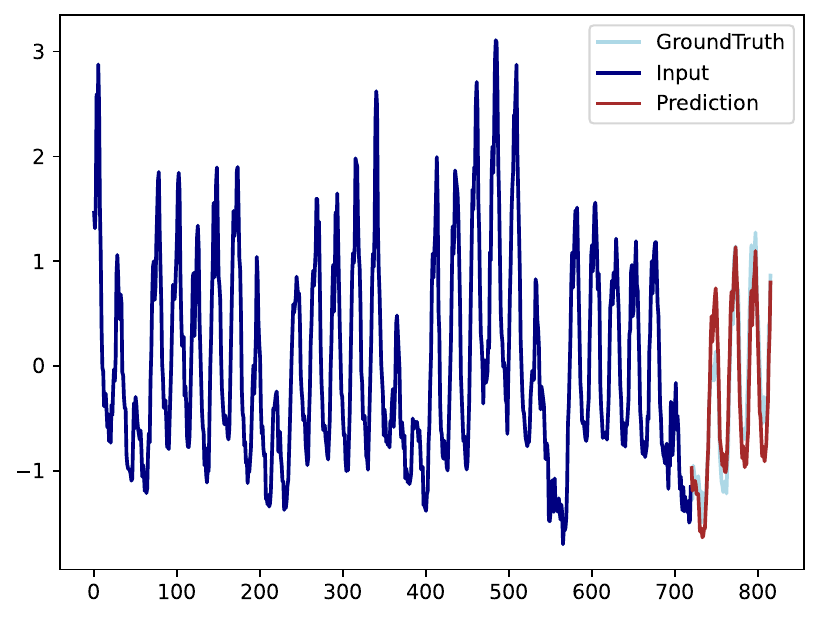}
    }
    \subfigure[FITS]{
    \includegraphics[width=0.315\linewidth]{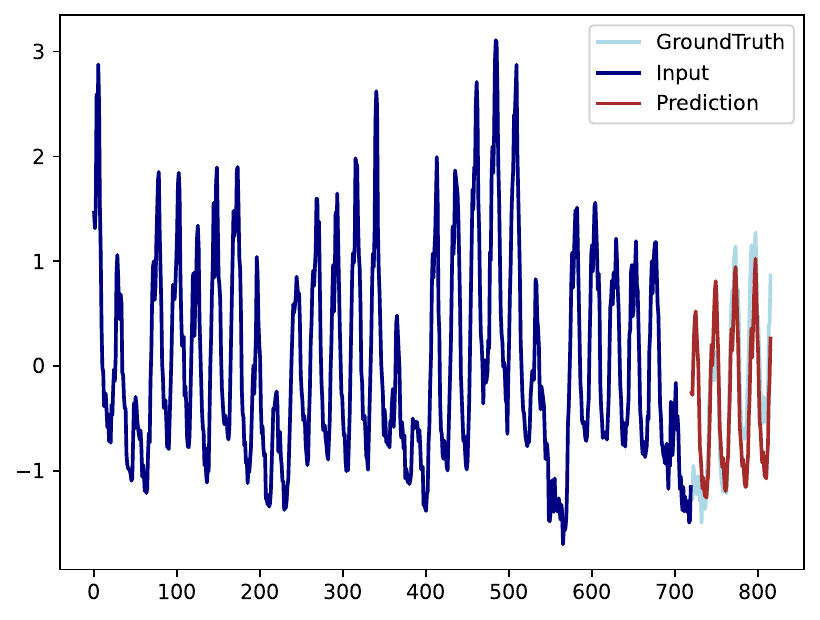}
    }
    \subfigure[DeRiTS]{
    \includegraphics[width=0.315\linewidth]{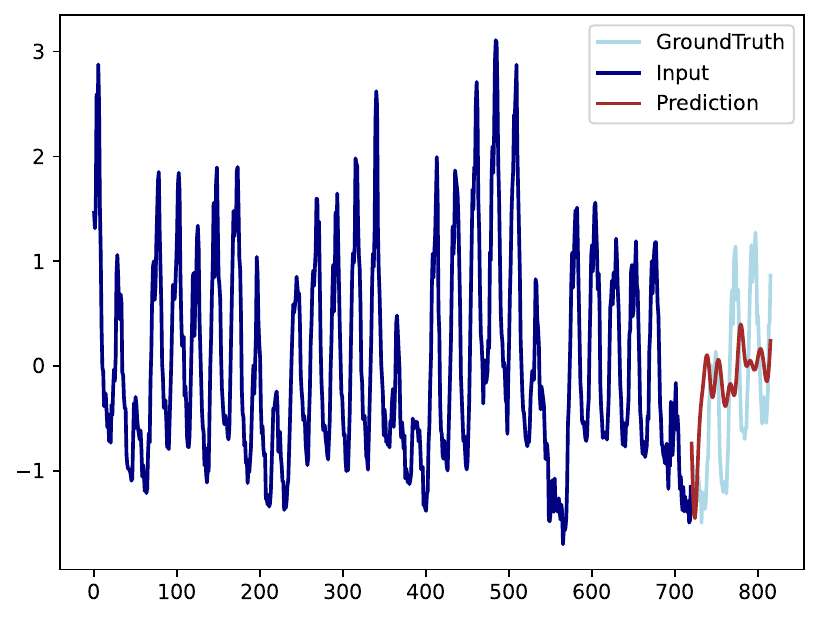}
    }
    \centering
    \subfigure[TexFilter]{
    \includegraphics[width=0.315\linewidth]{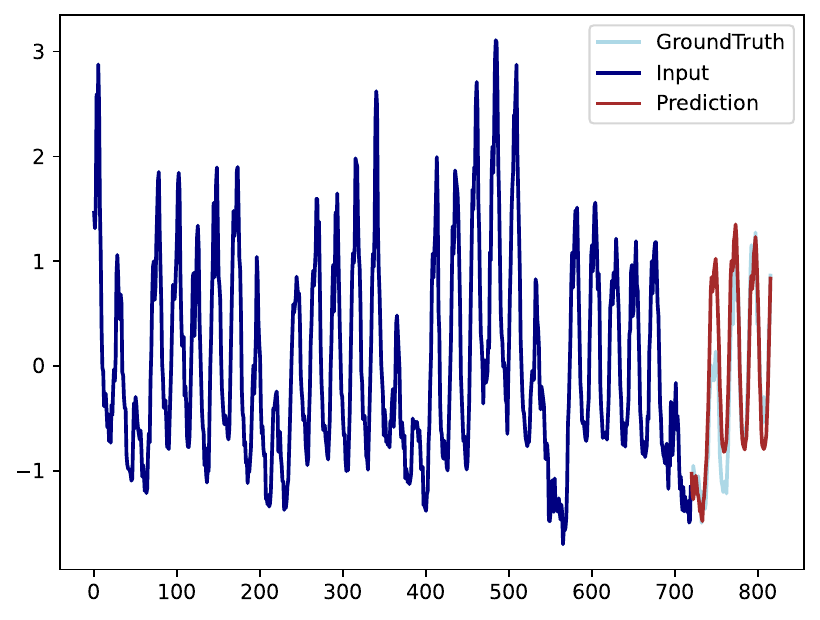}
    }
    \subfigure[iTransformer]{
    \includegraphics[width=0.315\linewidth]{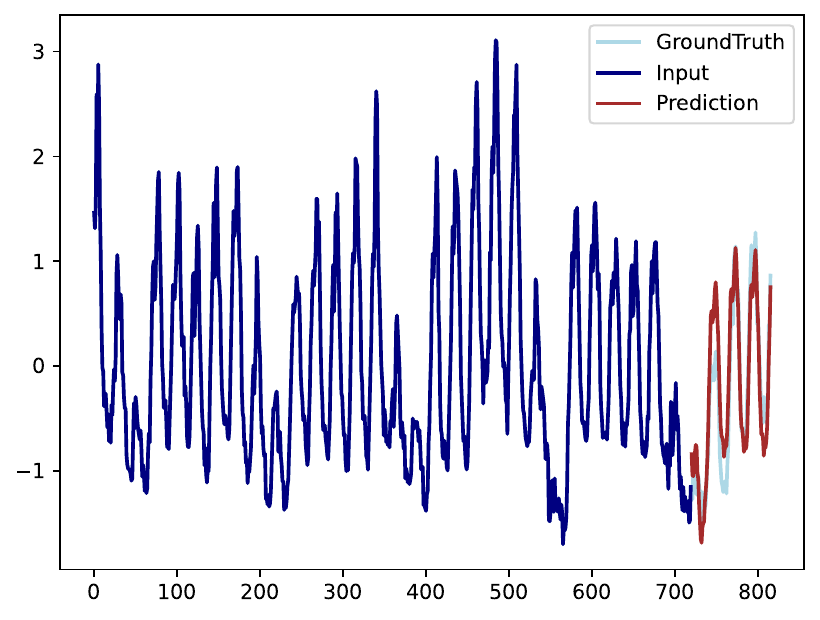}
    }
    \subfigure[DLinear]{
    \includegraphics[width=0.315\linewidth]{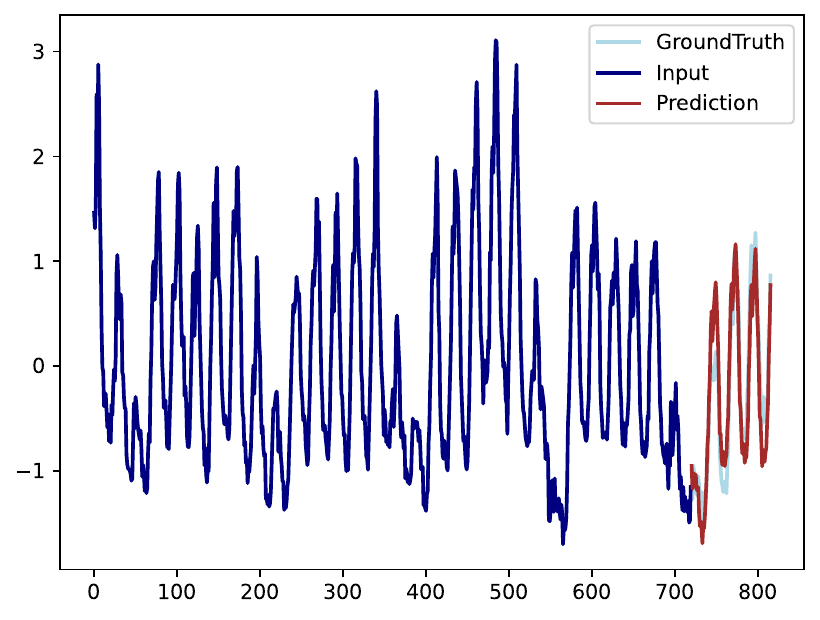}
    }
 
    \vspace{-3mm}
    \caption{Visualizations of the results on the Electricity dataset with lookback window size of 720 and forecasting length of 96.}
    \label{fig:visualizationecl}
\end{figure}

\clearpage
\subsection{Additional Visualizations of the Coefficients in the WDT}
\label{appendix_visual:coefficients}
\vspace{-4mm}

\begin{figure}[!ht]
\hspace{-6mm}
\includegraphics[width=1.05\linewidth]{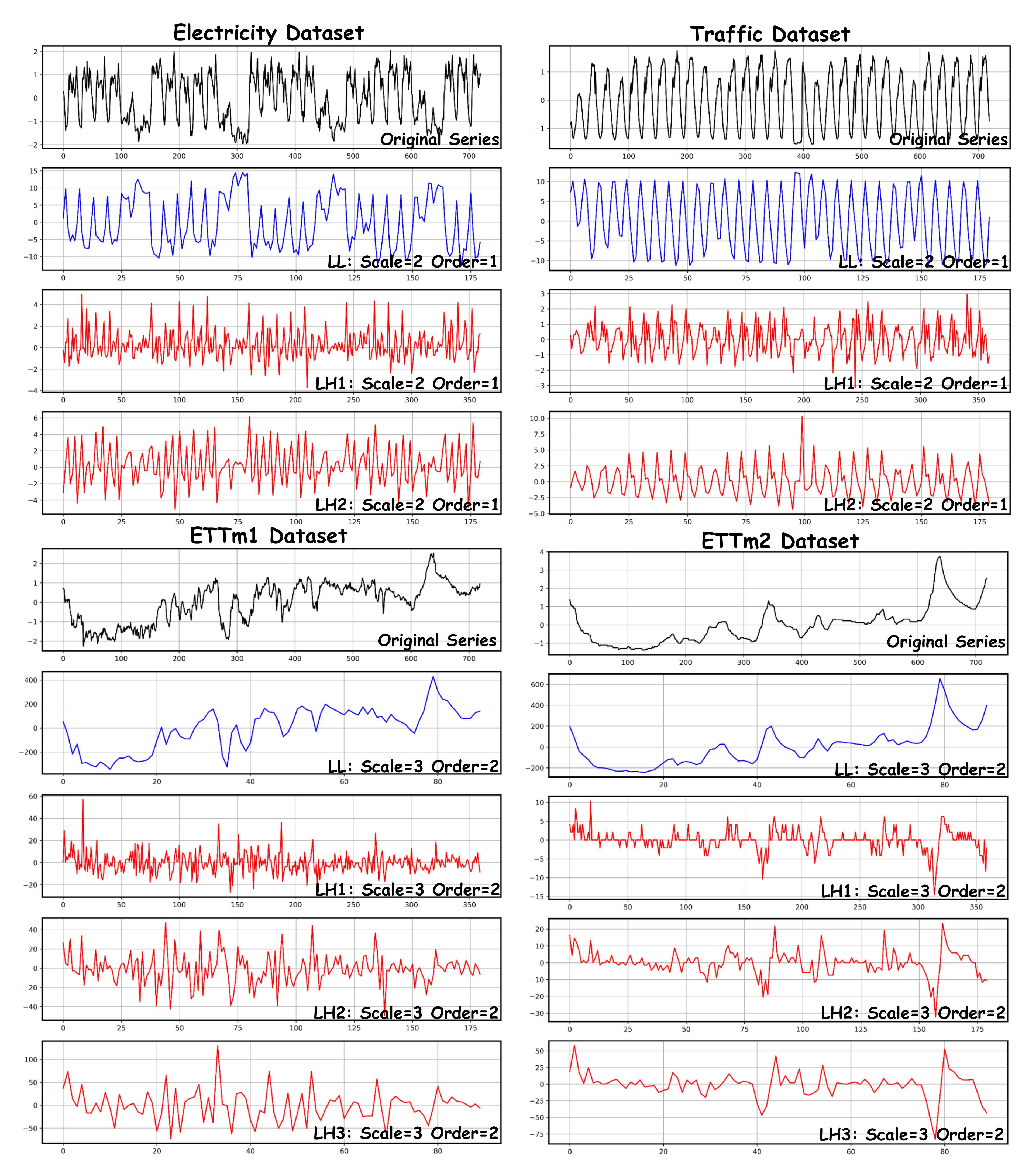}
\vspace{-9mm}
\caption{Additional visualizations of the Approximation Coefficients (LL) and Detail Coefficients (LH) in the WDT on four different datasets. We select the ``OT'' variate with the input length is 720 for each dataset. }
\label{fig:coefficients_appendix}
\end{figure}
\vspace{-3mm}
Derivative of the sequence is essential for identifying more semantically rich and robust representations. As illustrated in Figure \ref{fig:coefficients_appendix}, LL reveals macro temporal patterns, providing insight into the overall trend, while LH captures detailed fluctuations and complex irregular micro patterns at various scales. Prior research on frequency derivative learning indicates that derivation yields more stationary frequency patterns, as depicted in Figure 5 in \citep{derits}. However, the conversion of the original sequence into stationary representations may neglect to capture the overall trend, the most fundamental and informative temporal pattern indicating nonstationarity, thereby resulting in ineffective modeling. As demonstrated in Figure \ref{fig:coefficients_appendix}, WaveTS effectively addresses this problem by decomposing the original signal into LL and LH coefficients, wherein the former encapsulates trend and periodicity, while the latter captures more stationary characteristics. Consequently, WaveTS is capable of modeling hierarchical and robust frequency features simultaneously, which significantly enhances forecast performance.

% \clearpage
% \subsection{Additional Visualizations of the Abrupt Changes Amplification}
% \label{appendix_visual:abruptchanges}

% \begin{figure}[!ht]
% \centering
% \includegraphics[width=0.9\linewidth]{}
% \vspace{-3mm}
% \caption{Additional visual evidence that the Wavelet Derived Coefficients (red) produced by the proposed WDT amplify abrupt regime shifts more clearly than the standard Wavelet Coefficients (blue). For four representative datasets, the black dashed boxes mark segments where the original series (black) undergoes sudden trend reversals or variance bursts. In every case, sharp spikes or sign reversals appear in the WDT coefficients, whereas the plain WT responds more mildly, confirming that WDT magnifies rate-of-change cues.}
% \label{fig:abruptchanges}
% \end{figure}

% Across all four datasets, the highlighted windows show that the magnitude of the wavelet derived coefficients surges precisely when the original series experiences a swift trend break and volatility jump. This behavior is consistent with the analytical role of the derivative operation as a local high-pass filter: it suppresses slow-moving components while accentuating rapid derivatives. By contrast, the standard WT coefficients remain comparatively smooth, indicating limited sensitivity to such localized shocks. The repetition of this pattern on time series collected from various domain demonstrates that the WDT generalizes beyond a single domain, providing a robust mechanism for exposing change points that would otherwise be under-represented in frequency representations.

\clearpage

\subsection{Additional Visualizations of the Time-Frequency Scalogram}
\label{appendix_visual:scalogram}

\vspace{-3mm}
\begin{figure}[!ht]

\centering
\includegraphics[width=0.9\linewidth]{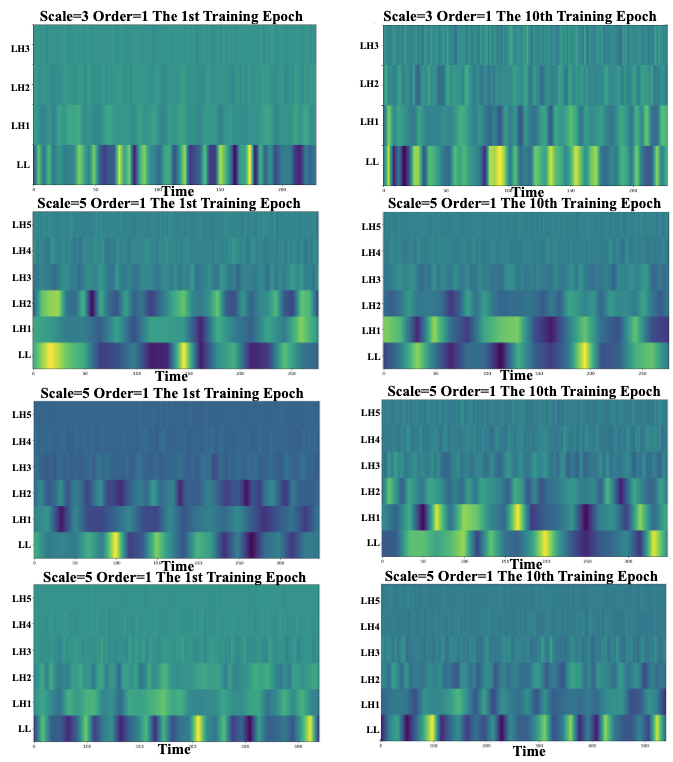}
\vspace{-4mm}
\caption{Scalograms are obtained from the training process on the ETTm1 dataset. The input length is 360, with the output length ranging from 96 to 720. We set scale=3 for forecasting a horizon of 96 and adjust the scale for other forecasting horizons based on their best performance. For each forecasting horizon, we visualize the scalogram from the first and the last training epochs. The color bar here is identical to that in Figure \ref{fig:scalogram_main}.}
\label{fig:appendix_scalograms}
\end{figure}

\vspace{-4mm}
\section{Limitations}
\label{appendix:limitations}
 While WaveTS achieves state-of-the-art accuracy and efficiency on a diverse collection of regularly-sampled benchmarks, two avenues for improvement remain. First, our empirical study is confined to fixed-interval datasets; extending the model to irregularly-sampled or event-driven series would clarify its robustness in real-world monitoring scenarios. 

 % Second, the current implementation selects the derivative order and wavelet scale via grid search; developing an adaptive mechanism, e.g., a learnable branch-selection policy, could further reduce tuning overhead and improve scalability to longer horizons without enlarging the architectural footprint.

\section{The Use of Large Language Models (LLMs)}

We employ LLMs solely to correct grammatical errors and to ensure the writing is fluent, accurate, and coherent. The authors take full responsibility for all content generated with LLM assistance.

% \newpage

\end{document}